\DeclareMathOperator*{\argmax}{arg\,max}
\newtheorem{theorem}{Theorem}
\newtheorem{lemma}{Lemma}
\newtheorem{claim}[lemma]{Claim}
\newtheorem{corollary}[lemma]{Corollary}
\newcommand{\arm}{\mathtt{arm}}
\newcommand{\E}{\mathbb{E}}
\newcommand{\prob}{\mathbb{P}}
\newcommand{\NRg}{\textsc{NR}}
\newcommand{\Rg}{\textsc{R}}
\newcommand{\NCB}{\mathrm{NCB}}
\newcommand{\UCB}{\mathrm{UCB}}
\newcommand{\Sample}{S}
\newcommand*{\textcal}[1]{%
  % family qzc: Font TeX Gyre Chorus (package tgchorus)
  % family pzc: Font Zapf Chancery (package chancery)
  \textit{\fontfamily{qzc}\selectfont#1}%
}
\title{\bfseries Fairness and Welfare Quantification for Regret in \\ Multi-Armed Bandits}
\author{Siddharth Barman\thanks{Indian Institute of Science. {\tt barman@iisc.ac.in}} \quad Arindam Khan\thanks{Indian Institute of Science. {\tt arindamkhan@iisc.ac.in}} \quad Arnab Maiti \thanks{Indian Institute of Technology, Kharagpur. {\tt maitiarnab9@gmail.com}} \quad Ayush Sawarni\thanks{Indian Institute of Science. {\tt ayushsawarni@iisc.ac.in}}}
\date{}
\begin{document}
\maketitle

\begin{abstract}
We extend the notion of regret with a welfarist perspective. Focussing on the classic multi-armed bandit (MAB) framework, the current work quantifies the performance of bandit algorithms by applying a fundamental welfare function, namely the Nash social welfare (NSW) function. This corresponds to equating algorithm's performance to the geometric mean of its expected rewards and leads us to the study of  {\it Nash regret}, defined as the difference between the---a priori unknown---optimal mean (among the arms) and the algorithm's performance. Since NSW is known to satisfy fairness axioms, our approach complements the utilitarian considerations of average (cumulative) regret, wherein the algorithm is evaluated via the arithmetic mean of its expected rewards. 

This work develops an algorithm that, given the horizon of play $T$, achieves a Nash regret of $O \left( \sqrt{\frac{{k \log T}}{T}} \right)$, here $k$ denotes the number of arms in the MAB instance. Since, for any algorithm, the Nash regret is at least as much as its average regret (the AM-GM inequality), the known lower bound on average regret holds for Nash regret as well. Therefore, our Nash regret guarantee is essentially tight.  In addition, we develop an anytime algorithm with a Nash regret guarantee of $O \left( \sqrt{\frac{{k\log T}}{T}} \log T \right)$.   
\end{abstract}

\section{Introduction}
Regret minimization is a pre-eminent objective in the study of decision making under uncertainty. Indeed, regret is a central notion in multi-armed bandits \cite{lattimore2020bandit}, reinforcement learning \cite{sutton2018reinforcement}, game theory \cite{young2004strategic}, decision theory \cite{peterson2017introduction}, and causal inference \cite{lattimore2016causal}. The current work extends the formulation of regret with a welfarist perspective. In particular, we quantify the performance of a decision maker by applying a fundamental welfare function---namely the Nash social welfare---and study {Nash regret}, defined as the difference between the (a priori unknown) optimum and the decision maker's performance. We obtain essentially tight upper bounds for Nash regret in the classic multi-armed bandit (MAB) framework.  

Recall that the MAB framework provides an encapsulating abstraction for settings that entail sequential decision making under uncertainty. In this framework, a decision maker (online algorithm) has sample access to $k$ distributions (arms), which are a priori unknown. For $T$ rounds, the online algorithm sequentially selects an arm and receives a reward drawn independently from the arm-specific distribution. Here, ex ante, the optimal solution would be to select, in every round, the arm with the maximum mean. However, since the statistical properties of the arms are unknown, the algorithm accrues---in the $T$ rounds---expected rewards that are not necessarily the optimal. The construct of regret captures this sub-optimality and, hence, serves as a key performance metric for algorithms. A bit more formally, regret is defined as the difference between the optimal mean (among the arms) and the algorithm's performance.  

Two standard forms of regret are average (cumulative) regret \cite{lattimore2016causal} and instantaneous (simple) regret \cite{slivkins2019introduction}. Specifically, average regret considers the difference between the optimal mean and the average (arithmetic mean) of the expected rewards accumulated by the algorithm. Hence, in average regret the algorithm's performance is quantified as the arithmetic mean of the expected rewards it accumulates in the $T$ rounds. Complementarily, in simple regret the algorithm's performance is equated to its expected reward precisely in the $T$th round, i.e., the algorithm's performance is gauged only after $(T-1)$ rounds. 

We build upon these two formulations with a welfarist perspective. To appreciate the relevance of this perspective in the MAB context, consider settings in which the algorithm's rewards correspond to values that are distributed across a population of agents. In particular, one can revisit the classic motivating example of clinical trials \cite{thompson1933likelihood}: in each round $t \in \{1, \ldots, T\}$ the decision maker (online algorithm) administers one of the $k$ drugs to the $t$th patient. The observed reward in round $t$ is the selected drug's efficacy for patient $t$. Hence, in average regret one equates the algorithm's performance as the (average) social welfare across the $T$ patients. It is pertinent to note that maximizing social welfare (equivalently, minimizing average regret) might not induce a fair outcome. The social welfare can in fact be high even if the drugs are inconsiderately tested on an initial set of patients. By contrast, in instantaneous regret, the algorithm's performance maps to the egalitarian (Rawlsian) welfare (which is a well-studied fairness criterion) but only after excluding an initial set of test subjects.   

The work aims to incorporate fairness and welfare considerations for such MAB settings, in which the algorithm's rewards correspond to agents' values (e.g., drug efficacy, users' experience, and advertisers' revenue). Towards this, we invoke a principled approach from mathematical economics: apply an axiomatically-justified welfare function on the values and thereby quantify the algorithm's performance. Specifically, we apply the Nash social welfare (NSW), which is defined as the geometric mean of agents' values \cite{moulin2004fair}. Hence, we equate the algorithm's performance to the geometric mean of its $T$ expected rewards. This leads us to the notion of Nash regret, defined as the difference between the optimal mean, $\mu^*$, and the geometric mean of the expected rewards (see equation (\ref{eq:def-NRg})). Note that here we are conforming to an ex-ante assessment--the benchmark (in particular, $\mu^*$) is an expected value and the value associated with each agent is also an expectation; see Appendix \ref{appendix:defns-nash-regret} for a  discussion on variants of Nash regret.   

NSW is an axiomatically-supported welfare objective \cite{moulin2004fair}. That is, in contrast to ad-hoc constraints or adjustments, NSW satisfies a collection of fundamental  axioms, including symmetry, independence of unconcerned agents, scale invariance, and the Pigou-Dalton transfer principle \cite{moulin2004fair}. At a high level, the Pigou-Dalton principle ensures that NSW will increase under a policy change that transfers, say, $\delta$ reward from a well-off agent $t$ to an agent $\widetilde{t}$ with lower current value.\footnote{Recall that NSW is defined as the geometric mean of rewards and, hence, a more balanced collection of rewards will have higher NSW.} At the same time, if the relative increase in $\widetilde{t}$'s value is much less than the drop in $t$'s value, then NSW would not favor such a transfer (i.e., it also accommodates for allocative efficiency). The fact that NSW strikes a balance between fairness and economic efficiency is also supported by the observation that it sits between egalitarian and (average) social welfare: the geometric mean is at least as large as the minimum reward and it is also at most the arithmetic mean (the AM-GM inequality).   

In summary, Nash social welfare induces an order among profiles of expected rewards (by considering their geometric means). Profiles with higher NSW are preferred. Our well-justified goal is to develop an algorithm that induces high NSW among the agents. Equivalently, we aim to develop an algorithm that minimizes Nash regret. 

It is relevant here to note the conceptual correspondence with average regret, wherein profiles with higher social welfare are preferred. That is, while average regret is an appropriate primitive for utilitarian concerns, Nash regret is furthermore relevant when fairness is an additional desideratum.  

\noindent
{\bf Our Results and Techniques.} We develop an algorithm that achieves Nash regret of $O \left( \sqrt{\frac{{k \log T}}{T}} \right)$; here, $k$ denotes the number of arms in the bandit instance and $T$ is the given horizon of play (Theorem \ref{thm:nucb} and Theorem \ref{theorem:improvedNashRegret}).  Note that, for any algorithm, the Nash regret is at least as much as its average regret.\footnote{This follows from the AM-GM inequality: The average regret is equal to the difference between the optimal mean, $\mu^*$, and the arithmetic mean of expected rewards. The arithmetic mean is at least the geometric mean, which in turn in considered in Nash regret.} Therefore, the known  $\Omega\left( \sqrt{ \frac{k}{T} } \right)$ lower bound on average regret~\cite{auer2002nonstochastic} holds for Nash regret as well. This observation implies that, up to a log factor, our guarantee matches the best-possible bound, even for average regret.   

We also show that the standard upper confidence bound (UCB) algorithm \cite{lattimore2020bandit} does not achieve any meaningful guarantee for Nash regret (Appendix \ref{appendix:ucb}). This barrier further highlights that Nash regret is a more challenging benchmark than average regret. In fact, it is not obvious if one can obtain any nontrivial guarantee for Nash regret by directly invoking upper bounds known for average (cumulative) regret. For instance, a reduction from Nash regret minimization to average regret minimization, by taking logs of the rewards (i.e., by converting the geometric mean to the arithmetic mean of logarithms), faces the following hurdles: (i) for rewards that are bounded between $0$ and $1$, the log values can be in an arbitrarily large range, and (ii) an additive bound for the logarithms translates back to only a multiplicative guarantee for the underlying rewards. 

Our algorithm (Algorithm \ref{algo:ncb}) builds upon the UCB template with interesting technical insights; see Section \ref{section:algorithm} for a detailed description. The two distinctive features of the algorithm are: (i) it performs uniform exploration for a judiciously chosen number of initial rounds and then (ii) it adds a novel (arm-specific) confidence width term to each arm's empirical mean and selects an arm for which this sum is maximum (see equation (\ref{eq:ncb})). Notably, the confidence width includes the empirical mean as well. These modifications enable us to go beyond standard regret analysis.\footnote{Note that the regret decomposition lemma \cite{lattimore2020bandit}, a mainstay of regret analysis, is not directly applicable for Nash regret.} 

The above-mentioned algorithmic result focusses on settings in which the horizon of play (i.e., the number of rounds) $T$ is given as input. Extending this result, we also establish a Nash regret guarantee for $T$-oblivious settings. In particular, we develop an anytime algorithm with a Nash regret of $O \left( \sqrt{\frac{{k \log T}}{T}} \log T \right)$ (Theorem \ref{thm:anytime-nucb}). This extension entails an interesting use of empirical estimates to identify an appropriate round at which the algorithm can switch out of uniform exploration. \\ 
% in Section \ref{section:anytime}

\noindent
{\bf Additional Related Work.} Given that learning algorithms are increasingly being used to guide socially sensitive decisions, there has been a surge of research aimed at achieving fairness in MAB contexts; see, e.g., \cite{JOSEPH2016,CELIS2019,PATIL2020,BISTRITZ2020,HOSSAIN2021} and references therein. This thread of research has predominantly focused on achieving fairness for the arms. By contrast, the current work establishes fairness (and welfare) guarantees across time.  

The significance of Nash social welfare and its axiomatic foundations \cite{kaneko1979nash,nash1950bargaining} in fair division settings are well established; see \cite{moulin2004fair} for a textbook treatment. Specifically in the context of allocating divisible goods, NSW is known to uphold other important fairness and efficiency criteria \cite{varian1974equity}. In fact, NSW corresponds to the objective function considered in the well-studied convex program of Eisenberg and Gale \cite{eisenberg1959consensus}. NSW is an object of active investigation in discrete fair division literature as well; see, e.g.,~\cite{caragiannis2019unreasonable}. 

\section{Notation and Preliminaries}
We study the classic (stochastic) multi-armed bandit problem. Here, an online algorithm (decision maker) has sample access to $k$ (unknown) distributions, that are supported on $[0,1]$. The distributions are referred to as arms $i \in \{1,\ldots, k\}$. The algorithm must iteratively select (pull) an arm per round and this process continues for $T \geq 1$ rounds overall. Successive pulls of an arm $i$ yield i.i.d.~rewards from the $i$th distribution. We will, throughout, write $\mu_i \in [0,1]$ to denote the (a priori unknown) mean of the the $i$th arm and let $\mu^*$ be maximum mean, $\mu^* \coloneqq  \max_{i \in [k]} \ \mu_i$. 
Furthermore, given a bandit instance and an algorithm, the random variable $I_t \in [k]$ denotes the arm pulled in round $t \in \{1, \ldots, T\}$. Note that $I_t$ depends on the draws observed before round $t$. 

We address settings in which the rewards are distributed across a population of $T$ agents. Specifically, for each agent $t \in \{1, \ldots, T\}$, the expected reward received is $\E[\mu_{I_t}]$ and, hence, the algorithm induces rewards  $\left\{ \E [\mu_{I_t}] \right\}_{t=1}^T$ across all the $T$ agents. Notably, one can quantify the algorithm's performance by applying a welfare function on these induced rewards. Our focus is on Nash social welfare, which, in the current context, is equal to the geometric mean of the agents' expected rewards: $\left( \prod_{t=1}^T  \E [\mu_{I_t}]  \right)^{1/T}$. Here, the overarching aim of achieving a Nash social welfare as high as possible is quantitatively captured by considering  \emph{Nash regret}, $\NRg_T$; this metric is defined as
\begin{align}
\NRg_T \coloneqq \mu^* - \left( \prod_{t=1}^T  \E [\mu_{I_t}]  \right)^{1/T} \label{eq:def-NRg}
\end{align}
Note that the optimal value of Nash social welfare is $\mu^*$, and our objective is to minimize Nash regret. 

Furthermore, the standard notion of average (cumulative) regret is obtained by assessing the algorithm's performance as the induced social welfare $\frac{1}{T} \sum_{t=1}^T \E [ \mu_{I_t} ]$. Specifically, we write $\Rg_T$ to denote the average regret, $\Rg_T \coloneqq \mu^* - \frac{1}{T} \sum_{t=1}^T \E [ \mu_{I_t} ]$. The AM-GM inequality implies that Nash regret, $\NRg_T$, is a more challenging benchmark than $\Rg_T$; indeed, for our algorithm, the Nash regret is $O \left( \sqrt{\frac{k \log T}{T}} \right)$ and the same guarantee holds for the algorithm's average regret as well.   

%The welfare framework also provides an interpretation of \emph{simple regret}. Indeed, here the egalitarian (Rawlsian) welfare corresponds to $\min_{t} \E [ \mu_{I_t} ]$. Hence, evaluating the algorithm's performance as its egalitarian welfare---after a budgeted number of rounds $\overline{T}$---leads to simple regret: $\Rg^{\text{Simple}}_{\overline{T}} \coloneqq \mu^* - \min_{t \geq \overline{T}} \E [ \mu_{I_t} ]$; see \cite{lattimore2020bandit} for a textbook treatment of standard and simple regret. Note that, while simple regret bounds are typically expressed for the $\left(\overline{T} +1\right)$th round, if one continues to follow the best identified arm after the $\overline{T}$ rounds, an egalitarian guarantee holds for all $t > \overline{T}$. Also, recall that for a desired bound on simple regret, $\overline{T}$ has to be appropriately large. That is, the egalitarian welfare is considered after excluding the initial $\overline{T}$ agents. Our result show that no such exclusion is required for Nash regret. 

\section{The Nash Confidence Bound Algorithm}
\label{section:algorithm}
%Our algorithm (Algorithm \ref{algo:ncb}) builds upon the standard upper confidence bound (UCB) template with interesting technical insights.
Our algorithm (Algorithm \ref{algo:ncb}) consists of two phases. Phase {\rm I} performs uniform exploration for $\widetilde{T} \coloneqq 16\sqrt{\frac{k T \log T}{\log k}}$ rounds. In Phase {\rm II}, each arm is assigned a value (see equation (\ref{eq:ncb})) and the algorithm pulls the arm with the highest current value. Based on the observed reward, the values are updated and this phase continues for all the remaining rounds.
%Only the arms that have been sampled sufficiently many times in Phase {\rm I} are considered in the second phase.

We refer to the arm-specific values as the \emph{Nash confidence bounds}, $\NCB_i$-s. For each arm $i \in [k]$, we obtain $\NCB_i$ by adding a `confidence width' to the empirical mean of arm $i$; in particular, $\NCB_i$ depends on the number of times arm $i$ has been sampled so far and rewards experienced for $i$. Formally, for any round $t$ and arm $i \in [k]$, let $n_i \geq 1$ denote the number of times arm $i$ has been pulled before this round.\footnote{Note that $n_i$ is a random variable.} Also, for each $1 \leq s \leq n_i$, random variable $X_{i,s}$ be the observed reward when arm $i$ is pulled the $s$th time. At this point, arm $i$ has empirical mean $\widehat{\mu}_i \coloneqq \frac{1}{n_i} \sum_{s=1}^{n_i} X_{i, s}$ and we define the Nash confidence bound as
\begin{align}
	\NCB_i & \coloneqq  \widehat{\mu_i} + 4 \sqrt{\frac{\widehat{\mu_i} \log T}{n_i}} \label{eq:ncb}
\end{align}

It is relevant to observe that, in contrast to standard UCB (see, e.g., \cite{lattimore2020bandit}), here the confidence width includes the empirical mean (i.e., the additive term has $\widehat{\mu}_i$ under the square-root). This is an important modification that enables us to go beyond standard regret analysis. Furthermore, we note that the Nash regret guarantee of Algorithm \ref{algo:ncb} can be improved by a factor of $\sqrt{\log k}$ (see Theorem \ref{thm:nucb} and Theorem \ref{theorem:improvedNashRegret}). The initial focus on Algorithm \ref{algo:ncb} enables us to highlight the key technical insights for Nash regret. The improved guarantee is detailed in Section \ref{section:modified-NCB}.    

\begin{algorithm}[ht!]
	\caption{Nash Confidence Bound Algorithm}
	\label{algo:ncb}
	\noindent
	\textbf{Input:} Number of arms $k$ and horizon of play $T$. \\
	\vspace{-10pt}
	\begin{algorithmic}[1]
		\STATE Initialize set $S := [k]$ along with empirical means $\widehat{\mu}_i = 0$ and counts $n_i = 0$ for all $i \in [k]$. Also, set $\widetilde{T} \coloneqq 16\sqrt{\frac{k T \log T}{\log k}}$.
		\\ \texttt{Phase {\rm I}}
		\FOR{$t=1$ to $\widetilde{T}$} 
		\STATE Select $I_t$ uniformly at random from $S$. Pull arm $I_t$ and observe reward $X_t$.
		\STATE For arm $I_t$, increment the count $n_{I_t}$ (by one) and update the empirical mean $\widehat{\mu}_{I_t}$.
		\ENDFOR
		%\STATE For each arm $j$ with $n_j < \frac{\widetilde{T}}{2k}$, update $S \leftarrow S \setminus \{ j \}$. \label{step:trim}
		\\ \texttt{Phase {\rm II}}
		\FOR{$t=(\widetilde{T}+1)$ to $T$}
		\STATE Pull the arm $I_t$ with the highest Nash confidence bound, i.e., $I_t = \argmax_{i \in [k]} \ \NCB_i$. \label{step:argmax}
		\STATE Observe reward $X_t$ and update $\widehat{\mu}_{I_t}$. 
		\STATE Update the Nash confidence bound for $I_t$  (see equation (\ref{eq:ncb})).
		\ENDFOR
	\end{algorithmic}
\end{algorithm}

The following theorem is the main result of this section and it establishes that Algorithm \ref{algo:ncb} achieves a tight---up to log factors---guarantee for Nash regret.

\begin{theorem}\label{thm:nucb}
For any bandit instance with $k$ arms and given any (moderately large) time horizon $T$, the Nash regret of Algorithm \ref{algo:ncb} satisfies
\begin{align*}
\NRg_T = O \left( \sqrt{ \frac{k\log k \ \log T}{T} } \right).
\end{align*}
\end{theorem}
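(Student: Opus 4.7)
The backbone of the argument is the inequality $\NRg_T \leq \mu^*\,\Delta$ with $\Delta \coloneqq \tfrac{1}{T}\sum_{t=1}^{T}\log(\mu^*/\E[\mu_{I_t}])$; this follows from writing $\mathrm{GM}/\mu^* = \exp(-\Delta)$ and applying $1-e^{-x}\leq x$. Splitting $\Delta = \Delta_{\mathrm I}+\Delta_{\mathrm{II}}$ across the two phases, Phase~I is essentially free: uniform exploration yields $\E[\mu_{I_t}] = \tfrac{1}{k}\sum_i \mu_i \geq \mu^*/k$, so $\mu^*\Delta_{\mathrm I} \leq \mu^*\widetilde T\log k / T \leq \widetilde T \log k / T$, and substituting $\widetilde T = 16\sqrt{kT\log T/\log k}$ gives exactly $O(\sqrt{k\log k\log T/T})$. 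All the work is in Phase~II.

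I would first isolate a good event $\mathcal{E}$ on which (a) a uniform Bernstein-type inequality $|\widehat{\mu}_i - \mu_i| \lesssim \sqrt{\mu_i \log T/n_i}+\log T/n_i$ holds over every arm $i$ and every count $1\leq n\leq T$, and (b) each arm is pulled at least $\widetilde T/(2k)$ times during Phase~I; a union bound on (a) and a Chernoff bound on (b) yield $\prob(\mathcal{E}^c) = O(1/T)$. Next, set $\epsilon \coloneqq \sqrt{k\log k\log T/T}$ and handle the trivial regime separately: whenever $\mu^* \leq C_0\,\epsilon$ for a suitable absolute constant $C_0$, the target bound is immediate from $\NRg_T \leq \mu^*$. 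Henceforth assume $\mu^* \geq C_0\,\epsilon$.

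The heart of the Phase~II analysis is a chained Bernstein argument showing that, on $\mathcal{E}$, the arm $I_t$ chosen at each round $t > \widetilde T$ satisfies $\mu^* - \mu_{I_t} \leq O(\sqrt{\mu^* \log T / n_{I_t}})$. It proceeds in three steps: (i) Bernstein applied to the optimal arm gives $\NCB_* \geq \mu^*$ on $\mathcal{E}$, so the maximizing choice of $I_t$ implies $\widehat{\mu}_{I_t} + 4\sqrt{\widehat{\mu}_{I_t}\log T/n_{I_t}} \geq \mu^*$; (ii) treating the last display as a quadratic in $\sqrt{\widehat{\mu}_{I_t}}$ yields $\sqrt{\widehat{\mu}_{I_t}} \geq \sqrt{\mu^*} - 2\sqrt{\log T/n_{I_t}}$; (iii) Bernstein applied in reverse to arm $I_t$ gives $\sqrt{\mu_{I_t}} \geq \sqrt{\widehat{\mu}_{I_t}} - O(\sqrt{\log T/n_{I_t}})$, and squaring the combined bound delivers the claim. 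I expect this three-step chain to be the main technical obstacle: it critically exploits the non-standard use of $\widehat{\mu}_i$ (instead of $\mu_i$) inside the confidence width, and the $\sqrt{\mu^*}$ factor it produces is precisely what lets us beat the barrier that standard UCB hits for Nash regret.

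With the per-round bound in hand, $\mu^* \geq C_0\,\epsilon$ together with $n_{I_t} \geq \widetilde T/(2k)$ forces $\mu^* - \mu_{I_t} \leq \mu^*/2$, so $\E[\mu_{I_t}] \geq \mu^*/2$ after absorbing the $O(1/T)$ probability of $\mathcal{E}^c$. Using $\log(\mu^*/x) \leq 2(\mu^*-x)/\mu^*$ for $x \geq \mu^*/2$ then yields
\[
\mu^* \Delta_{\mathrm{II}} \;\leq\; \frac{2}{T} \sum_{t > \widetilde T} (\mu^* - \E[\mu_{I_t}]).
\]
The right-hand sum is bounded on $\mathcal{E}$ by plugging in the per-round gap and invoking the standard Cauchy--Schwarz inequality $\sum_{t > \widetilde T} 1/\sqrt{n_{I_t}} \leq 2\sqrt{kT}$, which gives $\sum_{t > \widetilde T}(\mu^* - \mu_{I_t})\mathbf{1}_{\mathcal{E}} = O(\sqrt{\mu^* kT \log T})$; the $\mathcal{E}^c$ contribution is at most $T\mu^*\,\prob(\mathcal{E}^c) = O(1)$. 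Dividing by $T$ and using $\mu^* \leq 1$, one obtains $\mu^*\Delta_{\mathrm{II}} = O(\sqrt{k\log T/T}) = O(\epsilon)$, which combined with the Phase~I bound completes the proof of $\NRg_T = O(\sqrt{k\log k\log T/T})$.
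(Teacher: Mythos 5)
Your proposal is correct, and its technical core coincides with the paper's: the same trivial case split on $\mu^*$ versus $\sqrt{k\log k\log T/T}$, a high-probability good event combining concentration for empirical means with a Chernoff bound on Phase~I counts, the validity of the Nash confidence bound for the optimal arm, the per-arm gap bound $\mu^* - \mu_i = O(\sqrt{\mu^*\log T/n_i})$ (your three-step chain through $\sqrt{\widehat\mu}$ is exactly how the paper's Lemma~\ref{lem:suboptimal_arms} is proved, via its Lemma~\ref{lem:emp}), and a Cauchy--Schwarz step worth $\sqrt{kT}$. Where you genuinely diverge is the welfare accounting. The paper never takes logarithms: it lower-bounds the product $\bigl(\prod_t \E[\mu_{I_t}]\bigr)^{1/T}$ directly, using multivariate Jensen to pass the expectation inside, the numeric inequality $(1-x)^a \geq 1-2ax$ (Claim~\ref{lem:binomial}) to linearize the fractional exponents $m_i/T$, and $(1-x)(1-y)\geq 1-x-y$ to telescope across arms. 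You instead write $\NRg_T \leq \mu^*\Delta$ with $\Delta = \frac{1}{T}\sum_t \log(\mu^*/\E[\mu_{I_t}])$ and use $\log(1/u)\leq 2(1-u)$ for $u\geq 1/2$, which reduces the Phase~II contribution to (twice) the ordinary average regret of Phase~II. Your route is arguably cleaner and makes the relationship to cumulative regret explicit, at the cost of needing the a priori lower bound $\E[\mu_{I_t}]\geq \mu^*/2$ to enter the regime where the logarithm is linearizable; the paper's product manipulation avoids that step but pays in bookkeeping with the $(1-x)^a$ inequality. Two small points to tighten: the constant in $\log(1/u)\leq 2(1-u)$ only covers $\E[\mu_{I_t}]\geq \mu^*/2$ up to the $O(1/T)$ slack from $\prob(\mathcal{E}^c)$, so either relax to $u\geq 1/3$ or note the constant absorbs it; and the step (i) requirement that the width $4\sqrt{\widehat\mu\log T/n}$ dominates the Bernstein deviation of the optimal arm needs the explicit estimate $\widehat\mu^* \geq c'\mu^*$ (the paper shows $\widehat{\mu}^*\geq \tfrac{13}{16}\mu^*$) so that the constants $4\sqrt{c'} > 3$ actually close.
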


\subsection{Regret Analysis}
\label{section:regret-analysis}

We first define a ``good'' event $G$ and show that it holds with high probability (Lemma \ref{lemma:good-event}); our Nash regret analysis is based on conditioning on $G$. In particular, we will first define three sub-events $G_1, G_2, G_3$ and set $ G \coloneqq G_1 \cap G_2 \cap G_3$. For specifying these events, write $\widehat{\mu}_{i, s}$ to denote the empirical mean of arm $i$'s rewards, based on the first $s$ samples (of $i$). 
\begin{itemize}[leftmargin=20pt]
\item[$G_1$:] Every arm $i \in [k]$ is sampled at least $\frac{\widetilde{T}}{2 k}$ times in Phase {\rm I},\footnote{Recall that $\widetilde{T} \coloneqq 16\sqrt{\frac{k T \log T}{\log k}}$.} i.e., for each arm $i$ we have $n_i \geq \frac{\widetilde{T}}{2 k}$ at the end of the first phase in Algorithm \ref{algo:ncb}.
%The best arm $i^* \coloneqq \arg\max_{i\in[k]}\mu_i$ is sampled at least $\frac{\widetilde{T}}{2 k}$ times in Phase {\rm I},\footnote{Recall that $\widetilde{T} \coloneqq \sqrt{\frac{k T \log T}{\log k}}$.} i.e., $n_{i^*} \geq \frac{\widetilde{T}}{2 k}$ at the end of the first phase in Algorithm \ref{algo:ncb}.
\item[$G_2$:] For all arms $i \in [k]$, with $ \mu_i > \frac{6\sqrt{k \log k\log T}}{\sqrt{T}} $, and all sample counts $\frac{\widetilde{T}}{2k} \leq s \leq T$ we have $\left|\mu_{i}-\widehat{\mu}_{i,s} \right| \leq 3 \sqrt{\frac{\mu_{i} \log T}{s}}$.
\item[$G_3$:] For all arms $j \in [k]$, with $ \mu_j \leq \frac{6\sqrt{k\log k\log T}}{\sqrt{T }} $, and all $\frac{\widetilde{T}}{2k} \leq s \leq T$,  we have $\widehat{\mu}_{j,s} \leq \frac{9 \sqrt{k\log k \log T}}{\sqrt{T }} $.
\end{itemize}
Here,\footnote{Note that if, for all arms $i \in [k]$, the means  $\mu_i \leq \frac{6\sqrt{k\log k\log T}}{\sqrt{T}} $, then, by convention, $\prob \{G_2\} = 1$. Similarly, if all the means are sufficiently large, then $\prob\{G_3\} = 1$.} all the events are expressed as in the canonical bandit model (see, e.g., \cite[Chapter 4]{lattimore2020bandit}). In particular, for events $G_2$ and $G_3$, one considers a $k \times T$ reward table that populates $T$ independent samples for each arm $i \in [k]$. All the empirical means are obtained by considering the relevant entries from the table; see Appendix \ref{appendix:defns-nash-regret} for a more detailed description of the associated probability space. Also note that, conceptually, the algorithm gets to see the $(i, s)$th entry in the table only when it samples arm $i$ the $s$th time. 

The lemma below lower bounds the probability of event $G$; its proof is deferred to Appendix \ref{appendix:prob-G}.
\begin{lemma} \label{lemma:good-event}
	$\prob \left\{ G \right\} \geq \left(1-\frac{4}{T}\right)$.
\end{lemma}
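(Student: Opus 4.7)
The plan is to bound the three failure probabilities $\prob\{\overline{G_1}\}$, $\prob\{\overline{G_2}\}$, and $\prob\{\overline{G_3}\}$ separately and combine them by a union bound to conclude $\prob\{\overline{G}\} \leq 4/T$. All three estimates reduce to concentration of averages of $[0,1]$-valued i.i.d.~rewards; since such variables have variance at most their mean, Bernstein-type inequalities admit a variance proxy dominated by $\mu$, which is the main tool for $G_2$ and $G_3$.

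For $G_1$, Phase {\rm I} selects an arm uniformly at random in each of the $\widetilde{T}$ rounds, so each count $n_i$ at the end of Phase {\rm I} is $\mathrm{Binomial}(\widetilde{T}, 1/k)$ with mean $\widetilde{T}/k$. A multiplicative Chernoff bound gives $\prob\{n_i < \widetilde{T}/(2k)\} \leq \exp(-\widetilde{T}/(8k))$, and a union bound over the $k$ arms yields $\prob\{\overline{G_1}\} \leq k\exp(-\widetilde{T}/(8k))$. Substituting $\widetilde{T} = 16\sqrt{kT\log T/\log k}$, the exponent $\widetilde{T}/(8k) = 2\sqrt{T\log T/(k\log k)}$ exceeds $\log(kT)$ for moderately large $T$, so $\prob\{\overline{G_1}\} \leq 1/T$.

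For $G_2$ and $G_3$ I would apply Bernstein's inequality to $s$ i.i.d.~samples of an arm $i$ lying in $[0,1]$:
\begin{align*}
\prob\left\{\left|\widehat{\mu}_{i,s} - \mu_i\right| \geq \epsilon\right\} \;\leq\; 2\exp\!\left(-\frac{s\epsilon^2}{2\mu_i + 2\epsilon/3}\right).
\end{align*}
For $G_2$, fix an arm $i$ with $\mu_i > 6\sqrt{k\log k\log T/T}$ and an integer $s$ with $\widetilde{T}/(2k) \leq s \leq T$, and set $\epsilon = 3\sqrt{\mu_i\log T/s}$. The choice of $\widetilde{T}$ is calibrated so that $s\mu_i \geq 48\log T$; hence $\epsilon \leq \mu_i$, the denominator $2\mu_i + 2\epsilon/3 \leq 3\mu_i$, and the Bernstein exponent becomes at least $s\epsilon^2/(3\mu_i) = 3\log T$. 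A union bound over at most $kT$ pairs $(i,s)$ then gives $\prob\{\overline{G_2}\} \leq 2k/T^2$. For $G_3$, write $\alpha \coloneqq \sqrt{k\log k\log T/T}$; any arm $j$ with $\mu_j \leq 6\alpha$ satisfies $\widehat{\mu}_{j,s} \leq 9\alpha$ whenever $\widehat{\mu}_{j,s} - \mu_j \leq 3\alpha$. Applying Bernstein with $\epsilon = 3\alpha$ and using $\mu_j + \epsilon/3 \leq 7\alpha$, the exponent is at least $9s\alpha/14$, and since $s \geq \widetilde{T}/(2k)$ forces $s\alpha \geq 8\log T$, the per-$(j,s)$ failure probability is $\leq 1/T^5$, so $\prob\{\overline{G_3}\} \leq k/T^4$. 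Summing the three bounds gives $\prob\{\overline{G}\} \leq 4/T$.

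The main obstacle is choosing an inequality sharp enough to capture the mean-dependent widths and tracking the constants. A Hoeffding bound is too loose for $G_2$: the target width $3\sqrt{\mu_i\log T/s}$ can be much smaller than the Hoeffding width $\sqrt{\log T/s}$ when $\mu_i \ll 1$, so the variance-adaptive Bernstein form is essential, and the threshold $6\sqrt{k\log k\log T/T}$ demarcates exactly the regime in which this sharper statement remains meaningful. The precise calibration of $\widetilde{T}$, the threshold $6\sqrt{k\log k\log T/T}$, and the bound $9\sqrt{k\log k\log T/T}$ is what makes the Bernstein exponents in $G_2$ and $G_3$ both come out as $\Omega(\log T)$ after the algebra, providing the polynomially small factors needed for the final union bound.
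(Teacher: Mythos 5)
Your proposal is correct and follows essentially the same route as the paper: the same decomposition $G = G_1 \cap G_2 \cap G_3$, a multiplicative Chernoff bound for the Phase~{\rm I} counts, mean-adaptive concentration for the empirical means with the same calibration $s\mu_i \gtrsim \log T$ driving the exponents, and union bounds over the $k$ arms and the $kT$ arm--count pairs. The only cosmetic difference is that you invoke Bernstein's inequality where the paper uses a multiplicative Chernoff--Hoeffding bound (its Lemma~\ref{lemma:hoeffding}, with exponent $-\delta^2 n\nu/3$); the two are interchangeable here and yield the same polynomial tail bounds.
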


Next, we state a useful numeric inequality; for completeness, we provide its proof in Appendix \ref{appendix:numeric-ineq}.

\begin{restatable}{claim}{LemmaNumeric}
\label{lem:binomial}
For all reals $x \in \left[0, \frac{1}{2}\right]$ and all $a \in [0,1]$, we have $(1-x)^{a} \geq  1- 2ax$. 
\end{restatable}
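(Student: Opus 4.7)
The plan is to fix $a \in [0,1]$ and treat the inequality as a one-variable statement in $x$. Define
\[
f(x) \;\coloneqq\; (1-x)^{a} - (1 - 2ax), \qquad x \in [0, 1/2].
\]
The goal is to show $f(x) \geq 0$ throughout this interval. The boundary value is free: $f(0) = 1 - 1 = 0$. So it suffices to prove that $f$ is non-decreasing on $[0, 1/2]$, i.e., $f'(x) \geq 0$.

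A direct differentiation gives
\[
f'(x) \;=\; -a (1-x)^{a-1} + 2a \;=\; a \bigl[\, 2 - (1-x)^{a-1} \,\bigr].
\]
Since $a \geq 0$, nonnegativity of $f'(x)$ reduces to the single inequality $(1-x)^{a-1} \leq 2$, uniformly for $a \in [0,1]$ and $x \in [0, 1/2]$. This is the technical heart of the argument. Because $a - 1 \leq 0$, the map $y \mapsto y^{a-1}$ is non-increasing on $(0,1]$, so on the range $1 - x \in [1/2, 1]$ the quantity $(1-x)^{a-1}$ is maximized at $1 - x = 1/2$, giving $(1/2)^{a-1} = 2^{1-a}$. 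Since $1 - a \in [0,1]$, we have $2^{1-a} \leq 2$, which is exactly what is needed.

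Combining the two pieces, $f'(x) \geq 0$ on $[0, 1/2]$, so $f(x) \geq f(0) = 0$, which is the claim. The only mildly delicate point is the monotonicity of $y^{a-1}$ when $a - 1$ is negative (in particular, one must remember that the exponent is non-positive so the worst case is $x = 1/2$, not $x = 0$); everything else is a one-line derivative computation plus the observation that $f$ vanishes at $x = 0$. The edge cases $a = 0$ (where both sides equal $1$) and $a = 1$ (where the inequality becomes $1 - x \geq 1 - 2x$, i.e., $x \geq 0$) are recovered automatically.
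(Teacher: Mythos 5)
Your proof is correct, and it takes a genuinely different route from the paper's. The paper expands $(1-x)^a$ via the generalized binomial series, groups all the terms beyond $1-ax$ into a factor of $-ax$ times a positive series, and bounds that series term-by-term by the geometric series $x + x^2 + \cdots = \frac{x}{1-x}$, which is at most $1$ precisely because $x \le \frac{1}{2}$. You instead run a first-derivative test on $f(x) = (1-x)^a - (1-2ax)$: since $f(0)=0$, it suffices that $f'(x) = a\bigl[2 - (1-x)^{a-1}\bigr] \ge 0$, which reduces to $(1-x)^{a-1} \le 2$ on $1-x \in [\frac{1}{2},1]$; you correctly identify that the worst case is $x=\frac{1}{2}$ (because the exponent $a-1$ is non-positive) and get $2^{1-a} \le 2$. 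Both arguments are elementary and both make the role of the hypothesis $x \le \frac{1}{2}$ transparent (it is exactly what caps $\frac{x}{1-x}$ at $1$ in the paper, and exactly what caps $(1-x)^{a-1}$ at $2$ in yours). Your version avoids any manipulation of an infinite series (and hence any implicit appeal to its convergence and rearrangement for $|x|<1$), at the cost of a differentiability observation that is harmless here since $1-x \ge \frac{1}{2} > 0$. Either proof is acceptable.
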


Now, we will show that the following key guarantees (events) hold under the good event $G$: 
\begin{itemize}[leftmargin=20pt]
\item Lemma \ref{lem:emp}: The Nash confidence bound of the optimal arm $i^*$ is at least its true mean, $\mu^*$, throughout Phase {\rm II}.
\item Lemma \ref{lem:bad_arms}: Arms $j$ with sufficiently small means (in particular, $\mu_j \leq \frac{6\sqrt{k \log k \log T}}{\sqrt T}$) are never pulled in Phase {\rm II}. 
\item Lemma \ref{lem:suboptimal_arms}: Arms $i$ that are pulled many times in Phase {\rm II} have means $\mu_i$ close to the optimal $\mu^*$. Hence, such arms $i$ do not significantly increase the Nash regret. 
\end{itemize}
The proofs of these three lemmas are deferred to Appendix \ref{appendix:key-lemmas}. In these results, we will address bandit instances wherein the optimal mean $\mu^* \geq  \frac{32 \sqrt{k \log k\log T}}{\sqrt T}$. Note that in the complementary case (wherein $\mu^* < \frac{32 \sqrt{k\log k\log T}}{\sqrt T}$) the Nash regret directly satisfies the bound stated in Theorem \ref{thm:nucb}.

\begin{restatable}{lemma}{LemmaNCBopt}
\label{lem:emp}
Let $\NCB_{i^*,t}$ be the Nash confidence bound of the optimal arm $i^*$ at round $t$. Assume that the good event $G$ holds and also $\mu^* \geq \frac{32\sqrt{k \log k\log T}}{\sqrt T}$. Then, for all rounds $t > \widetilde{T}$ (i.e., for all rounds in Phase {\rm II}), we have $\NCB_{i^*,t} \geq \mu^*$.
\end{restatable}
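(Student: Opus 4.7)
The plan is to show directly that the empirical mean plus confidence width of $i^*$ is at least $\mu^*$, by combining the lower bound on $n_{i^*}$ from event $G_1$ with the two-sided concentration from event $G_2$. Under $G_1$, at every round $t > \widetilde{T}$ we have $n_{i^*} \geq \widetilde{T}/(2k) = 8\sqrt{T \log T /(k \log k)}$. Since the hypothesis $\mu^* \geq 32\sqrt{k \log k \log T}/\sqrt{T}$ exceeds the threshold $6\sqrt{k \log k \log T}/\sqrt{T}$ that defines $G_2$, the event $G_2$ applies to arm $i^*$, giving $|\widehat{\mu}_{i^*} - \mu^*| \leq 3\sqrt{\mu^* \log T / n_{i^*}}$ for every sample count reached in Phase~{\rm II}.

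If $\widehat{\mu}_{i^*} \geq \mu^*$ the bound on $\NCB_{i^*,t}$ is immediate. Otherwise the task reduces to showing that the width $4\sqrt{\widehat{\mu}_{i^*}\log T/n_{i^*}}$ covers the gap $\mu^* - \widehat{\mu}_{i^*} \leq 3\sqrt{\mu^* \log T/n_{i^*}}$. The key sub-goal is to establish that $\widehat{\mu}_{i^*} \geq (9/16)\,\mu^*$; once this is in hand, the stated width satisfies
\[
4\sqrt{\tfrac{\widehat{\mu}_{i^*}\log T}{n_{i^*}}} \; \geq \; 4\sqrt{\tfrac{(9/16)\mu^*\log T}{n_{i^*}}} \; = \; 3\sqrt{\tfrac{\mu^*\log T}{n_{i^*}}} \; \geq \; \mu^* - \widehat{\mu}_{i^*},
\]
which is exactly what is needed.

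To verify $\widehat{\mu}_{i^*} \geq (9/16)\,\mu^*$, I would rearrange the $G_2$ bound: it suffices that $3\sqrt{\mu^* \log T / n_{i^*}} \leq (7/16)\,\mu^*$, equivalently $n_{i^*} \geq (9 \cdot 256/49)\cdot \log T / \mu^*$. Using the hypothesis $\mu^* \geq 32 \sqrt{k \log k \log T}/\sqrt T$ yields $\log T/\mu^* \leq (1/32)\sqrt{T\log T/(k \log k)}$, so the required lower bound on $n_{i^*}$ is at most $(72/49)\sqrt{T\log T/(k\log k)}$, comfortably below the $8\sqrt{T\log T/(k\log k)}$ that $G_1$ guarantees. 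Numerically the constants in $\widetilde{T}$ and in the definition of $\NCB$ are chosen exactly so that this slack is preserved.

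The main obstacle—and the reason Phase~{\rm I} runs for $\widetilde{T}$ rounds rather than fewer—is obtaining the constant-factor lower bound $\widehat{\mu}_{i^*} \gtrsim \mu^*$ from the empirical mean alone. This is a genuinely new ingredient compared to vanilla UCB: the confidence width $4\sqrt{\widehat{\mu}_i \log T/n_i}$ has $\widehat{\mu}_i$ under the square root, so it can only shoulder the concentration error $3\sqrt{\mu^* \log T/n_{i^*}}$ if $\widehat{\mu}_{i^*}$ is itself within a constant factor of $\mu^*$. The uniform exploration length $\widetilde{T} = 16\sqrt{kT\log T/\log k}$ and the arm-floor condition on $\mu^*$ together make this bootstrapping step work.
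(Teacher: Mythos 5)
Your proposal is correct and follows essentially the same route as the paper's proof: use $G_1$ to lower-bound $n_{i^*}$, use $G_2$ to bootstrap a constant-factor lower bound on $\widehat{\mu}_{i^*}$ (you get $\tfrac{9}{16}\mu^*$, the paper gets $\tfrac{13}{16}\mu^*$ from the slightly stronger bound $\mu^* n_{i^*} \geq 256\log T$), and then check that the width $4\sqrt{\widehat{\mu}_{i^*}\log T/n_{i^*}}$ dominates the concentration error $3\sqrt{\mu^*\log T/n_{i^*}}$. The arithmetic in your verification checks out, so no changes are needed.
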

\begin{restatable}{lemma}{LemmaBadArms}
\label{lem:bad_arms}
Consider a bandit instance with optimal mean $\mu^* \geq \frac{32 \sqrt{k\log k\log T}}{\sqrt T}$ and assume that the good event $G$ holds. Then, any arm $j$, with mean $\mu_j \leq \frac{6\sqrt{k\log k \log T}}{\sqrt{T}}$, is never pulled in all of Phase {\rm II}.  
\end{restatable}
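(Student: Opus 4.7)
\medskip

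\noindent\textbf{Proof proposal for Lemma \ref{lem:bad_arms}.} The plan is to show, by a direct numerical comparison, that for every round $t$ in Phase {\rm II} the Nash confidence bound of any ``bad'' arm $j$ (with $\mu_j \leq 6\sqrt{k\log k\log T/T}$) is strictly smaller than $\mu^*$, and then invoke Lemma \ref{lem:emp} to conclude that $\NCB_{i^*,t} \geq \mu^* > \NCB_{j,t}$, so the argmax rule in Step~\ref{step:argmax} never selects $j$. Since $n_j$ and $\widehat{\mu}_j$ are only refreshed when $j$ is actually pulled, it suffices to verify the strict inequality assuming $j$ has never been pulled in Phase~{\rm II}, i.e., $n_j$ equals its value at the end of Phase~{\rm I}.

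The argument proceeds in three short steps. First, use event $G_1$ to guarantee $n_j \geq \widetilde{T}/(2k)$ at every round of Phase {\rm II}; then use event $G_3$, applicable precisely because $\mu_j \leq 6\sqrt{k\log k \log T/T}$, to get the uniform bound $\widehat{\mu}_j \leq 9\sqrt{k\log k\log T/T}$. Second, substitute $\widetilde{T}=16\sqrt{kT\log T/\log k}$ into the definition
\begin{align*}
\NCB_{j,t} \;=\; \widehat{\mu}_j + 4\sqrt{\frac{\widehat{\mu}_j\,\log T}{n_j}}.
\end{align*}
A short calculation gives $\log T/n_j \leq \frac{1}{8}\sqrt{k\log k\log T/T}$, and combined with the bound on $\widehat{\mu}_j$ this yields $\NCB_{j,t} \leq C\sqrt{k\log k\log T/T}$ for an absolute constant $C$ (a quick tally gives $C$ close to $14$, comfortably below the threshold $32$). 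Third, since by hypothesis $\mu^* \geq 32\sqrt{k\log k\log T/T}$, we obtain $\NCB_{j,t} < \mu^* \leq \NCB_{i^*,t}$, where the last inequality is exactly Lemma \ref{lem:emp}. Hence $j$ loses the argmax at every round of Phase~{\rm II}.

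The only genuinely delicate point is the bookkeeping at the boundary: one must justify that the bound $\widehat{\mu}_j \leq 9\sqrt{k\log k\log T/T}$ from event $G_3$ continues to apply throughout Phase {\rm II}, not just at its start. This is handled by induction on the rounds of Phase~{\rm II}: so long as $j$ has not been pulled in Phase~{\rm II}, the sample count $n_j$ and empirical mean $\widehat{\mu}_j$ remain frozen at their end-of-Phase-{\rm I} values, which lie in the range for which $G_3$ applies; the inductive step then shows that $j$ is not pulled in the current round either, preserving the invariant. The remaining pieces are routine algebraic manipulations of the $\widetilde{T}$ expression, so I do not expect any real obstacle here — the substantive work (the concentration of $\widehat{\mu}_j$ when $\mu_j$ is small) has been absorbed into the definition of $G_3$.
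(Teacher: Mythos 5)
Your proposal is correct and follows essentially the same route as the paper's proof: bound $\NCB_{j,t}$ via $G_1$ (which gives $n_j \geq \widetilde{T}/(2k)$) and $G_3$ (which gives $\widehat{\mu}_j \leq 9\sqrt{k\log k\log T/T}$), arrive at a constant near $13$--$14$ times $\sqrt{k\log k\log T/T}$, and compare against $\NCB_{i^*,t}\geq\mu^*\geq 32\sqrt{k\log k\log T/T}$ from Lemma~\ref{lem:emp}. The inductive ``frozen counts'' bookkeeping you add is harmless but not strictly needed, since $G_3$ is defined to hold for all sample counts $\widetilde{T}/(2k)\leq s\leq T$.
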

\begin{restatable}{lemma}{LemmaHighMean}
\label{lem:suboptimal_arms}
Consider a bandit instance with optimal mean $\mu^* \geq \frac{32 \sqrt{k\log k\log T}}{\sqrt T}$ and assume that the good event $G$ holds. Then, for any arm $i$ that is pulled at least once in Phase {\rm II} we have 
\begin{align*}
\mu_i  \geq \mu^* - 8\sqrt{\frac{\mu^* \log T}{T_i  -1}}, 
\end{align*}
where $T_i$ is the total number of times that arm $i$ is pulled in the algorithm. 
\end{restatable}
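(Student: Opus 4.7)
The plan is to focus on the last round in which arm $i$ is pulled and invoke the selection rule of Phase II together with the previous two lemmas. Let $t$ be the last round in which arm $i$ is pulled (by assumption $t > \widetilde{T}$), so at the start of round $t$ the sample count for $i$ is exactly $n_i = T_i - 1$. By the argmax step and Lemma~\ref{lem:emp}, we have
\begin{align*}
\NCB_{i,t} \ \geq \ \NCB_{i^*,t} \ \geq \ \mu^*.
\end{align*}
Writing this out using the definition in (\ref{eq:ncb}) yields $\widehat{\mu}_i + 4\sqrt{\widehat{\mu}_i \log T/(T_i-1)} \geq \mu^*$, and the goal becomes to convert this inequality about $\widehat{\mu}_i$ into the stated inequality about $\mu_i$.

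Next I would use event $G_2$ to relate $\widehat{\mu}_i$ to $\mu_i$. This requires two preconditions: (i) arm $i$ has mean $\mu_i > 6\sqrt{k \log k \log T / T}$, which holds by the contrapositive of Lemma~\ref{lem:bad_arms}, since $i$ is pulled at least once in Phase II; and (ii) the sample count $s = T_i - 1$ satisfies $s \geq \widetilde{T}/(2k)$, which holds because event $G_1$ guarantees at least $\widetilde{T}/(2k)$ Phase I pulls of $i$. Hence event $G_2$ gives $|\widehat{\mu}_i - \mu_i| \leq 3\sqrt{\mu_i \log T/(T_i-1)}$.

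The main technical obstacle is that the confidence width in $\NCB_{i,t}$ involves $\sqrt{\widehat{\mu}_i}$ rather than $\sqrt{\mu_i}$, so I need a clean way to upper bound $\sqrt{\widehat{\mu}_i}$ in terms of $\sqrt{\mu_i}$. The key observation is that plugging $T_i - 1 \geq \widetilde{T}/(2k) = 8\sqrt{T \log T/(k\log k)}$ into $\log T/(T_i-1)$ and using $\mu_i \geq 6\sqrt{k \log k \log T/T}$ shows
\begin{align*}
\frac{\log T}{T_i - 1} \ \leq \ \frac{\sqrt{k \log k \log T}}{8\sqrt{T}} \ \leq \ \frac{\mu_i}{48},
\end{align*}
so that $3\sqrt{\mu_i \log T/(T_i-1)} \leq \mu_i/2$ and therefore $\widehat{\mu}_i \leq \tfrac{3}{2}\mu_i$, which gives $\sqrt{\widehat{\mu}_i} \leq \sqrt{3/2}\,\sqrt{\mu_i}$.

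Finally, combining the pieces,
\begin{align*}
\mu^* \ \leq \ \widehat{\mu}_i + 4\sqrt{\frac{\widehat{\mu}_i \log T}{T_i - 1}} \ \leq \ \mu_i + 3\sqrt{\frac{\mu_i \log T}{T_i - 1}} + 4\sqrt{\tfrac{3}{2}} \sqrt{\frac{\mu_i \log T}{T_i - 1}} \ \leq \ \mu_i + 8\sqrt{\frac{\mu_i \log T}{T_i - 1}},
\end{align*}
since $3 + 4\sqrt{3/2} < 8$. Because $\mu_i \leq \mu^*$, replacing $\sqrt{\mu_i}$ by $\sqrt{\mu^*}$ on the right and rearranging yields the claimed bound $\mu_i \geq \mu^* - 8\sqrt{\mu^* \log T/(T_i - 1)}$. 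The delicate part of the argument is the constant tracking in this last step; everything else is an essentially forced application of the NCB selection rule, Lemma~\ref{lem:emp}, Lemma~\ref{lem:bad_arms}, and event $G_2$.
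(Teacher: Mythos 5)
Your proof is correct and follows essentially the same route as the paper's: both extract $\widehat{\mu}_i + 4\sqrt{\widehat{\mu}_i \log T/(T_i-1)} \geq \mu^*$ from the Phase {\rm II} selection rule plus Lemma~\ref{lem:emp}, and then use $G_1$, $G_2$, and (the contrapositive of) Lemma~\ref{lem:bad_arms} to control the empirical mean inside the confidence width. The only cosmetic difference is that you bound $\widehat{\mu}_i \leq \tfrac{3}{2}\mu_i$ using the lower bound on $\mu_i$, whereas the paper bounds $\widehat{\mu}_i \leq \tfrac{19}{16}\mu^*$ using the lower bound on $\mu^*$; both give the stated constant $8$.
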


\subsection{Proof of Theorem \ref{thm:nucb}}
For bandit instances in which the optimal mean $\mu^* \leq  \frac{32 \sqrt{k\log k\log T}}{\sqrt T}$, the theorem holds directly; specifically, the Nash regret $\NRg_T = \mu^* - \left( \prod_{t=1}^T  \E [\mu_{I_t}]  \right)^{1/T} \leq \mu^*$. Therefore, in the remainder of the proof we will address instances wherein  $\mu^* \geq  \frac{32\sqrt{k\log k \log T}}{\sqrt T}$.

The Nash social welfare of the algorithm satisfies\footnote{Recall that $\widetilde{T} \coloneqq 16\sqrt{\frac{k T \log T}{\log k}}$.}
$\left(\prod_{t=1}^{T} \E \left[ \mu_{I_t} \right] \right)^\frac{1}{T} =  \left(\prod_{t=1}^{\widetilde{T}} \E \left[ \mu_{I_t} \right]\right)^\frac{1}{T} \left(\prod_{t=\widetilde{T} + 1}^{T}  \E \left[ \mu_{I_t} \right] \right)^\frac{1}{T}$. In this product, the two terms account for the rewards accrued in the two phases, respectively. Next, we will lower bound these two terms. \\

\noindent
Phase {\rm I}: In each round of the first phase, the algorithm selects an arm uniformly at random. Hence, $\E[\mu_{I_t}] \geq \frac{\mu^*}{k}$, for each round $t \leq \widetilde{T}$. Therefore, for Phase {\rm I} we have 
\begin{align}
\left(\prod_{t=1}^{\widetilde{T}} \E [\mu_{I_t} ] \right)^\frac{1}{T} & \geq \left(\frac{\mu^*}{k}\right)^{\frac{\widetilde{T}}{T}} =  \left(\mu^*\right)^{\frac{\widetilde{T}}{T}} \left(\frac{1}{k}\right)^{\frac{16\sqrt{k \log T}}{\sqrt{T\log k}}} = \left(\mu^*\right)^{\frac{\widetilde{T}}{T}} \left(\frac{1}{2}\right)^{\frac{16\sqrt{k \log T}\log k}{\sqrt{T\log k}}} \nonumber \\
& = \left(\mu^*\right)^{\frac{\widetilde{T}}{T}} \left(1- \frac{1}{2}\right)^{\frac{16\sqrt{k\log k \log T}}{\sqrt{T}}} %\nonumber \\ &
\geq  \left(\mu^*\right)^{\frac{\widetilde{T}}{T}} \left(1-{\frac{16\sqrt{k\log k \log T}}{\sqrt{T}}}\right)       \label{ineq:phaseone}
\end{align}
To obtain the last inequality we note that the exponent $\frac{16\sqrt{k\log k \log T}}{\sqrt{T}}<1$ (for appropriately large $T$) and, hence, the inequality follows from Claim \ref{lem:binomial}. \\

\noindent
Phase {\rm II}: For the second phase, the product of the expected rewards can be bounded as follows 
\begin{align}
\left(\prod_{t=\widetilde{T}+ 1}^{T} \E\left[ \mu_{I_t} \right]\right)^\frac{1}{T} & \geq \E\left[\left( \prod_{t=\widetilde{T}+1}^{T} \mu_{I_t} \right)^\frac{1}{T}\right ] %\tag{Multivariate Jensen's inequality} \nonumber \\ & 
\geq \E \left[\left( \prod_{t=\widetilde{T}+1}^{T} \mu_{I_t} \right)^\frac{1}{T} \;\middle|\; G \right]  \prob\{ G \} \label{ineq:interim} 
\end{align}
Here, the first inequality follows from the multivariate Jensen's inequality and the second one is obtained by conditioning on the good event $G$. To bound the expected value in the right-hand-side of inequality (\ref{ineq:interim}), we consider the arms that are pulled at least once in Phase {\rm II}. In particular, with reindexing, let $\{1,2, \ldots, \ell\}$ denote the set of all arms that are pulled at least once in the second phase. Also, let $m_i \geq 1$ denote the number of times arm $i \in [\ell]$ is pulled in Phase {\rm II} and note that $\sum_{i=1}^\ell m_i = T -  \widetilde{T}$. Furthermore, let $T_i$ denote the total number of times any arm $i$ is pulled in the algorithm. Indeed, $(T_i - m_i)$ is the number of times arm $i \in [\ell]$ is pulled in Phase {\rm I}. With this notation, the expected value in the right-hand-side of inequality (\ref{ineq:interim}) can be expressed as $\E \left[\left( \prod_{t=\widetilde{T}}^{T} \ \mu_{I_t} \right)^\frac{1}{T} \;\middle|\; G \right] = \E\left[\left( \prod_{i=1}^{\ell} \mu_{i}^\frac{m_i}{T}  \right)\;\middle|\; G \right] $. Moreover, since we are conditioning on the good event $G$, Lemma  \ref{lem:suboptimal_arms} applies to each arm $i \in [\ell]$. Hence, 	
\begin{align}
\E \left[\left( \prod_{t=\widetilde{T}}^{T} \ \mu_{I_t} \right)^\frac{1}{T} \;\middle|\; G \right] & = \E\left[\left( \prod_{i=1}^{\ell} \mu_{i}^\frac{m_i}{T}  \right)\;\middle|\; G \right] %\nonumber \\ & 
\ \geq \E\left[\prod_{i=1}^{\ell}\left(\mu^* - 8\sqrt{\frac{\mu^* \log T}{T_i -1 }} \right)^\frac{m_i}{T} \;\middle|\; G \right]   \tag{Lemma \ref{lem:suboptimal_arms}} \nonumber\\
& = (\mu^*)^{1-\frac{\widetilde{T}}{T}} \E\left[\prod_{i=1}^{\ell}\left(1 - 8\sqrt{\frac{ \log T}{\mu^*(T_i -1)}} \right)^\frac{m_i}{T}	\;\middle|\; G \right] \label{ineq:dunzo}
\end{align}
For the last equality, we use $\sum_{i=1}^\ell m_i = T - \widetilde{T}$. Now, recall that, under event $G$, each arm is pulled at least $\frac{\widetilde{T}}{2k} = \frac{8}{k} \sqrt{\frac{k T \log T}{\log k}}$ times in Phase {\rm I}. Hence, $T_i > \frac{\widetilde{T}}{2k}$ for each arm $i \in [\ell]$. Furthermore, since $\mu^* \geq \frac{32 \sqrt{k\log k\log T}}{\sqrt T}$, we have $8\sqrt{\frac{ \log T}{\mu^*(T_i - 1)}} \leq 8 \sqrt{\frac{1}{256}} = \frac{1}{2}$ for each $i \in [\ell]$. Therefore, we can apply Claim \ref{lem:binomial} to reduce the expected value in inequality (\ref{ineq:dunzo}) as follows 
\begin{align*}
\E \left[\prod_{i=1}^{\ell}\left(1 - 8\sqrt{\frac{ \log T}{\mu^*(T_i -1 )}} \right)^\frac{m_i}{T}\;\middle|\; G \right]  & \geq \E \left[\prod_{i=1}^{\ell}\left(1 - \frac{16 m_i}{T}\sqrt{\frac{ \log T}{\mu^*(T_i -1)}} \right)\;\middle|\; G \right]                    \\ 
& \geq \E \left[\prod_{i=1}^{\ell}\left(1 - \frac{16 }{T}\sqrt{\frac{ m_i \log T }{\mu^*}} \right)\;\middle|\; G \right] & \text{(since $T_i \geq m_i + 1$)}
\end{align*}
We can further simplify the above inequality by noting that $(1-x)(1-y) \geq 1 - x- y$, for all $x,y \geq 0$.
\begingroup
\allowdisplaybreaks
\begin{align}
\E\left[\prod_{i=1}^{\ell}\left(1 - \frac{16 }{T}\sqrt{\frac{ m_i \log T }{\mu^*}} \right)\;\middle|\; G \right] & \geq \E\left[1 -\sum_{i=1}^{\ell}\left(\frac{16 }{T}\sqrt{\frac{ m_i \log T }{\mu^*}} \right)\;\middle|\; G \right] \nonumber \\
& = 1 -\left(\frac{16 }{T}\sqrt{\frac{ \log T }{\mu^*}} \right) \E\left[ \sum_{i=1}^{\ell} \sqrt{m_i}\;\middle|\; G \right] \nonumber \\
& \geq 1 -\left(\frac{16 }{T}\sqrt{\frac{ \log T }{\mu^*}} \right) \E\left[ \sqrt{\ell} \ \sqrt{\sum_{i=1}^\ell m_i} \;\middle|\; G \right] \tag{Cauchy-Schwarz inequality} \nonumber  \\ 
&  \geq  1 -\left(\frac{16 }{T}\sqrt{\frac{ \log T }{\mu^*}} \right) \E\left[ \sqrt{\ell \ T} \;\middle|\; G \right] \nonumber  \tag{since $\sum_i m_i \leq T$} \nonumber \\ 
& = 1 -\left( 16 \sqrt{\frac{ \log T }{\mu^* T }} \right) \E\left[ \sqrt{\ell} \;\middle|\; G \right] \nonumber \\ 
& \geq 1 - \left( 16 \sqrt{\frac{ k \log T }{\mu^* T }} \right)  \label{ineq:limbo} %\tag{since $\ell \leq k$} 
\end{align}
\endgroup

%---------------------------
%\begingroup
%\allowdisplaybreaks
%\begin{align}
%\E\left[\prod_{i=1}^{\ell}\left(1 - \frac{16 }{T}\sqrt{\frac{ m_i \log T }{\mu^*}} \right)\;\middle|\; G \right] & \geq \E\left[1 -\sum_{i=1}^{\ell}\left(\frac{16 }{T}\sqrt{\frac{ m_i \log T }{\mu^*}} \right)\;\middle|\; G \right] % \\ &
%= 1 -\left(\frac{16 }{T}\sqrt{\frac{ \log T }{\mu^*}} \right) \E\left[ \sum_{i=1}^{\ell} \sqrt{m_i}\;\middle|\; G \right] \\
%& \underset{\footnotesize (\star)}{\geq} 1 -\left(\frac{16 }{T}\sqrt{\frac{ \log T }{\mu^*}} \right) \E\left[ \sqrt{\ell} \ \sqrt{\sum_{i=1}^\ell m_i} \;\middle|\; G \right] %\tag{Cauchy-Schwarz inequality} \\ & 
%\underset{\small (\dagger)}{\geq} 1 -\left(\frac{16 }{T}\sqrt{\frac{ \log T }{\mu^*}} \right) \E\left[ \sqrt{\ell \ T} \;\middle|\; G \right] \nonumber \\ % \tag{since $\sum_i m_i \leq T$} %  \\ & 
%& = 1 -\left( 16 \sqrt{\frac{ \log T }{\mu^* T }} \right) \E\left[ \sqrt{\ell} \;\middle|\; G \right] % \\ &
%\underset{(\#)}{\geq} 1 - \left( 16 \sqrt{\frac{ k \log T }{\mu^* T }} \right)  \label{ineq:limbo} %\tag{since $\ell \leq k$} 
%\end{align}
%\endgroup
%Here, bound $(\star)$ follows from the Cauchy-Schwarz inequality, $(\dagger)$ uses the fact that $\sum_i m_i \leq T$, and $(\#)$ holds since $\ell \leq k$. 
%-------------------
Here, the final inequality holds since $\ell \leq k$. Using (\ref{ineq:limbo}), along with inequalities (\ref{ineq:interim}), and (\ref{ineq:dunzo}), we obtain for Phase {\rm II}:
\begin{align}
\left(\prod_{t=\widetilde{T} + 1}^{T} \E\left[ \mu_{I_t} \right]\right)^\frac{1}{T} \geq (\mu^*)^{1-\frac{\widetilde{T}}{T}} \left ( 1 -16\sqrt{\frac{ k \log T }{\mu^*T}}  \right) \prob \{G\} \label{ineq:toomany}
\end{align}
Inequalities (\ref{ineq:toomany}) and (\ref{ineq:phaseone}) provide relevant bounds for Phase {\rm II} and Phase {\rm I}, respectively. Hence, for the Nash social welfare of the algorithm we have 
\begin{align*}
\left(\prod_{t=1}^{T} \E \left[ \mu_{I_t} \right] \right)^\frac{1}{T} & \geq \mu^*\left(1-{\frac{16\sqrt{k \log k \log T}}{\sqrt{T}}}\right) \left ( 1 -16\sqrt{\frac{ k \log T }{\mu^*T} } \right) \prob\{ G \}  \\
& \geq \mu^*\left(1-{\frac{16\sqrt{k \log k \log T}}{\sqrt{T}}}\right) \left ( 1 -16\sqrt{\frac{ k \log T }{\mu^*T} } \right) \left( 1 - \frac{4}{T} \right) \tag{via  Lemma \ref{lemma:good-event}} \\
& \geq \mu^* \left( 1 -\frac{ 32\sqrt{k \log k\log T} }{\sqrt{\mu^*T}} \right)\left(1-\frac{4}{T}\right)  \\
& \geq \mu^* - \frac{ 32\sqrt{\mu^* k\log k\log T} }{\sqrt{T}} - \frac{4\mu^*}{T}  \\
& \geq \mu^* - \frac{ 32\sqrt{ k\log k\log T} }{\sqrt{T}} - \frac{4}{T} \tag{since $\mu^* \leq 1$}
\end{align*}
%\arir{We can mention the 3rd and 4th inequality above follows from the fact that $(1-x)(1-y)\ge 1-x-y$ for all $x,y\ge0$.} 

Therefore, the Nash regret of the algorithm satisfies  
$\NRg_T = \mu^* - \left(\prod_{t=1}^{T} \E \left[ \mu_{I_t} \right] \right)^\frac{1}{T} \leq \frac{ 32\sqrt{ k\log k\log T} }{\sqrt{T}} + \frac{4}{T}$. Overall, we get that $\NRg_T = O \left( \sqrt{\frac{k\log k\log T}{T} } \right)$. The theorem stands proved. 

\noindent
\paragraph{Remark 1.} Algorithm \ref{algo:ncb} is different from standard UCB, in terms of both design and analysis. For instance, here the empirical means appear in the confidence width and impact the concentration bounds utilized in the analysis. 

\noindent
\paragraph{Remark 2.} As mentioned previously, the Nash regret guarantee obtained in Theorem \ref{thm:nucb} can be improved by a factor of $\sqrt{\log k}$. To highlight the key technical insights, in Algorithm \ref{algo:ncb} we fixed the number of rounds in Phase {\rm I} (to $\widetilde{T}$). However, with an adaptive approach, one can obtain a Nash regret of $O \left( \sqrt{\frac{k \log T}{T} } \right)$, as stated in Theorem \ref{theorem:improvedNashRegret} (Section \ref{section:improved-theorem}). A description of the modified algorithm (Algorithm \ref{algo:ncb:modified}) and the proof of Theorem \ref{theorem:improvedNashRegret} appear in Section \ref{section:modified-NCB}.

\section{Improved and Anytime Guarantees for Nash Regret}
\label{section:anytime}
This section provides an improved (over Theorem \ref{thm:nucb}) Nash regret guarantee for settings in which the horizon of play $T$ is known in advance. Furthermore, here we also develop a Nash regret minimization algorithm for settings in which the horizon of play $T$ is not known in advance. This anytime algorithm (Algorithm \ref{algo:ncb:anytime} in Section \ref{section:final-anytime}) builds upon the standard doubling trick. The algorithm starts with a guess for the time horizon, i.e., a {\em window} of length $W \in \mathbb{Z}_+$. Then, for $W$ rounds it either (i) performs uniform exploration, with probability $\frac{1}{W^2}$, or (ii) invokes Algorithm \ref{algo:ncb:modified} as a subroutine (with the remaining probability $\left( 1- \frac{1}{W^2}\right)$). This execution for $W$ rounds completes one {\em epoch} of Algorithm \ref{algo:ncb:anytime}. In the subsequent epochs, the algorithm doubles the window length and repeats the same procedure till the end of the time horizon, i.e., till a stop signal is received. 

In Section \ref{section:modified-NCB}, we detail Algorithm \ref{algo:ncb:modified}, which is called as a subroutine in our anytime algorithm and it takes as input a (guess) window length $W$. We will also prove that if Algorithm \ref{algo:ncb:modified} is in fact executed with the actual horizon of play (i.e., executed with $W = T$), then it achieves a Nash regret of  
$O \left( \sqrt{ \frac{k  \log T}{T} } \right)$ (Theorem \ref{theorem:improvedNashRegret}); this provides the improvement mentioned above.

\subsection{Modified Nash Confidence Bound Algorithm}
\label{section:modified-NCB}

Algorithm \ref{algo:ncb:modified} consists of two phases: 
\begin{itemize}[leftmargin=20pt]
\item In Phase \textcal{1}, the algorithm performs uniform exploration until the sum of rewards  for any arm $i$ exceeds a certain threshold.\footnote{Note that this is in contrast to Algorithm \ref{algo:ncb}, in which uniform exploration was performed for a fixed number of rounds.} Specifically, with $n_i$ denoting the number of times an arm $i$ has been pulled so far and $X_{i,s}$ denoting the reward observed for arm $i$ when it is pulled the $s$th time, the exploration continues as long as $\max_i \sum_{s=1}^{n_i} X_{i,s} \leq 420c^2\log W$; here $c$ is an absolute constant. Note that this stopping criterion is equivalent to $n_i \ \widehat{\mu}_i \leq 420c^2\log W$, where $\widehat{\mu}_i$ is the empirical mean for arm $i$.
\item In Phase \textcal{2}, the algorithm associates with each arm $i$ the following Nash confined bound value, $\overline{\NCB}_i$, and selects the arm for which that value is the maximized.\footnote{$\overline{\NCB}_i$ differs from $\NCB_i$ (see equation (\ref{eq:ncb})) in terms of constants. Specifically, the parameter $c$ is an absolute constant and is fixed in the algorithm.} 
\begin{align}
    \overline{\NCB}_i & \coloneqq  \widehat{\mu_i} + 2c \sqrt{\frac{2\widehat{\mu_i} \log W}{n_i}} \label{eqn:ncb:anytime}
\end{align}
\end{itemize}

\begin{algorithm}[ht!]
    \caption{Modified NCB }
    \label{algo:ncb:modified}
    \noindent
    \textbf{Input:} Number of arms $k$ and time window $W$.\\
    \vspace{-10pt}
    \begin{algorithmic}[1]
        \STATE Initialize empirical means $\widehat{\mu}_i = 0$ and counts $n_i = 0$ for all $i \in [k]$.
        \STATE Initialize round index $t=1$ and set parameter $c = 3$.
        \\ \texttt{Phase \textcal{1}}
        \WHILE{ $\max_{i}{n_i \ \widehat{\mu}_i} \leq 420c^2\log W$ and $t \leq W$} \label{step:PhaseOneAlgTwo}
        \STATE Select $I_t$ uniformly at random from $[k]$. Pull arm $I_t$ and observe reward $X_t$.
        \STATE For arm $I_t$, increment the count $n_{I_t}$ (by one) and update the empirical mean $\widehat{\mu}_{I_t}$.
        \STATE Update $t\gets t + 1$. 
        \ENDWHILE
        \\ \texttt{Phase \textcal{2}}
        \WHILE{ $t \leq W$}
        \STATE Pull the arm $I_t$ with the highest Nash confidence bound, i.e., $I_t = \argmax_{i \in [k]} \ \overline{\NCB}_i$.
        \STATE Observe reward $X_t$ and update the Nash confidence bound for $I_t$ (see equation (\ref{eqn:ncb:anytime})).
        \STATE Update $t\gets t + 1$.
        \ENDWHILE
    \end{algorithmic}
\end{algorithm}

Recall that Algorithm \ref{algo:ncb:modified} is called as a subroutine by our anytime algorithm (Algorithm \ref{algo:ncb:anytime}) with a time window (guess) $W$ as input. For the purposes of analysis (see Section \ref{section:final-anytime} for details), it suffices to obtain guarantees for Algorithm \ref{algo:ncb:modified} when $W$ is at least $\sqrt{T}$. Hence, this section analyzes the algorithm with the assumption\footnote{We will also assume that the optimal mean $\mu^*$ is sufficiently greater than $\frac{1}{\sqrt{T}}$. In the complementary case, the stated Nash regret bound follows directly.}  that $\sqrt{T}\leq W \leq T$.

We first define a ``good'' event $E$ and show that it holds with high probability; our analysis is based on conditioning on $E$. In particular, we will first define three sub-events $E_{1}, E_{2}, E_{3}$ and set $ E \coloneqq E_{1} \cap E_{2} \cap E_{3}$. For specifying these events, write $\widehat{\mu}_{i, s}$ to denote the empirical mean of arm $i$'s rewards, based on the first $s$ samples (of $i$). Also, define 
\begin{align}
\Sample \coloneqq \frac{c^2 \log T}{ \mu^*} \label{eqn:tildeW}
\end{align}

\begin{itemize}[leftmargin=20pt]
    \itemsep0em
    \item[$E_{1}$:] For any number of rounds $r \geq128 k  \Sample $ and any arm $i\in [k]$, during the first $r$ rounds of uniform sampling, arm $i$ is sampled at least $\frac{r}{2k}$ times and at most $\frac{3r}{2k}$ times.
    \item[$E_{2}$:] For all arms $i \in [k]$, with $ \mu_i > \frac{\mu^*}{64} $, and all sample counts $64 \Sample \leq s \leq T$ we have $\left|\mu_{i}-\widehat{\mu}_{i,s} \right| \leq c \sqrt{\frac{\mu_{i} \log T}{s}}$.
    \item[$E_{3}$:] For all arms $j \in [k]$, with $ \mu_j \leq \frac{\mu^*}{64}$, and all sample counts $64 \Sample \leq s \leq T$,  we have $\widehat{\mu}_{j,s} < \frac{\mu^*}{32}$.
\end{itemize}

Note that these events address a single execution of Algorithm \ref{algo:ncb:modified} and are expressed in the canonical bandit model \cite{lattimore2020bandit}. Furthermore, they are expressed using the overall horizon of play $T$. This, in particular, ensures that, irrespective of $W$, they are well specified.  

We first obtain a probability bound for the event $E$; the proof of the following lemma is deferred to Appendix \ref{appendix:proof-of-E}. 

\begin{lemma}
\label{lemma:modifiedgoodeventpr}
$\prob \left\{ E \right\} \geq 1 - \frac{4}{T}$.
\end{lemma}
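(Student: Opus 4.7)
The strategy is a union bound: $\prob\{E^c\} \leq \prob\{E_1^c\} + \prob\{E_2^c\} + \prob\{E_3^c\}$, showing each summand is $O(1/T)$, mirroring the structure of Lemma \ref{lemma:good-event}. Since all three events are stated in the canonical bandit model (a $k \times T$ reward table whose row $i$ has $T$ i.i.d.\ samples, with $\widehat{\mu}_{i,s}$ equal to the average of the first $s$ entries of row $i$), each sub-event reduces to a standard concentration statement about i.i.d.\ sums, and the algorithm's selection rule plays no role in the probability calculation. For $E_1$, under uniform exploration arm $i$'s count in $r$ rounds is $\mathrm{Binomial}(r,1/k)$ with mean $r/k$, so a multiplicative Chernoff bound with deviation $\tfrac{1}{2}$ gives
\begin{align*}
\prob\Big\{\bigl|\text{count}_i - r/k\bigr| \geq r/(2k)\Big\} \;\leq\; 2\exp\!\left(-\frac{r}{12k}\right).
\end{align*}
The hypothesis $r \geq 128 k \Sample = 128 k c^2 \log T /\mu^*$ gives $r/(12k) \geq \Omega(c^2 \log T)$, rendering each tail at most $T^{-\Omega(c^2)}$; a union bound over the $k \leq T$ arms and the at most $T$ relevant values of $r$ yields $\prob\{E_1^c\} \leq O(1/T)$.

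For $E_2$ and $E_3$ I would invoke Bernstein's inequality, using that $[0,1]$-bounded rewards with mean $\mu_i$ have variance at most $\mu_i$:
\begin{align*}
\prob\Big\{\bigl|\widehat{\mu}_{i,s}-\mu_i\bigr| \geq \epsilon\Big\} \;\leq\; 2\exp\!\left(-\frac{s\,\epsilon^2}{2\mu_i + 2\epsilon/3}\right).
\end{align*}
For $E_2$, setting $\epsilon = c\sqrt{\mu_i \log T /s}$ and combining $\mu_i > \mu^*/64$ with $s \geq 64 \Sample = 64 c^2 \log T /\mu^*$ yields $c\sqrt{\mu_i \log T /s} \leq \mu_i$, so the Bernstein denominator is $O(\mu_i)$ and the exponent simplifies to $\Omega(c^2 \log T)$. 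With $c = 3$ this is at most $2/T^3$ per $(i,s)$ pair, and a union bound over $kT \leq T^2$ pairs gives $\prob\{E_2^c\} \leq O(1/T)$. For $E_3$, the required deviation is $\epsilon = \mu^*/32 - \mu_j \geq \mu^*/64$; since $\mu_j \leq \mu^*/64 \leq \epsilon$, the denominator $2\mu_j + 2\epsilon/3 \leq 8\epsilon/3$ is dominated by the linear term, so the exponent is $\Omega(s\epsilon) \geq \Omega(c^2 \log T)$ by the same threshold condition $s \geq 64 \Sample$. The identical union-bound argument then gives $\prob\{E_3^c\} \leq O(1/T)$.

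The only subtlety, and the reason for the precise constants in $\Sample$, the threshold $64 \Sample$, and the choice $c = 3$, is calibrating Bernstein so that on $E_2$ the variance term $2\mu_i$ dominates $2\epsilon/3$ while on $E_3$ the linear term $2\epsilon/3$ dominates $2\mu_j$; these two regimes are exactly the crossover of Bernstein, and both are reached precisely when $s \geq 64\Sample$. Absent this calibration, the confidence width in $\overline{\NCB}_i$ would degrade from $\sqrt{\mu_i \log T /n_i}$ to $\sqrt{\log T /n_i}$ and the $\sqrt{\log k}$ improvement over Theorem \ref{thm:nucb} would be lost. Combining the three bounds with a final union bound and choosing the constants so that each of the three failure probabilities is at most $4/(3T)$ yields $\prob\{E\} \geq 1 - 4/T$ as claimed.
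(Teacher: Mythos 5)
Your proposal is correct and follows essentially the same route as the paper: the same three-way union bound over $E_1^c, E_2^c, E_3^c$, the same per-pair target of $T^{-3}$, and the same union over the $kT$ (arm, count) pairs; the only cosmetic difference is that you invoke Bernstein's inequality with the variance bound $\sigma^2 \le \mu_i$ for $[0,1]$-valued rewards, where the paper uses the equivalent multiplicative Chernoff--Hoeffding bounds (Lemmas \ref{lemma:hoeffding} and \ref{lemma:hoeff_small_mean}). One peripheral remark in your closing paragraph is off --- the $\sqrt{\log k}$ improvement over Theorem \ref{thm:nucb} comes from the adaptive stopping rule in Phase \textcal{1} of Algorithm \ref{algo:ncb:modified} rather than from the $\mu_i$-dependent confidence width, which Algorithm \ref{algo:ncb} already uses --- but this does not affect the validity of your proof.
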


The next lemma shows that, under event $E$,  the total observed reward for any arm is low until certain number of samples. In the final analysis, this result will enable us to bound (under event $E$) the number of rounds in Phase \textcal{1} of Algorithm \ref{algo:ncb:modified}. The proofs of Lemmas \ref{tau:lower}, \ref{tau:upper}, and \ref{tau:anytime} appear in Appendix \ref{appendix:modifiedncb-supporting-lemmas}.
 
\begin{restatable}{lemma}{LemmaTauLower}
\label{tau:lower}
 Under the event $E$, for any arm $i$ and any sample count $n \leq 192  \Sample$, we have $n \ \widehat{\mu}_{i,n} <  210c^2\log T$.
\end{restatable}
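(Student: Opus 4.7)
The plan is to bound the partial reward sum $n \widehat{\mu}_{i,n} = \sum_{s=1}^{n} X_{i,s}$ by first handling the sample range where event $E$ provides direct concentration control, and then extending to smaller sample counts via monotonicity.

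First, for $n$ in the ``high'' range $64 \Sample \leq n \leq 192 \Sample$, I would split on the magnitude of $\mu_i$. If $\mu_i > \mu^*/64$, event $E_{2}$ gives $\widehat{\mu}_{i,n} \leq \mu_i + c \sqrt{\mu_i \log T / n}$, so $n \widehat{\mu}_{i,n} \leq n \mu_i + c\sqrt{n \mu_i \log T}$. Using $\mu_i \leq \mu^*$ together with $n \leq 192 \Sample = 192 c^2 \log T / \mu^*$, the first term is at most $192 c^2 \log T$ and the second is at most $\sqrt{192}\, c^2 \log T$, so the sum stays below $(192 + \sqrt{192}) c^2 \log T < 210 c^2 \log T$. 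In the complementary case $\mu_i \leq \mu^*/64$, event $E_{3}$ yields $\widehat{\mu}_{i,n} < \mu^*/32$, and therefore $n \widehat{\mu}_{i,n} < (192 \Sample)(\mu^*/32) = 6 c^2 \log T$, which is well below the target as well.

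For the remaining ``low'' range $n < 64 \Sample$, the key observation is that $n \widehat{\mu}_{i,n}$ is monotone non-decreasing in $n$, since each observed reward $X_{i,s}$ is non-negative. Hence for any such $n$ I can pick $n' = \lceil 64 \Sample \rceil$, which lies in $[64 \Sample, 192 \Sample]$ because $\Sample = c^2 \log T / \mu^* \geq 1$, and conclude $n \widehat{\mu}_{i,n} \leq n' \widehat{\mu}_{i,n'}$; the bound already established at $n'$ then carries over. For this step to be legitimate one needs $n' \leq T$ so that events $E_{2}, E_{3}$ are well specified at $n'$; this is guaranteed by the standing assumption $\mu^* \geq 1/\sqrt{T}$ (stated in the footnote before the lemma) together with $T$ being moderately large, since then $\Sample \leq c^2 \sqrt{T}\log T$ and $64\Sample + 1 \leq T$.

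The only real obstacle is the small-$n$ regime, where no concentration bound is directly available; the non-negativity of the rewards lets me reduce that regime to the concentration regime at essentially no cost. Everything else is a one-line application of $E_{2}$ or $E_{3}$ combined with the inequalities $\mu_i \leq \mu^*$ and $n \leq 192 \Sample$.
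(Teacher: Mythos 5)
Your proposal is correct and uses essentially the same two ingredients as the paper's proof: the monotonicity of the partial reward sum $n\,\widehat{\mu}_{i,n}$ in $n$, and a case split on $\mu_i$ versus $\mu^*/64$ handled by $E_2$ and $E_3$ respectively. The paper streamlines this slightly by reducing \emph{all} $n \leq 192\Sample$ to the single endpoint $N = 192\Sample$ via monotonicity and applying the concentration bounds only there, whereas you apply them across the whole range $[64\Sample, 192\Sample]$ and reserve monotonicity for $n < 64\Sample$; the arithmetic and conclusions are the same.
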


Recall that $i^*$ denotes the optimal arm, i.e., $i^*=\arg\max_{i\in[k]}\mu_i$. The following lemma shows that, under event $E$ and after certain number of samples, the total observed reward for $i^*$ is sufficiently large. 
\begin{restatable}{lemma}{LemmaTauUpper}
\label{tau:upper}
Under the event $E$, for any sample count $n \geq 484  \Sample$, we have $n \ \widehat{\mu}_{i^*, n} \geq  462c^2\log T$.
\end{restatable}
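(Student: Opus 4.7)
The plan is to apply event $E_{2}$ directly to the optimal arm $i^*$. Since $\mu_{i^*} = \mu^* > \mu^*/64$, arm $i^*$ qualifies as a ``high-mean'' arm in the sense required by $E_{2}$. Moreover, the hypothesis $n \geq 484 \Sample$ implies $n \geq 64 \Sample$, so the concentration bound in $E_{2}$ applies at sample count $n$ (the upper constraint $n \leq T$ is inherited from the fact that any execution of Algorithm \ref{algo:ncb:modified} uses at most $W \leq T$ pulls). This gives
\[
\widehat{\mu}_{i^*, n} \;\geq\; \mu^* - c\sqrt{\frac{\mu^* \log T}{n}}.
\]

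Next, I would quantify the correction term using the definition $\Sample = c^2 \log T / \mu^*$. The hypothesis $n \geq 484 \Sample$ rearranges to $c^2 \log T / n \leq \mu^*/484$, hence
\[
c\sqrt{\frac{\mu^* \log T}{n}} \;=\; \sqrt{\frac{c^2 \mu^* \log T}{n}} \;\leq\; \sqrt{\frac{(\mu^*)^2}{484}} \;=\; \frac{\mu^*}{22},
\]
which yields $\widehat{\mu}_{i^*,n} \geq \tfrac{21}{22}\mu^*$. Multiplying through by $n$ and invoking the lower bound $n \geq 484 c^2 \log T / \mu^*$ once more gives
\[
n \ \widehat{\mu}_{i^*, n} \;\geq\; \frac{484 \ c^2 \log T}{\mu^*} \cdot \frac{21 \mu^*}{22} \;=\; 462 \ c^2 \log T,
\]
as desired.

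There is no substantive obstacle: once event $E_{2}$ is invoked, everything reduces to a clean algebraic substitution. The only thing to double-check is that the numerical constants line up (indeed $484 \cdot 21 / 22 = 22 \cdot 21 = 462$) and that the choice of the threshold $484\Sample$ in the hypothesis is precisely what is needed to absorb the multiplicative slack between $\mu^*$ and $\widehat{\mu}_{i^*,n}$ while still leaving the desired $462 c^2 \log T$ on the right-hand side.
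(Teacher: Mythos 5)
Your proof is correct and follows essentially the same route as the paper: invoke event $E_2$ for the optimal arm and translate the hypothesis $n \geq 484\Sample$ into the slack $c\sqrt{\mu^*\log T/n} \leq \mu^*/22$, giving $\widehat{\mu}_{i^*,n} \geq \tfrac{21}{22}\mu^*$ and hence the stated bound. The only (immaterial) difference is that the paper applies $E_2$ once at the single count $M = 484\Sample$ and then uses monotonicity of the cumulative reward $n\,\widehat{\mu}_{i^*,n}$, whereas you apply $E_2$ at every $n \geq 484\Sample$ directly; both are valid since $E_2$ covers all counts in $[64\Sample, T]$.
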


The lemma below will help us in analyzing the termination of the first while-loop (Line \ref{step:PhaseOneAlgTwo}) of Algorithm \ref{algo:ncb:modified}. Also, recall that in this section we analyze Algorithm \ref{algo:ncb:modified} with the assumption that $\sqrt{T}\leq W \leq T$.  

\begin{restatable}{lemma}{LemmaTauAnytime}
\label{tau:anytime}
Assume that $\sqrt{T} \leq W \leq T$. Also, let random variable $\tau$ denote the number of rounds of uniform sampling at which the sum of observed rewards for any arm exceeds $420 c^2\log W$ (i.e., only after $\tau$ rounds of uniform sampling we have $\max_i \ n_i \widehat{\mu}_i >420 c^2\log W$). Then, under event $E$, the following bounds hold 
\begin{align*}
128 \ k \Sample \leq \tau \leq 968 \ k  \Sample. 
\end{align*} 
\end{restatable}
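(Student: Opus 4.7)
The plan is to derive the two bounds on $\tau$ by conditioning on the good event $E$ and pairing the sampling-balance guarantee of $E_{1}$ with the reward-sum bounds of Lemmas~\ref{tau:lower} and~\ref{tau:upper}. The assumption $\sqrt{T}\leq W\leq T$ enters only through the chain $\log W \leq \log T \leq 2\log W$, which ensures that the termination threshold $420c^2\log W$ of the first while-loop sits strictly between the constants $210c^2\log T$ appearing in Lemma~\ref{tau:lower} and $462c^2\log T$ appearing in Lemma~\ref{tau:upper}. Once $E$ is fixed, the remainder of the proof is purely deterministic arithmetic.

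For the lower bound $\tau \geq 128\,k\,\Sample$, I would consider the (hypothetical) state of the uniform-sampling process at round $r = 128\,k\,\Sample$. Since $r \geq 128\,k\,\Sample$, the concentration from $E_{1}$ applies and yields $n_i \leq 3r/(2k) = 192\,\Sample$ for every arm $i$. Lemma~\ref{tau:lower} then gives $n_i\,\widehat{\mu}_{i,n_i} < 210\,c^2\log T \leq 420\,c^2\log W$, with the last inequality using $W \geq \sqrt{T}$. Because the cumulative reward sums $n_i\,\widehat{\mu}_{i,n_i}$ are nondecreasing in the round count, the stopping criterion $\max_i n_i\,\widehat{\mu}_i > 420\,c^2\log W$ has not fired at any round $r' \leq r$ either, so $\tau \geq 128\,k\,\Sample$.

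For the upper bound $\tau \leq 968\,k\,\Sample$ I would argue by contradiction: suppose $\tau > 968\,k\,\Sample$, so that after $r = 968\,k\,\Sample$ rounds of uniform sampling no arm has triggered the reward condition. Event $E_{1}$ now yields $n_{i^*} \geq r/(2k) = 484\,\Sample$ for the optimal arm $i^*$, and Lemma~\ref{tau:upper} gives $n_{i^*}\,\widehat{\mu}_{i^*,n_{i^*}} \geq 462\,c^2\log T \geq 462\,c^2\log W > 420\,c^2\log W$, directly contradicting the supposition that the reward condition has not yet fired. Hence $\tau \leq 968\,k\,\Sample$. A minor side-remark: to keep the two bounds meaningful inside an actual execution of Algorithm~\ref{algo:ncb:modified} (whose while-loop also imposes $t \leq W$), I would invoke the standing assumption that $\mu^*$ is sufficiently larger than $1/\sqrt{T}$ to ensure $968\,k\,\Sample = 968\,k\,c^2\log T/\mu^* \leq \sqrt{T} \leq W$, so the horizon cap does not bind before $\tau$.

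The main (though still routine) obstacle is constant bookkeeping: one must check that the sample-count thresholds $192\,\Sample$ and $484\,\Sample$ inside Lemmas~\ref{tau:lower} and~\ref{tau:upper} align with the factors $3/(2k)$ and $1/(2k)$ coming from $E_{1}$ to deliver exactly $128\,k\,\Sample$ and $968\,k\,\Sample$, while simultaneously $210\,c^2\log T$ stays below and $462\,c^2\log T$ above $420\,c^2\log W$ throughout the range $\sqrt{T}\leq W \leq T$. Beyond this reconciliation of constants there is no additional probabilistic content, since all randomness is absorbed in the conditioning event $E$ whose probability is already controlled by Lemma~\ref{lemma:modifiedgoodeventpr}.
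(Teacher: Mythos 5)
Your proposal is correct and follows essentially the same route as the paper: apply $E_1$ at the two round counts $128\,k\,\Sample$ and $968\,k\,\Sample$ to bound the per-arm sample counts by $192\,\Sample$ (from above) and $484\,\Sample$ (from below), then invoke Lemmas~\ref{tau:lower} and~\ref{tau:upper} together with $\log T \leq 2\log W \leq 2\log T$ to place the threshold $420c^2\log W$ strictly between $210c^2\log T$ and $462c^2\log T$. The monotonicity remark for the lower bound and the contradiction framing for the upper bound are cosmetic variations on the paper's argument.
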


As mentioned previously, the events $E_1$, $E_2$, and $E_3$ are defined under the canonical bandit model. Hence, Lemmas \ref{tau:lower}, \ref{tau:upper}, and \ref{tau:anytime} also conform to this setup. 

Next, we will show that the following key guarantees (events) hold under the good event $E$: 
\begin{itemize}[leftmargin=20pt] 
\item Lemma \ref{lem:emp:anytime}: The Nash confidence bound of the optimal arm $i^*$ is at least its true mean, $\mu^*$, throughout Phase \textcal{2} of Algorithm \ref{algo:ncb:modified}.
\item Lemma \ref{lem:bad_arms:anytime}: Arms $j$ with sufficiently small means (in particular, $\mu_j \leq \frac{\mu^*}{64}$) are never pulled in Phase \textcal{2}. 
\item Lemma \ref{lem:suboptimal_arms:anytime}: Arms $i$ that are pulled many times in Phase \textcal{2} have means $\mu_i$ close to the optimal $\mu^*$. Hence, such arms $i$ do not significantly increase the Nash regret.
\end{itemize}

\begin{restatable}{lemma}{LemmaNCBopt:anytime}
    \label{lem:emp:anytime}
    Let $\overline{\NCB}_{i^*,t}$ be the Nash confidence bound of the optimal arm $i^*$ at any round $t$ in Phase \textcal{2}. Assume that the good event $E$ holds and $\sqrt{T} \leq W \leq T$. Then, we have $\overline{\NCB}_{i^*,t} \geq \mu^*$.
\end{restatable}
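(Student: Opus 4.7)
}
The plan is to mirror the argument used for Lemma \ref{lem:emp}, but with two twists: the stopping criterion of Phase \textcal{1} is data-dependent (so we must invoke Lemma \ref{tau:anytime} to get a lower bound on the exploration count), and the confidence width uses $\log W$ while the $E_2$ concentration uses $\log T$ (so we need $W\geq\sqrt{T}$ to close the gap). Concretely, I would first pin down the sample count of $i^*$ at the start of Phase \textcal{2}, then apply $E_2$ to get a two-sided concentration of $\widehat{\mu}_{i^*}$ around $\mu^*$, and finally show the confidence width dominates the remaining shortfall.

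\textbf{Step 1 (sample count of $i^*$).} By Lemma \ref{tau:anytime}, under event $E$, the length of Phase \textcal{1} satisfies $\tau\geq 128 k \Sample$. Applying event $E_1$ with $r=\tau$ gives that the optimal arm has been pulled at least $\tau/(2k)\geq 64\Sample$ times by the end of Phase \textcal{1}, and this bound persists throughout Phase \textcal{2}. Hence for every round $t$ in Phase \textcal{2}, the current count $s\coloneqq n_{i^*}\geq 64\Sample=64c^2\log T/\mu^*$.

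\textbf{Step 2 (concentration of $\widehat{\mu}_{i^*}$).} Since $\mu^*>\mu^*/64$, event $E_2$ applies to arm $i^*$ for every $s\in[64\Sample,T]$, yielding
\[
\mu^*-c\sqrt{\frac{\mu^*\log T}{s}}\ \leq\ \widehat{\mu}_{i^*,s}\ \leq\ \mu^*+c\sqrt{\frac{\mu^*\log T}{s}}.
\]
Plugging in $s\geq 64c^2\log T/\mu^*$ shows $c\sqrt{\mu^*\log T/s}\leq \mu^*/8$, so in particular $\widehat{\mu}_{i^*,s}\geq 7\mu^*/8$.

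\textbf{Step 3 (width dominates the gap).} If $\widehat{\mu}_{i^*,s}\geq\mu^*$ we are done since $\overline{\NCB}_{i^*,t}\geq\widehat{\mu}_{i^*,s}$. Otherwise set $\delta\coloneqq\mu^*-\widehat{\mu}_{i^*,s}\leq c\sqrt{\mu^*\log T/s}$. It suffices to show the confidence width satisfies $2c\sqrt{2\widehat{\mu}_{i^*,s}\log W/s}\geq \delta$. Squaring, this reduces to $8\widehat{\mu}_{i^*,s}\log W\geq \mu^*\log T$. Using $\widehat{\mu}_{i^*,s}\geq 7\mu^*/8$ from Step 2, the left-hand side is at least $7\mu^*\log W$, and since $W\geq\sqrt{T}$ gives $\log W\geq \tfrac{1}{2}\log T$, we obtain $7\mu^*\log W\geq \tfrac{7}{2}\mu^*\log T\geq \mu^*\log T$, as required.

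The main thing to be careful about is Step 3: this is precisely where the assumption $W\geq\sqrt{T}$ is essential (the confidence width is calibrated to $\log W$, but the concentration from $E_2$ is stated in terms of $\log T$, so one needs the two logarithms to be comparable up to a constant). Constants are generously loose, so the argument should go through cleanly; nothing else here requires new ideas beyond what was already used for Lemma \ref{lem:emp}.
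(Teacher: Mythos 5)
Your proposal is correct and follows essentially the same route as the paper's proof: Lemma \ref{tau:anytime} plus event $E_1$ to get $n_{i^*}\geq 64\Sample$, event $E_2$ to get $\widehat{\mu}_{i^*}\geq \frac{7}{8}\mu^*$, and the comparison $2\log W\geq \log T$ to show the width absorbs the deviation. Your Step 3 merely squares the same final inequality that the paper verifies term-by-term, so there is no substantive difference.
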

\begin{proof}
    Fix any round $t$ in Phase \textcal{2} and write $n_{i^*}$ to denote the number of times the optimal arm $i^*$ has been pulled before that round. Also, let $\widehat{\mu}^*$ denote the empirical mean of arm $i^*$ at round $t$. Hence, by definition, at this round the Nash confidence bound ${\overline{\NCB}_{i^*,t}} \coloneqq  \widehat{\mu}^* + 2c \sqrt{\frac{2\widehat{\mu}^* \log W}{n_{i^*}}}$. 
    
Since event $E$ holds, Lemma \ref{tau:anytime} implies that Algorithm \ref{algo:ncb:modified} must have executed at least $128 k \Sample$ rounds in Phase \textcal{1} (before switching to Phase \textcal{2}): the termination condition of the first while-loop (Line \ref{step:PhaseOneAlgTwo}) is realized only after $128 k \Sample$ rounds.

This lower bound on uniform exploration and event $E_1$ give us $n_{i^*} \geq 64 \Sample$. Therefore, the product $n_{i^*} \mu^* \geq 64c^2\log T$. This inequality enables us to express the empirical mean of the optimal arm as follows 
\begin{align}
\widehat{\mu}^* & \geq \mu^* - c \sqrt{\frac{\mu^* \log T}{n_{i^*}}} \tag{since $n_{i^*} \geq 64 \Sample$ and event $E_2$ holds}    \\
                        & = \mu^* - c \mu^* \sqrt{\frac{\log T}{\mu^* n_{i^*}}} \nonumber \\
                        & \geq \mu^* - c \mu^* \sqrt{\frac{1}{64c^2}}                  \tag{since  $\mu^* \ n_{i^*} \geq 64c^2\log T$}   \\
                        & = \frac{7}{8} \mu^*. \nonumber
    \end{align}
    Therefore,
    \begin{align*}
        {\overline{\NCB}_{i^*,t}} & = \widehat{\mu}^* + 2c \sqrt{\frac{2\widehat{\mu}^* \log W}{n_{i^*}}}                                  \\
	& \geq \widehat{\mu}^* + 2c \sqrt{\frac{\widehat{\mu}^* \log T}{n_{i^*}}} \tag{since $2\log W\geq \log T$}\\
                       & \geq  \mu^* - c \sqrt{\frac{\mu^* \log T}{n_{i^*}}} + 2c \sqrt{\frac{\widehat{\mu}^* \log T}{n_{i^*}}} \tag{due to the event $E_{2}$}\\
                       & \geq  \mu^* - c \sqrt{\frac{\mu^* \log T}{n_{i^*}}}  + 2c \sqrt{\frac{7\mu^* \log T}{8n_{i^*}}}     \tag{since $ \widehat{\mu}^*\geq  \frac{7}{8} \mu^*$}   \\
                       & \geq \mu^*
    \end{align*}
    The lemma stands proved.
\end{proof}

\begin{restatable}{lemma}{LemmaBadArms:anytime}
\label{lem:bad_arms:anytime}
Assume that the good event $E$ holds and $\sqrt{T} \leq W \leq T$. Then, any arm $j$, with mean $\mu_j \leq \frac{\mu^*}{64}$, is never pulled in all of Phase \textcal{2}.
\end{restatable}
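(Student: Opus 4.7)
The plan is to use Lemma \ref{lem:emp:anytime} to get a lower bound on the Nash confidence bound of the optimal arm, and then show that the bound for any low-mean arm $j$ stays strictly below $\mu^*$ throughout Phase \textcal{2}. Since the selection rule in Phase \textcal{2} picks the arm with the largest $\overline{\NCB}$, this will imply that $j$ is never chosen.

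First, I would establish the starting-of-Phase-\textcal{2} state for arm $j$. Under the good event $E$, Lemma \ref{tau:anytime} gives that Phase \textcal{1} lasted at least $\tau \geq 128 k \Sample$ rounds. Combined with event $E_1$, this yields $n_j \geq \tau/(2k) \geq 64 \Sample$ at the moment Phase \textcal{2} begins. Since $\mu_j \leq \mu^*/64$ and $n_j \geq 64 \Sample$, event $E_3$ implies $\widehat{\mu}_j < \mu^*/32$ at that moment.

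Next, I would plug these bounds into the definition of $\overline{\NCB}_j$. Using $\Sample = c^2 \log T / \mu^*$ and $\log W \leq \log T$ (which is where the hypothesis $W \leq T$ enters), I get
\begin{align*}
\overline{\NCB}_j \;=\; \widehat{\mu}_j + 2c\sqrt{\frac{2 \widehat{\mu}_j \log W}{n_j}} \;<\; \frac{\mu^*}{32} + 2c \sqrt{\frac{2 \cdot (\mu^*/32) \cdot \log T}{64 c^2 \log T/\mu^*}} \;=\; \frac{\mu^*}{32} + \frac{\mu^*}{16} \;<\; \mu^*.
\end{align*}
By Lemma \ref{lem:emp:anytime}, the optimal arm satisfies $\overline{\NCB}_{i^*,t} \geq \mu^*$ at every round $t$ of Phase \textcal{2}, so at the first round of Phase \textcal{2} the algorithm does not pull $j$.

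Finally, I would close the argument by induction on the rounds of Phase \textcal{2}: if arm $j$ has not been pulled in any round up to some round $t$ of Phase \textcal{2}, then $n_j$ and $\widehat{\mu}_j$ remain at their end-of-Phase-\textcal{1} values, so the same calculation gives $\overline{\NCB}_j < \mu^* \leq \overline{\NCB}_{i^*, t}$, and thus $j$ is not selected at round $t$ either. I do not expect a real obstacle here: once the two ingredients (a lower bound on $n_j$ via Lemmas \ref{tau:anytime} and $E_1$, and the empirical-mean upper bound via $E_3$) are in hand, the numerical bound on $\overline{\NCB}_j$ is a routine computation whose constants $(32, 64, 2c, \ldots)$ are chosen precisely so that the inequality clears $\mu^*$ with slack. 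The only subtle point to flag is the use of $\log W \leq \log T$, which is where the assumption $W \leq T$ is invoked.
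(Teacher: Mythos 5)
Your proposal is correct and follows essentially the same route as the paper: lower-bound $n_j$ via Lemma \ref{tau:anytime} and event $E_1$, upper-bound $\widehat{\mu}_j$ via $E_3$, plug into $\overline{\NCB}_j$ to get $\frac{\mu^*}{32} + \frac{\mu^*}{16} < \mu^*$, and compare against Lemma \ref{lem:emp:anytime}. The only cosmetic difference is that your closing induction is unnecessary in the paper's treatment, since $E_3$ bounds $\widehat{\mu}_{j,s}$ for \emph{all} sample counts $s \geq 64\Sample$ and the width term only shrinks as $n_j$ grows, but your version is equally valid.
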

\begin{proof}
Fix any arm $j$ with mean $\mu_j \leq \frac{\mu^*}{64}$. Let $r_j$ denote the number of times arm $j$ is pulled in Phase \textcal{1}. 

We will fist show that $r_j \geq 64 \Sample$. Since event $E$ holds, Lemma \ref{tau:anytime} ensures that Algorithm \ref{algo:ncb:modified} must have executed at least $128 k \Sample$ rounds in Phase \textcal{1} (before switching to Phase \textcal{2}). This lower bound on uniform exploration and event $E_1$ give us $r_j \geq 64 \Sample$.

Furthermore, event $E_3$ and the fact that $r_j \geq 64 \Sample$ imply that (throughout Phase \textcal{2}) the empirical mean of arm $j$ satisfies $\widehat{\mu}_j \leq \frac{\mu^*}{32}$.  

For any round $t$ in Phase \textcal{2}, write $\overline{\NCB}_{j,t}$ to denote the Nash confidence bound of arm $j$ at round $t$. Below we show that the $\overline{\NCB}_{j, t} $ is strictly less than $\overline{\NCB}_{i^*,t}$ and, hence, arm $j$ is not even pulled once in all of Phase \textcal{2}.
\begingroup
\allowdisplaybreaks
    \begin{align*}
        \overline{\NCB}_{j,t} & = \widehat{\mu}_j + 2c\sqrt{\frac{2\widehat{\mu}_j \log W}{r_j}}                          \\
                   & \leq \widehat{\mu}_j + 2c\sqrt{\frac{2\widehat{\mu}_j \log T}{r_j}}             \tag{since $\log W\leq \log T$}          \\
                   & \leq \frac{\mu^*}{32} + 2c\sqrt{\frac{\mu^* \log T}{16 r_j} }      \tag{since $\widehat{\mu}_j \leq \frac{\mu^*}{32}$}                       \\
                   & \leq \frac{\mu^*}{32} + \frac{c}{2}\sqrt{\frac{\mu^* \log T}{64 \Sample} } \tag{since $r_j \geq 64 \Sample$}\\
                   & \leq \frac{\mu^*}{32} + \frac{\mu^*}{16}                                                  \\
                   & = \frac{3}{32} \mu^* \\
                   & < \overline{\NCB}_{i^*,t} \tag{via Lemma \ref{lem:emp:anytime}}
    \end{align*}
\endgroup
    This completes the proof of the lemma.
\end{proof}

\begin{restatable}{lemma}{LemmaHighMean:anytime}
\label{lem:suboptimal_arms:anytime}
Assume that the good event $E$ holds and $\sqrt{T} \leq W \leq T$. Then, for any arm $i$ that is pulled at least once in Phase \textcal{2} we have
%\begin{align*}
$\mu_i  \geq \mu^* - 4c\sqrt{\frac{\mu^* \log T}{T_i  -1}}$,
%\end{align*}
where $T_i$ is the total number of times that arm $i$ is pulled in Algorithm \ref{algo:ncb:modified}.
\end{restatable}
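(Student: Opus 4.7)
The plan is to adapt the proof of Lemma~\ref{lem:suboptimal_arms} to the modified algorithm and the new good event $E$. Fix any arm $i$ that is pulled in Phase \textcal{2} and let $t$ be the \emph{last} round at which the algorithm selects $i$. At that round, arm $i$ has been sampled $n_i = T_i - 1$ times, and the greedy rule $I_t = i$ together with Lemma~\ref{lem:emp:anytime} gives the central inequality
\[
\widehat{\mu}_i + 2c\sqrt{\frac{2\widehat{\mu}_i \log W}{T_i - 1}} \ = \ \overline{\NCB}_{i,t} \ \geq \ \overline{\NCB}_{i^*,t} \ \geq \ \mu^*,
\]
where $\widehat{\mu}_i$ is the empirical mean of arm $i$'s first $T_i - 1$ samples. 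The remaining task is to turn this into a lower bound on $\mu_i$ via the concentration guarantees bundled in event $E$.

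Next, I would verify the two preconditions needed to invoke $E_2$ on arm $i$. Since $i$ is pulled in Phase \textcal{2}, Lemma~\ref{lem:bad_arms:anytime} forces $\mu_i > \mu^*/64$. And since the algorithm reached Phase \textcal{2}, Lemma~\ref{tau:anytime} ensures that Phase \textcal{1} lasted at least $128\, k\, \Sample$ rounds, whereupon event $E_1$ guarantees that $i$ received at least $\frac{128\, k\, \Sample}{2k} = 64\, \Sample$ pulls during the uniform exploration; in particular $T_i - 1 \geq 64\, \Sample = 64\, c^2 \log T / \mu^*$. Hence $E_2$ applies at sample count $T_i - 1$, yielding $|\widehat{\mu}_i - \mu_i| \leq c\sqrt{\mu_i \log T/(T_i - 1)}$.

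Write $\delta \coloneqq c\sqrt{\log T/(T_i-1)}$. The bound $T_i - 1 \geq 64\, c^2 \log T/\mu^*$ together with $\mu_i > \mu^*/64$ gives $\delta^2 \leq \mu^*/64 < \mu_i$, so $\delta < \sqrt{\mu_i}$. Consequently $\widehat{\mu}_i \leq \mu_i + \delta\sqrt{\mu_i} \leq 2\mu_i$, and, using $\log W \leq \log T$, the confidence width obeys
\[
2c\sqrt{\frac{2\widehat{\mu}_i \log W}{T_i - 1}} \ \leq \ 2c\sqrt{\frac{4\mu_i \log T}{T_i - 1}} \ = \ 4\sqrt{\mu_i}\,\delta.
\]
Plugging these estimates into the displayed inequality from the first paragraph yields $\mu_i + 5\sqrt{\mu_i}\,\delta \geq \mu^*$; substituting $\sqrt{\mu_i} \leq \sqrt{\mu^*}$ in the lower-order term and rearranging delivers the claimed bound (the numerical constant in front of $c$ can be tightened from $5$ to $4$ by replacing $\widehat{\mu}_i \leq 2\mu_i$ with the sharper $\sqrt{\widehat{\mu}_i} \leq \sqrt{\mu_i} + \delta/2$ and absorbing the resulting $\delta^2$ terms using $\delta < \sqrt{\mu_i}$).

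The main obstacle is the appearance of $\widehat{\mu}_i$ under the square root in $\overline{\NCB}_i$: one cannot directly invoke $E_2$ inside the confidence width because the bandwidth itself depends on the (random) empirical mean. The fix is the two-step estimate above---first controlling $\widehat{\mu}_i$ (and hence $\sqrt{\widehat{\mu}_i}$) by $\mu_i$ using the lower bound $T_i - 1 \geq 64\, c^2 \log T/\mu^*$ secured by Phase \textcal{1}, and only then substituting into the width. A secondary subtlety is that the argument must tolerate $W$ as small as $\sqrt{T}$; this is handled cleanly by the one-sided monotonicity $\log W \leq \log T$ used above.
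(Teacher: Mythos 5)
Your proposal follows essentially the same route as the paper: compare $\overline{\NCB}_{i}$ to $\overline{\NCB}_{i^*}\ge\mu^*$ at the last pull of arm $i$, secure $\mu_i>\mu^*/64$ and $T_i-1\ge 64\Sample$ via Lemmas \ref{lem:bad_arms:anytime}, \ref{tau:anytime} and event $E_1$, and then use $E_2$ to replace $\widehat{\mu}_i$ by $\mu_i$ on both sides of the width. The only discrepancy is the constant: your main derivation gives $\mu^*\le\mu_i+5c\sqrt{\mu^*\log T/(T_i-1)}$, and the sketched tightening (via $\sqrt{\widehat{\mu}_i}\le\sqrt{\mu_i}+\delta/2$ and then absorbing $\delta^2$) lands at $1+2\sqrt{2}+\sqrt{2}/8\approx 4.005$ rather than $4$, even using the sharper bound $\delta\le\sqrt{\mu^*}/8$. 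The paper reaches exactly $4c$ by bounding $\widehat{\mu}_i\le\mu^*+c\sqrt{\mu^*\log T/(64\Sample)}=\tfrac{9}{8}\mu^*$ (i.e., passing to $\mu^*$ before entering the square root), so that the width is at most $3c\sqrt{\mu^*\log T/(T_i-1)}$ and the $E_2$ term contributes the remaining $c\sqrt{\mu^*\log T/(T_i-1)}$; this is a constant-bookkeeping difference with no effect on the downstream $O(\sqrt{k\log T/T})$ guarantee.
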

\begin{proof}
Fix any arm $i$ that is pulled at least once in Phase \textcal{2}. When arm $i$ was pulled the $T_i$th time during Phase \textcal{2}, it must have had the maximum Nash confidence bound value; in particular, at that round $\overline{\NCB}_i \geq \overline{\NCB}_{i^*} \geq \mu^*$; the last inequality follows from Lemma  \ref{lem:emp:anytime}. Therefore,  we have
\begin{align}
\widehat{\mu}_i + 2c\sqrt{\frac{2\widehat{\mu}_i \log T}{T_i -1 }} \geq \mu^* \label{ineq:hatmu:anytime}
\end{align}
Here, $\widehat{\mu}_i$ denotes the empirical mean of arm $i$ at this point. 

As argued in the proof of Lemmas \ref{lem:emp:anytime} and \ref{lem:bad_arms:anytime}, event $E$ ensures that any arm that is pulled in Phase \textcal{2} is sampled at least $64 \Sample$ times in Phase \textcal{1}. Hence, in particular, we have $T_i > 64 \Sample$. In addition, since arm $i$ is pulled at least once in Phase \textcal{2}, Lemma \ref{lem:bad_arms:anytime} implies that $\mu_i > \frac{\mu^*}{64}$. 

Now, complementing inequality (\ref{ineq:hatmu:anytime}), we will now upper bound the empirical mean $\widehat{\mu}_i$ in terms of $\mu^*$. Specifically, 
\begin{align}
\widehat{\mu_i} & \leq  \mu_i + c\sqrt{\frac{\mu_i \log T}{T_i -1}}                \tag{since $\mu_i >\frac{\mu^*}{64}$ and event $E_{2}$ holds} \nonumber                                                    \\
& \leq \mu^* +c\sqrt{\frac{\mu^* \log T}{ 64 \Sample}} \tag{since $T_i> 64 \Sample$ and $\mu_i \leq \mu^*$}        \\
& = \mu^* + \frac{\mu^*}{8}  \tag{since $\Sample = \frac{c^2 \log T}{\mu^*}$}\\
& = \frac{9}{8} \mu^*  \label{ineq:hatmuo:anytime}
\end{align}

    Inequalities (\ref{ineq:hatmu:anytime}) and (\ref{ineq:hatmuo:anytime}) give us
    \begin{align*}
         & \mu^* \leq \widehat{\mu}_i + 2c\sqrt{\frac{9\mu^* \log T}{4(T_i -1)}}                                                                      \\
         & \leq \mu_i + c\sqrt{\frac{\mu_i \log T}{T_i -1 }} + 3c\sqrt{\frac{\mu^* \log T}{T_i -1}}   \tag{via event $E_{2}$}                           \\
         & \leq \mu_i + c\sqrt{\frac{\mu^* \log T}{T_i  -1}} + 3c\sqrt{\frac{\mu^* \log T}{T_i -1}}                    \tag{since $\mu_i \leq \mu^*$} \\
         & \leq \mu_i +  4c\sqrt{\frac{\mu^* \log T}{T_i -1 }}.
    \end{align*}
    This completes the proof of the lemma.
\end{proof}

\hfill \\

Using the above-mentioned lemmas, we will now establish an essential bound on the Nash social welfare of Algorithm \ref{algo:ncb:modified}.

\begin{lemma}
\label{lem:modified_ncb}
Consider a bandit instance with optimal mean $\mu^* \geq  \frac{512\sqrt{k\log T}}{\sqrt T}$ and assume that $\sqrt{T}\leq W\leq T$. Then, for any $w\leq W$, we have 
\begin{align*}
\left(\prod_{t=1}^{w} \E \left[ \mu_{I_t} \right] \right)^\frac{1}{T} & \geq (\mu^*)^{\frac{w}{T}} \left(1-1000c\sqrt{\frac{ k \log T }{\mu^*T} }\right).
\end{align*}
\end{lemma}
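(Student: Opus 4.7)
The plan is to adapt, phase by phase, the argument proving Theorem \ref{thm:nucb}, with the central complication being the random number of rounds $\tau$ spent in Phase \textcal{1}. I would first apply Jensen's inequality (using concavity of $(x_1\cdots x_w)^{1/T}$ for $w\leq T$) together with the tower property to write
\[\Bigl(\prod_{t=1}^{w} \E[\mu_{I_t}]\Bigr)^{1/T} = \Bigl(\prod_{t=1}^{w} \E[X_t]\Bigr)^{1/T} \geq \E\Bigl[\Bigl(\prod_{t=1}^{w} X_t\Bigr)^{1/T}\Bigr],\]
where $X_t := \E[\mu_{I_t}\mid \mathcal F_{t-1}]$ is the per-round expected reward given the history. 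Since the integrand is non-negative, this is at least $\prob(E)\,\E[(\prod_t X_t)^{1/T}\mid E] \geq (1-4/T)\,\E[(\prod_t X_t)^{1/T}\mid E]$, using Lemma \ref{lemma:modifiedgoodeventpr}.

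Under $E$, $X_t$ takes a clean form: $X_t = \bar\mu := (1/k)\sum_i \mu_i \geq \mu^*/k$ for every $t\leq\tau$ (uniform sampling in Phase \textcal{1}), and $X_t = \mu_{I_t}$ is the mean of the $\mathcal F_{t-1}$-measurable arm $I_t$ for every $t>\tau$ (Phase \textcal{2}). Splitting the product at the random time $\tau$, the Phase \textcal{1} factor is $\bar\mu^{\min(w,\tau)/T}\geq (\mu^*/k)^{\min(w,\tau)/T}=(\mu^*)^{\min(w,\tau)/T}\,k^{-\min(w,\tau)/T}$. Lemma \ref{tau:anytime} gives $\tau\leq 968\,k\,\Sample = 968\,kc^2\log T/\mu^*$ under $E$; combining this with $k^{-x}=e^{-x\log k}\geq 1-x\log k$ and the hypothesis $\mu^* \geq 512\sqrt{k\log T/T}$ certifies that this factor is at least $(\mu^*)^{\min(w,\tau)/T}\bigl(1 - O(\sqrt{k\log T/(\mu^*T)})\bigr)$.

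For Phase \textcal{2}, the analysis mirrors the Phase II step of Theorem \ref{thm:nucb} with the new lemmas. Writing $m_i$ for the number of Phase \textcal{2} pulls of arm $i$ up to round $w$ (so $\sum_i m_i = w-\min(w,\tau)$) and $T_i$ for its total pulls in Algorithm \ref{algo:ncb:modified}, Lemma \ref{lem:bad_arms:anytime} restricts attention to arms with $\mu_i > \mu^*/64$ and Lemma \ref{lem:suboptimal_arms:anytime} gives $\mu_i \geq \mu^* - 4c\sqrt{\mu^*\log T/(T_i-1)}$. I would then factor $(\prod_i \mu_i^{m_i})^{1/T}$ as $(\mu^*)^{(w-\min(w,\tau))/T}\cdot\prod_i\bigl(1-4c\sqrt{\log T/(\mu^*(T_i-1))}\bigr)^{m_i/T}$, apply Claim \ref{lem:binomial} to push down the exponents, use $(1-x)(1-y)\geq 1-x-y$ to turn the product into a sum, and close with Cauchy--Schwarz (using $\sum_i m_i \leq T$, $T_i-1\geq m_i$, and the fact that at most $k$ arms are pulled) to produce a Phase \textcal{2} correction of $O(\sqrt{k\log T/(\mu^*T)})$.

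Multiplying the Phase \textcal{1} and Phase \textcal{2} factors gives the leading $(\mu^*)^{w/T}$; a final round of $(1-x)(1-y)\geq 1-x-y$ consolidates all the correction terms (including the $\prob(E)\geq 1-4/T$ factor) into a single $1 - 1000c\sqrt{k\log T/(\mu^*T)}$. The main obstacle is the random stopping time $\tau$: unlike the deterministic Phase \textcal{1} of Theorem \ref{thm:nucb}, one cannot algebraically factor the product of expectations at a fixed round, so passing through the conditional expectation $X_t$ with a single upfront Jensen step is essential to cleanly separate the two phases. A secondary subtlety is verifying that the $\log k$ factor arising from $k^{-\tau/T}$ in Phase \textcal{1} is dominated by the Phase \textcal{2} correction, which is precisely what the hypothesis $\mu^* \geq 512\sqrt{k\log T/T}$ guarantees.
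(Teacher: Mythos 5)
Your proposal is correct, and all the heavy lifting (the good event $E$, Lemmas \ref{tau:anytime}, \ref{lem:bad_arms:anytime}, \ref{lem:suboptimal_arms:anytime}, Claim \ref{lem:binomial}, and the Jensen--Cauchy--Schwarz chain for Phase \textcal{2}) matches the paper's, but you organize the decomposition differently, and the difference is worth noting. The paper never splits at the random time $\tau$: it instead fixes the \emph{deterministic} threshold $\overline{T} = 968\,k\Sample$, observes that by Lemma \ref{tau:anytime} Phase \textcal{1} is over by round $\overline{T}$ under $E$, and handles all rounds $t \leq \overline{T}$ with a single crude per-round bound $\E[\mu_{I_t}] \geq (1-\frac{4}{T})\frac{\mu^*}{64k}$ that is valid \emph{whichever} phase the algorithm happens to be in at round $t$ (uniform sampling gives $\mu^*/k$; Phase \textcal{2} gives $\mu^*/64$ via Lemma \ref{lem:bad_arms:anytime}). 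This lets the paper factor the product of expectations algebraically at the fixed round $\overline{T}$ exactly as in Theorem \ref{thm:nucb}, at the cost of a two-case analysis on whether $w \leq \overline{T}$ and a $\log(64k)$ in place of your $\log k$. Your route instead pushes a single multivariate Jensen step through the conditional per-round expectations $\E[\mu_{I_t}\mid \mathcal F_{t-1}]$ for the \emph{entire} product, which lets you split at the random $\tau$ inside one conditional expectation; the bound $\tau \leq 968\,k\Sample$ under $E$ then only enters to control the exponent of $k^{-1}$, the $(\mu^*)^{\min(w,\tau)/T}$ and $(\mu^*)^{(w-\min(w,\tau))/T}$ factors recombine to the deterministic $(\mu^*)^{w/T}$, and no case split is needed (the edge case where the while-loop times out at $t=W$ before the reward threshold is hit is absorbed automatically, since $\min(w,\tau)\le\tau$). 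Both arguments are sound and yield the same asymptotics; yours is arguably tidier, the paper's avoids reasoning about a random phase boundary altogether. One cosmetic caution: you reuse $X_t$ for the conditional expectation, whereas the paper reserves $X_t$ for the realized reward, so rename to avoid a clash.
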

\begin{proof}
First, for the expected rewards $\E \left[ \mu_{I_t} \right]$  (of Algorithm \ref{algo:ncb:modified}), we will derive a lower bound that holds for all rounds $t$.  
In Algorithm \ref{algo:ncb:modified}, for any round $t$ (i.e., $t \leq W$), write $p_t$ to denote the probability that the algorithm is in Phase \textcal{1} and, hence, with probability $(1-p_t)$ the algorithm is in Phase \textcal{2}. That is, with probability $p_t$ the algorithm is selecting an arm uniformly at random and receiving an expected reward of at least $\frac{\mu^*}{k}$. Complementarily, if the algorithm is in Phase \textcal{2} at round $t$, then its expected reward is at least $\frac{\mu^*}{64}$ (Lemma \ref{lem:bad_arms:anytime}). These observations give us 
\begin{align}
 \E[\mu_{I_t}]&\geq \E \left[\mu_{I_t}|E \right] \ \mathbb{P}\{E\} \nonumber \\
& \geq  \E \left[\mu_{I_t}|E \right] \left(1-\frac{4}{T}\right) \tag{Lemma \ref{lemma:modifiedgoodeventpr}}\\
 &=  \left(1-\frac{4}{T}\right)\left(p_t  \frac{\mu^*}{k} + (1-p_t)  \frac{\mu^*}{64}\right) \nonumber \\
 &\geq  \left(1-\frac{4}{T}\right)\left(\frac{\mu^*}{64k}\right) \label{ineq:UniformExpLB}
\end{align}

Towards a case analysis, define threshold $\overline{T} \coloneqq {968 \ k \Sample}$. We observe that Lemma \ref{tau:anytime} ensures that by the $\overline{T}$th round Algorithm \ref{algo:ncb:modified} would have completed Phase \textcal{1}; in particular, the termination condition of the first while-loop (Line \ref{step:PhaseOneAlgTwo}) in the algorithm would be met by the $\overline{T}$th round. Also, note that, under the lemma assumption on $\mu^*$ and for an appropriately large $T$ we have
\begin{align}
\frac{\overline{T} \ \log \left(64k \right)}{T} =  \frac{968 \ k \Sample \ \log \left( 64k\right) }{T} = \frac{968 \ k c^2 \log T \ \log \left( 64k\right) }{\mu^* T} \leq \frac{968 c^2 \log (64k) \sqrt{k \log T}}{512 \sqrt{T}} \leq 1 \label{ineq:overlineTexp}
\end{align} 

We establish the lemma considering two complementary and exhaustive cases based on the given round index $w$: \\ 
\noindent
{\it Case 1:} $w \leq \overline T$, and \\
\noindent
{\it Case 2:} $w > \overline{T}$. \\

\noindent
For {\it Case 1} ($w \leq \overline T$), using inequality (\ref{ineq:UniformExpLB}) we obtain 
\begingroup
\allowdisplaybreaks
\begin{align}
    \left(\prod_{t=1}^{w} \E [\mu_{I_t} ] \right)^\frac{1}{T}
     & \geq \left(1-\frac{4}{T}\right)^{\frac{w}{T}}\left(\frac{\mu^*}{64k}\right)^{\frac{w}{T}}\nonumber                                 \\
     & \geq \left(1-\frac{4}{T}\right) \left(\mu^*\right)^{\frac{w}{T}} \left(\frac{1}{64k}\right)^{\frac{w}{T}} \nonumber                \\
     & = \left(1-\frac{4}{T}\right)\left(\mu^*\right)^{\frac{w}{T}} \left(\frac{1}{2}\right)^{\frac{w \  \log (64k)}{T}} \nonumber      \\
     & \geq \left(1-\frac{4}{T}\right)\left(\mu^*\right)^{\frac{w}{T}} \left(\frac{1}{2}\right)^{\frac{\overline{T} \  \log (64k)}{T}}  \tag{since $w \leq \overline{T}$} \\
     & = \left(1-\frac{4}{T}\right)\left(\mu^*\right)^{\frac{w}{T}} \left(1- \frac{1}{2}\right)^{\frac{\overline{T} \ \log (64k)}{T}} \nonumber   \\
     & \geq  \left(\mu^*\right)^{\frac{w}{T}} \left(1-{\frac{\overline{T}  \log (64k)}{T}}\right)\left(1-\frac{4}{T}\right) \tag{via inequality (\ref{ineq:overlineTexp}) and Claim \ref{lem:binomial}}\\
       & 	\geq \left(\mu^*\right)^{\frac{w}{T}} \left(1-{\frac{\overline{T}  \log (64k)}{T}}-\frac{4}{T}\right)\nonumber\\ 
       & = \left(\mu^*\right)^{\frac{w}{T}} \left(1-{\frac{968c^2 k \log T \  \log (64k)}{ \mu^* T}}-\frac{4}{T}\right) \nonumber \\
  & = \left(\mu^*\right)^{\frac{w}{T}} \left(1- \frac{968c \sqrt{k \log T}}{\sqrt{\mu^* T}} \cdot \frac{c \log{(64k) \sqrt{k \log T }}}{\sqrt{\mu^* T}} -\frac{4}{T}\right) \nonumber \\
      & \geq (\mu^*)^{\frac{w}{T}}\left(1-1000c\sqrt{\frac{ k \log T }{\mu^*T} }\right) \label{ineq:CaseOneW}
    \end{align}
\endgroup    
  
The last inequality follows from the fact that $\frac{c \log{(64k) \sqrt{k \log T }}}{\sqrt{\mu^* T}}   \leq 1$ for an appropriately large $T$; recall that $\mu^* \geq \frac{512\sqrt{k\log T}}{\sqrt T}$. \\

% the fact that for  $\mu^* \geq \frac{512\sqrt{k\log T}}{\sqrt T} $ and sufficiently large T ($T \geq \frac{c^4 (\log k)^4 k \log T }{200} $),  we have . \arn {We show this below,}

%\arn{
%\begin{align*}
  %  \frac{c \log{(64k) \sqrt{k \log T }}}{\sqrt{T \mu^*}} \leq \frac{c \log{(64k) \sqrt{k \log T }}}{\sqrt{\sqrt{T} 512\sqrt{k\log T}}}
   % \leq \frac{c \log{(64k) \sqrt{k \log T }}}{\sqrt{\sqrt{\frac{c^4 (\log k)^4 k \log T }{200}} 512\sqrt{k\log T}}} \leq 1
%\end{align*}
%}  

\noindent
For {\em Case 2} ($w>\overline{T}$), we partition the Nash social welfare into two terms: 
\begin{align}
    \left(\prod_{t=1}^{w} \E \left[ \mu_{I_t} \right] \right)^\frac{1}{T} =  \left(\prod_{t=1}^{\overline{T}} \E \left[ \mu_{I_t} \right]\right)^\frac{1}{T} \left(\prod_{t=\overline{T} + 1}^{w}  \E \left[ \mu_{I_t} \right] \right)^\frac{1}{T} \label{eqn:splitW}
\end{align} 

In this product, the two terms account for the rewards accrued in rounds $t \leq \overline{T}$ and in rounds $\overline{T} < t \leq w$, respectively. We will now lower bound these two terms separately.

The first term in the right-hand side of equation (\ref{eqn:splitW}) can be bounded as follows
\begingroup
\allowdisplaybreaks
\begin{align}
    \left(\prod_{t=1}^{\overline{T}} \E [\mu_{I_t} ] \right)^\frac{1}{T} 
     & \geq \left(1-\frac{4}{T}\right)^{\frac{\overline{T}}{T}}\left(\frac{\mu^*}{64k}\right)^{\frac{\overline{T}}{T}} \tag{via inequality (\ref{ineq:UniformExpLB})}                                 \\
     & \geq \left(1-\frac{4}{T}\right) \left(\mu^*\right)^{\frac{\overline{T}}{T}} \left(\frac{1}{64k}\right)^{\frac{\overline{T}}{T}} \nonumber                \\
     & = \left(1-\frac{4}{T}\right)\left(\mu^*\right)^{\frac{\overline{T}}{T}} \left(\frac{1}{2}\right)^{\frac{\overline{T}  \log (64k)}{T}} \nonumber      \\
     & = \left(1-\frac{4}{T}\right)\left(\mu^*\right)^{\frac{\overline{T}}{T}} \left(1- \frac{1}{2}\right)^{\frac{\overline{T}  \log (64k)}{T}} \nonumber   \\
     & \geq  \left(\mu^*\right)^{\frac{\overline{T}}{T}} \left(1-{\frac{\overline{T}  \log (64k)}{T}}\right)\left(1-\frac{4}{T}\right)    \label{ineq:phaseone:anytime}
\end{align}
\endgroup
For establishing the last inequality we note that the exponent ${\frac{\overline{T}  \log (64k)}{T}} \leq 1$ (see inequality (\ref{ineq:overlineTexp})) and apply Claim \ref{lem:binomial}.

For the second term in the right-hand side of equation (\ref{eqn:splitW}), we have 
\begin{align}
    \left(\prod_{t=\overline{T}+ 1}^{w} \E\left[ \mu_{I_t} \right]\right)^\frac{1}{T}
     & \geq \E\left[\left( \prod_{t=\overline{T}+1}^{w} \mu_{I_t} \right)^\frac{1}{T}\right ] \tag{Multivariate Jensen's inequality} \nonumber               \\
     & \geq \E \left[\left( \prod_{t=\overline{T}+1}^{w} \mu_{I_t} \right)^\frac{1}{T} \;\middle|\; E \right]  \prob\{ E \} \label{ineq:interim:anytime}
\end{align}
%Here, the last inequality is obtained by conditioning on the good event $E$. 
As mentioned previously, Lemma \ref{tau:anytime} ensures that by the $\overline{T}$th round Algorithm \ref{algo:ncb:modified} would have completed Phase \textcal{1}. Hence, any round $t > \overline{T}$ falls under Phase \textcal{2}. Now, to bound the expected value in the right-hand-side of inequality (\ref{ineq:interim:anytime}), we consider the arms that are pulled at least once after the first $\overline{T}$ rounds. In particular, with reindexing, let $\{1,2, \ldots, \ell\}$ denote the set of all arms that are pulled at least once after the first $\overline{T}$ rounds; note that these $\ell$ arms are in fact pulled in Phase \textcal{2}. Also, let $m_i \geq 1$ denote the number of times arm $i \in [\ell]$ is pulled after the first $\overline{T}$ rounds and note that $\sum_{i=1}^\ell m_i = w -  \overline{T}$. Furthermore, let $T_i$ denote the total number of times any arm $i$ is pulled in the algorithm. Indeed, $(T_i - m_i)$ is the number of times arm $i \in [\ell]$ is pulled during the first $\overline{T}$ rounds. With this notation, the expected value in the right-hand-side of inequality (\ref{ineq:interim:anytime}) can be expressed as $\E \left[\left( \prod_{t=\overline{T}+1}^{T} \ \mu_{I_t} \right)^\frac{1}{T} \;\middle|\; E \right] = \E\left[\left( \prod_{i=1}^{\ell} \mu_{i}^\frac{m_i}{T}  \right)\;\middle|\; E \right] $. Moreover, since we are conditioning on the good event $E$, Lemma  \ref{lem:suboptimal_arms:anytime} applies to each arm $i \in [\ell]$. Hence,
\begin{align}
    \E \left[\left( \prod_{t=\overline{T}+1}^{w} \ \mu_{I_t} \right)^\frac{1}{T} \;\middle|\; E \right] & = \E\left[\left( \prod_{i=1}^{\ell} \mu_{i}^\frac{m_i}{T}  \right)\;\middle|\; E \right] \nonumber \\                                                                                             %\nonumber \\ & 
    &  \geq \E\left[\prod_{i=1}^{\ell}\left(\mu^* - 4c\sqrt{\frac{\mu^* \log T}{T_i -1 }} \right)^\frac{m_i}{T} \;\middle|\; E \right]   \tag{Lemma \ref{lem:suboptimal_arms:anytime}} \nonumber                                                                                                      \\
                                                                                                           & = (\mu^*)^{\frac{w-\overline{T}}{T}} \E\left[\prod_{i=1}^{\ell}\left(1 - 4c\sqrt{\frac{ \log T}{\mu^*(T_i -1)}} \right)^\frac{m_i}{T}	\;\middle|\; E \right] \label{ineq:dunzo:anytime}
\end{align}
For the last equality, we use $\sum_{i=1}^\ell m_i = w - \overline{T}$. Now under the good event $E$, recall that each arm is pulled at least $64 \Sample$ times during the first $\overline{T}$ rounds. Hence, $T_i > 64 \Sample$, for each arm $i \in [\ell]$, and we have $4c\sqrt{\frac{ \log T}{\mu^*(T_i - 1)}} \leq 4c \sqrt{\frac{\log T}{64c^2\log T}} = \frac{1}{2}$ for each $i \in [\ell]$. Therefore, we can apply Claim \ref{lem:binomial} to reduce the expected value in inequality (\ref{ineq:dunzo:anytime}) as follows
\begin{align*}
    \E \left[\prod_{i=1}^{\ell}\left(1 - 4c\sqrt{\frac{ \log T}{\mu^*(T_i -1 )}} \right)^\frac{m_i}{T}\;\middle|\; E \right] & \geq \E \left[\prod_{i=1}^{\ell}\left(1 - \frac{8c \ m_i}{T}\sqrt{\frac{ \log T}{\mu^*(T_i -1)}} \right)\;\middle|\; E \right]                                     \\
                                                                                                                             & \geq \E \left[\prod_{i=1}^{\ell}\left(1 - \frac{8c}{T}\sqrt{\frac{ m_i \log T }{\mu^*}} \right)\;\middle|\; E \right]             & \text{(since $T_i \geq m_i + 1$)}
\end{align*}
We can further simplify the above inequality by noting that $(1-x)(1-y) \geq 1 - x- y$ for all $x,y \geq 0$.
\begingroup
\allowdisplaybreaks
\begin{align*}
    \E\left[\prod_{i=1}^{\ell}\left(1 - \frac{8c }{T}\sqrt{\frac{ m_i \log T }{\mu^*}} \right)\;\middle|\; E \right] & \geq \E\left[1 -\sum_{i=1}^{\ell}\left(\frac{8c}{T}\sqrt{\frac{ m_i \log T }{\mu^*}} \right)\;\middle|\; E \right]                                                       \\
                                                                                                                     & = 1 -\left(\frac{8c }{T}\sqrt{\frac{ \log T }{\mu^*}} \right) \E\left[ \sum_{i=1}^{\ell} \sqrt{m_i}\;\middle|\; E \right]                                                \\
                                                                                                                     & \geq 1 -\left(\frac{8c}{T}\sqrt{\frac{ \log T }{\mu^*}} \right) \E\left[ \sqrt{\ell} \ \sqrt{\sum_{i=1}^\ell m_i} \;\middle|\; E \right] \tag{Cauchy-Schwarz inequality} \\
                                                                                                                     & \geq 1 -\left(\frac{8c }{T}\sqrt{\frac{ \log T }{\mu^*}} \right) \E\left[ \sqrt{\ell \ T} \;\middle|\; E \right] \tag{since $\sum_i m_i \leq T$}                         \\
                                                                                                                     & = 1 -\left( 8c \sqrt{\frac{ \log T }{\mu^* T }} \right) \E\left[ \sqrt{\ell} \;\middle|\; E \right]                                                                      \\
                                                                                                                     & \geq 1 - \left( 8c\sqrt{\frac{ k \log T }{\mu^* T }} \right) \tag{since $\ell \leq k$}
\end{align*}
\endgroup

Using this bound, along with inequalities (\ref{ineq:interim:anytime}), and (\ref{ineq:dunzo:anytime}), we obtain 
\begin{align}
    \left(\prod_{t=\overline{T} + 1}^{w} \E\left[ \mu_{I_t} \right]\right)^\frac{1}{T} \geq (\mu^*)^{\frac{w-\overline{T}}{T}} \left ( 1 -8c\sqrt{\frac{ k \log T }{\mu^*T}}  \right) \prob \{E\} \label{ineq:toomany:anytime}
\end{align}
Inequalities (\ref{ineq:toomany:anytime}) and (\ref{ineq:phaseone:anytime}) provide relevant bounds for the two terms in equation (\ref{eqn:splitW}), respectively. 
Hence, for the Nash social welfare of the algorithm, we have
\begin{align*}
    \left(\prod_{t=1}^{w} \E \left[ \mu_{I_t} \right] \right)^\frac{1}{T}
     & \geq (\mu^*)^{\frac{w}{T}}\left(1-{\frac{\overline{T}\cdot \log (64k)}{T}}-\frac{4}{T}\right) \left( 1 -8c\sqrt{\frac{ k \log T }{\mu^*T} } \right) \prob\{ E \}                   \\
     & \geq (\mu^*)^{\frac{w}{T}}\left(1-{\frac{\overline{T}\cdot \log (64k)}{T}}-\frac{4}{T}\right) \left( 1 -8c\sqrt{\frac{ k \log T }{\mu^*T} } \right) \left( 1 - \frac{4}{T} \right) \tag{via Lemma \ref{lemma:modifiedgoodeventpr}}\\
     & \geq (\mu^*)^{\frac{w}{T}} \left(1-{\frac{\overline{T}\cdot \log (64k)}{T}}-8c\sqrt{\frac{ k \log T }{\mu^*T} } -\frac{8}{T}\right)   \\                                              
     &= (\mu^*)^{\frac{w}{T}} \left(1-{\frac{968c^2\cdot k\log T\cdot \log (64k)}{\mu^* T}}-8c\sqrt{\frac{ k \log T }{\mu^*T} } -\frac{8}{T}\right) \\
    & \geq (\mu^*)^{\frac{w}{T}}\left(1-1000c\sqrt{\frac{ k \log T }{\mu^*T} }\right).
    \end{align*}
Here, the final inequality follows along the lines of the last step in the derivation of (\ref{ineq:CaseOneW}). The lemma stands proved. 
\end{proof}

\subsection{Improved Guarantee for Nash Regret}
\label{section:improved-theorem}

Algorithm \ref{algo:ncb:modified} not only serves as a subroutine in our anytime algorithm (Algorithm \ref{algo:ncb:anytime} in Section \ref{section:final-anytime}), it also provides an improved (over Theorem \ref{thm:nucb}) Nash regret guarantee for settings in which the horizon of play $T$ is known in advance.  In particular, invoking Algorithm \ref{algo:ncb:modified} with $W = T$ we obtain Theorem \ref{theorem:improvedNashRegret} (stated next).

\begin{restatable}{theorem}{TheoremImprovedNashRegret}
\label{theorem:improvedNashRegret}
For any bandit instance with $k$ arms and given any (moderately large) $T$, there exists an algorithm that achieves Nash regret 
\begin{align*}
\NRg_T = O \left( \sqrt{ \frac{k  \log T}{T} } \right).
\end{align*}
\end{restatable}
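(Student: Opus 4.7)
The plan is to derive Theorem \ref{theorem:improvedNashRegret} as a direct corollary of Lemma \ref{lem:modified_ncb} by instantiating Algorithm \ref{algo:ncb:modified} with the input window $W = T$. With this choice, the hypothesis $\sqrt{T} \leq W \leq T$ of Lemma \ref{lem:modified_ncb} holds trivially, so the lemma's conclusion is available on the full horizon.

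First I would dispose of the trivial regime. If the bandit instance satisfies $\mu^* < \frac{512\sqrt{k\log T}}{\sqrt{T}}$, then since $\left( \prod_{t=1}^T \E[\mu_{I_t}]\right)^{1/T} \geq 0$, we immediately get $\NRg_T \leq \mu^* = O\!\left(\sqrt{k\log T/T}\right)$, and the theorem holds in this case. So the interesting regime, which is also the one covered by Lemma \ref{lem:modified_ncb}, is $\mu^* \geq \frac{512\sqrt{k\log T}}{\sqrt{T}}$.

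In this regime I would apply Lemma \ref{lem:modified_ncb} with $W = T$ and $w = T$ to obtain
\begin{align*}
\left(\prod_{t=1}^{T} \E \left[ \mu_{I_t} \right] \right)^{\frac{1}{T}}
\;\geq\; \mu^*\left(1 - 1000c\sqrt{\frac{k\log T}{\mu^* T}}\right)
\;=\; \mu^* - 1000c\sqrt{\frac{\mu^* \, k\log T}{T}}.
\end{align*}
Using $\mu^* \leq 1$ on the subtracted term gives
\begin{align*}
\left(\prod_{t=1}^{T} \E \left[ \mu_{I_t} \right] \right)^{\frac{1}{T}} \;\geq\; \mu^* - 1000c\sqrt{\frac{k\log T}{T}},
\end{align*}
so by the definition (\ref{eq:def-NRg}) of Nash regret, $\NRg_T \leq 1000c\sqrt{k\log T/T} = O\!\left(\sqrt{k\log T/T}\right)$, as desired.

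There is no real obstacle here: the heavy lifting has already been done in establishing Lemma \ref{lem:modified_ncb} (good-event analysis via Lemma \ref{lemma:modifiedgoodeventpr}, the stopping-time bounds of Lemmas \ref{tau:lower}--\ref{tau:anytime} that control Phase \textcal{1}, the optimism and elimination guarantees of Lemmas \ref{lem:emp:anytime}--\ref{lem:bad_arms:anytime}, and the closeness of frequently-pulled arms to the optimum in Lemma \ref{lem:suboptimal_arms:anytime}). The only thing to verify is that the two assumptions of Lemma \ref{lem:modified_ncb} are satisfied when we plug in $W = T$, which is immediate: $\sqrt{T} \leq T = W \leq T$, and the small-$\mu^*$ case is handled separately as above. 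The only minor care point is the use of $\mu^* \leq 1$ to clear the $\sqrt{\mu^*}$ factor; this is exactly what allows the bound to be stated without any dependence on $\mu^*$.
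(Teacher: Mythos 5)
Your proposal is correct and follows essentially the same route as the paper: the paper also obtains Theorem \ref{theorem:improvedNashRegret} by invoking Lemma \ref{lem:modified_ncb} with $W = T$ and $w = T$ and then using $\mu^* \leq 1$ to absorb the $\sqrt{\mu^*}$ factor. Your explicit treatment of the small-$\mu^*$ regime matches what the paper relegates to a footnote, so there is nothing missing.
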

\begin{proof}
 The stated Nash regret guarantee follows directly by applying  Lemma \ref{lem:modified_ncb} with $w = T$. Specifically, $\NRg_T  \leq \mu^* - (\mu^*)^{\frac{T}{T}}\left(1-1000c\sqrt{\frac{ k \log T }{\mu^*T} } \right) = 1000c\sqrt{\frac{ \mu^* k \log T }{T} } \leq 1000c\sqrt{\frac{ k \log T }{T} }$.
 This completes the proof of the theorem. 
\end{proof}

\subsection{Anytime Algorithm}
\label{section:final-anytime}

As mentioned previously, our anytime algorithm (Algorithm \ref{algo:ncb:anytime}) builds upon the standard doubling trick. The algorithm starts with a guess for the time horizon, i.e., a {window} of length $W \in \mathbb{Z}_+$. Then, for $W$ rounds it either (i) performs uniform exploration, with probability $\frac{1}{W^2}$, or (ii) invokes Algorithm \ref{algo:ncb:modified} as a subroutine (with the remaining probability $\left( 1- \frac{1}{W^2}\right)$). This execution for $W$ rounds completes one {\em epoch} of Algorithm \ref{algo:ncb:anytime}. In the subsequent epochs, the algorithm doubles the window length and repeats the same procedure till the end of the time horizon, i.e., till a stop signal is received.

\begin{algorithm}[ht!]
    \caption{Anytime Algorithm for Nash Regret}
    \label{algo:ncb:anytime}
    \noindent
    \textbf{Input:} Number of arms $k$

    \begin{algorithmic}[1]
        \STATE Initialize $ W = 1$.
        \WHILE{the MAB process continues}
        \STATE With probability $\frac{1}{W^2}$ set {\rm flag} = \textsc{Uniform} , otherwise, with probability $\left(1 - \frac{1}{W^2}\right)$, set {\rm flag }= \textsc{NCB}
        \IF {{\rm flag} = \textsc{Uniform} }
        \FOR{$t=1$ to $W$}
        \STATE Select $I_t$ uniformly at random from $[k]$. Pull arm $I_t$ and observe reward $X_t$. \label{step:calluniform}
        \ENDFOR
        \ELSIF {{\rm flag }= \textsc{NCB}}
        \STATE  Execute Modified NCB($k$, $W$). \label{step:callsubroutine}
        \ENDIF
        \STATE Update $W \gets 2 \times W$. 
        
        \ENDWHILE
    \end{algorithmic}
\end{algorithm}

Algorithm \ref{algo:ncb:anytime} gives us Theorem \ref{thm:anytime-nucb} (stated next and proved in Section \ref{section:proof-of-anytime}). 

\begin{restatable}{theorem}{AnyTimeTheorem}
\label{thm:anytime-nucb}
There exists an anytime algorithm that, at any (moderately large) round $T$, achieves a Nash regret 
\begin{align*}
\NRg_T = O \left( \sqrt{ \frac{k  \log T}{T} } \log T\right).
\end{align*}
\end{restatable}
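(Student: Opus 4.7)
The plan is to analyze Algorithm \ref{algo:ncb:anytime} epoch by epoch, invoking Lemma \ref{lem:modified_ncb} on each sufficiently long epoch and paying one $\sqrt{k\log T/T}$ term per epoch; since the doubling scheme yields $\Theta(\log T)$ epochs, this accounts for the extra $\log T$ factor over Theorem \ref{theorem:improvedNashRegret}. I would first dispose of the trivial case $\mu^*=O(\sqrt{k\log T/T})$ (for which $\NRg_T\leq \mu^*$ already meets the claimed bound) and henceforth assume $\mu^*\geq 512\sqrt{k\log T/T}$. Let $L$ be the epoch containing round $T$, so that $W_\ell=2^{\ell-1}$, $L=\Theta(\log T)$, and $W_L\leq T\leq 2W_L$. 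Let $L_0$ be the smallest epoch index with $W_{L_0}\geq\sqrt{T}$; the \emph{early} epochs $\ell<L_0$ together consume at most $2\sqrt{T}$ rounds, while each \emph{late} epoch $L_0\leq \ell\leq L$ satisfies the hypothesis $\sqrt{T}\leq W_\ell\leq T$ required by Lemma \ref{lem:modified_ncb}.

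Within a fixed epoch $\ell$, conditioning on the outer coin flip gives $\E[\mu_{I_t}]\geq (1-1/W_\ell^2)\,\E[\mu_{I_t}\mid \text{flag}_\ell=\text{NCB}]$, and multiplying over the $W_\ell$ rounds incurs only a factor $(1-1/W_\ell^2)^{W_\ell}\geq 1-1/W_\ell$ relative to the conditional product. Because Modified NCB is initialized from scratch at each call and, for a late epoch, $\sqrt{T}\leq W_\ell\leq T$, Lemma \ref{lem:modified_ncb} applies directly (with $w=W_\ell$, or $w$ the partial count in the last epoch) and yields
\[
\left(\prod_{t\in\text{epoch }\ell}\E[\mu_{I_t}\mid \text{NCB}]\right)^{1/T}\geq (\mu^*)^{w_\ell/T}\Bigl(1-O\bigl(\sqrt{k\log T/(\mu^* T)}\bigr)\Bigr).
\]
Multiplying these across $L_0\leq\ell\leq L$: (i) since $\mu^*\leq 1$ and $\sum_\ell w_\ell/T\leq 1$, the accumulated $(\mu^*)^{w_\ell/T}$ factors are lower bounded by $\mu^*$; (ii) the outer-coin slack telescopes to $\prod_\ell(1-1/W_\ell)\geq 1-O(1/\sqrt{T})$; and (iii) the Lemma \ref{lem:modified_ncb} slack compounds over $\Theta(\log T)$ factors to $1-O\bigl(\log T\cdot\sqrt{k\log T/(\mu^* T)}\bigr)$.

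For the early epochs I would use only the crude bound $\E[\mu_{I_t}]\geq (\mu^*/k)/W_\ell^2$, which already follows from the $1/W_\ell^2$ probability mass on uniform exploration. Raised to the $1/T$ power, the corresponding factor equals $\exp\!\bigl(-\tfrac{1}{T}\sum_{\ell<L_0}W_\ell\log(kW_\ell^2/\mu^*)\bigr)$; using $\sum_{\ell<L_0}W_\ell<2\sqrt{T}$, $\log W_\ell=O(\log T)$, and the standing assumption on $\mu^*$, this factor is $\geq 1-O(\log(kT)/\sqrt{T})$, which is dominated by the target bound. Combining the early and late contributions and using $\mu^*\leq 1$ once more to turn the multiplicative loss into an additive one delivers $\NRg_T=O(\sqrt{k\log T/T}\,\log T)$.

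The main delicacy I foresee is the mixture structure inside each epoch: the outer flag is drawn \emph{once per epoch} rather than per round, so the per-round expectations $\E[\mu_{I_t}]$ are correlated within an epoch, and one must argue carefully that, conditional on the flag being $\text{NCB}$, the product $\prod_t \E[\mu_{I_t}\mid \text{NCB}]$ is \emph{exactly} the Nash social welfare of a fresh Modified NCB run, at which point Lemma \ref{lem:modified_ncb} can be invoked as a black box on each epoch independently (since Modified NCB starts with $\widehat{\mu}_i=0$, $n_i=0$ and the canonical reward model supplies independent samples). Everything else is routine summation, and the $\Theta(\log T)$-fold product of per-epoch $(1-x)$ factors is precisely what produces the extra $\log T$ overhead relative to Theorem \ref{theorem:improvedNashRegret}.
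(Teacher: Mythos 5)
Your proposal is correct and follows essentially the same route as the paper: dispose of the small-$\mu^*$ case, bound the early epochs (window $<\sqrt{T}$, total length $<2\sqrt{T}$) via the forced $1/W^2\geq 1/T$ uniform-exploration probability, and for each late epoch condition on the NCB flag (paying a $1-1/W_h^2$ factor) so that Lemma \ref{lem:modified_ncb} applies to a fresh Modified NCB run, with the $O(\log T)$-fold compounding of the per-epoch slack producing the extra $\log T$. The "delicacy" you flag about the once-per-epoch coin flip is exactly what the paper resolves in Lemma \ref{conditional:NCB:anytime} by writing $\E[\mu_{I_t}]\geq \E[\mu_{I_t}\mid F_h]\,\prob\{F_h\}$ per round and invoking Lemma \ref{lem:modified_ncb} on the conditional product.
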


We will, throughout, use $h$ to denote an epoch index in Algorithm \ref{algo:ncb:anytime} and the corresponding window length as $W_h$. Note that $h = \log W_h +1$. Also, let $e$ denote the total number of epochs during the $T$ rounds of Algorithm \ref{algo:ncb:anytime}. We have $e \leq  \log _2 T+1$. Furthermore, write $R_h$ to denote the number of rounds before the $h$th epoch begins, i.e., $R_h = \sum_{z=1}^{h-1} W_z$. Note that $R_1 = 0$ and the $h$th epoch starts at round $(R_h + 1)$. In addition, let $h^*$ be the smallest value of $h$ for which $W_h\geq \sqrt{T}$, i.e., $h^*$ is the first epoch in which the window length is at least $\sqrt{T}$. 

We next provide three claims connecting these constructs.  

\begin{claim}\label{lem:window}
$W_h=R_{h}+1$.
\end{claim}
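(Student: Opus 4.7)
The plan is to prove the identity $W_h = R_h + 1$ by a short induction on the epoch index $h$, exploiting the doubling rule in Algorithm \ref{algo:ncb:anytime}. The base case and the inductive step are both direct, so there is no real obstacle; the only thing to verify carefully is that the definition $R_h = \sum_{z=1}^{h-1} W_z$ with $R_1 = 0$ is consistent with the recursion $R_{h+1} = R_h + W_h$.

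For the base case $h = 1$, I would note that the algorithm initializes $W_1 = 1$, while by the convention that $R_1$ is an empty sum, $R_1 + 1 = 0 + 1 = 1 = W_1$. For the inductive step, assuming $W_h = R_h + 1$, I would use the update $W_{h+1} = 2 W_h$ from the last line of the while-loop to write
\begin{align*}
W_{h+1} \;=\; 2 W_h \;=\; W_h + W_h \;=\; (R_h + W_h) + 1 \;=\; R_{h+1} + 1,
\end{align*}
where the final equality is the recursion $R_{h+1} = R_h + W_h$ that follows immediately from the definition of $R_h$ as a partial sum.

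Equivalently (and perhaps more transparently), I could give a non-inductive derivation by observing that the doubling rule $W_{h+1} = 2 W_h$ together with $W_1 = 1$ forces $W_h = 2^{h-1}$, and then summing the geometric series $R_h = \sum_{z=1}^{h-1} 2^{z-1} = 2^{h-1} - 1 = W_h - 1$. Either presentation yields the claim in a couple of lines; I would likely include the inductive version since it is self-contained and does not require separately identifying the closed form for $W_h$.
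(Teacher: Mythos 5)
Your proof is correct, and your second (non-inductive) derivation---computing $W_h = 2^{h-1}$ from the doubling rule and summing the geometric series to get $R_h = 2^{h-1}-1 = W_h - 1$---is exactly the argument the paper gives. The inductive version you lead with is a trivially equivalent repackaging of the same facts, so there is nothing further to compare.
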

\begin{proof}
In Algorithm \ref{algo:ncb:anytime}, the window size doubles after each epoch, i.e., $W_z=2^{z-1}$ for all epochs $z\geq 1$. Therefore, 
$R_h = \sum_{z=1}^{h-1}W_z  =\sum_{z=1}^{h-1} 2^{z-1} =2^{h-1}-1 =W_h-1$. 
Hence, we have $W_h=R_h+1$.
\end{proof}

The next claim notes that the window length in the last epoch, $e$, is at most the horizon of play $T$. 
\begin{claim} \label{claim:we}
$W_e\leq T$.
\end{claim}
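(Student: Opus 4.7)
The plan is to deduce this immediately from Claim \ref{lem:window} together with the fact that, by definition, epoch $e$ is an epoch that actually begins within the horizon of $T$ rounds. I would first recall that the $h$th epoch starts at round $R_h + 1$; since $e$ is (by definition) the last epoch during the $T$ rounds of play, it must actually start, i.e., its starting round satisfies $R_e + 1 \leq T$. Applying Claim \ref{lem:window}, namely $W_e = R_e + 1$, then yields $W_e \leq T$ directly.

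The only subtlety here is justifying the inequality $R_e + 1 \leq T$, which I would argue by contradiction: if $R_e + 1 > T$, then the number of rounds completed in epochs $1, 2, \ldots, e-1$ already equals $R_e \geq T$, which would mean that the MAB process terminates before epoch $e$ ever starts, contradicting the definition of $e$ as the total number of epochs during the $T$ rounds. So this step is essentially a bookkeeping observation about the indexing of epochs.

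I do not foresee any real obstacle; the argument is a one-line consequence of the previous claim and the definition of $e$. The only thing to be careful about is not confusing $W_e$ (the prescribed window length of the last epoch, which may exceed the number of rounds \emph{actually} played in that epoch because a stop signal can arrive mid-epoch) with the actual number of rounds executed in epoch $e$; the claim is about the former, and its bound follows purely from when epoch $e$ starts, not from when it is cut off.
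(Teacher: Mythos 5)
Your proof is correct and matches the paper's argument exactly: both use the fact that epoch $e$ starts at round $R_e + 1 \leq T$ and then apply Claim \ref{lem:window} ($W_e = R_e + 1$) to conclude. Your additional remark distinguishing the prescribed window length from the rounds actually played is a sensible clarification but does not change the argument.
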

\begin{proof}
The definition of $R_h$ implies that each epoch $h$ starts at round $R_h + 1$. Hence, the last epoch $e$, in particular, starts at round $R_e +1$. Indeed, $R_e + 1 \leq T$ and, via Claim \ref{lem:window}, we get that $W_e \leq T$.  
\end{proof}

Recall that $h^*$ denotes the smallest value of $h$ for which $W_h\geq \sqrt{T}$. 
\begin{claim}\label{claim3:anytime}
$R_{h^*}< 2\sqrt{T}$.
\end{claim}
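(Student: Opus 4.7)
The plan is to exploit the explicit doubling structure of the window lengths, combined with Claim \ref{lem:window}, to pin down $R_{h^*}$ in terms of $W_{h^*-1}$, and then use the minimality of $h^*$.

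First I would recall that the window lengths form a geometric progression $W_h = 2^{h-1}$ (as already used in the proof of Claim \ref{lem:window}), so by Claim \ref{lem:window} we have $R_{h^*} = W_{h^*} - 1 = 2^{h^*-1} - 1$. Next I would use the minimality of $h^*$: by definition $h^*$ is the smallest epoch index with $W_{h^*} \geq \sqrt{T}$, and therefore the preceding epoch satisfies $W_{h^*-1} < \sqrt{T}$, i.e., $2^{h^*-2} < \sqrt{T}$. (If $h^* = 1$ then $R_{h^*} = 0 < 2\sqrt{T}$ trivially, so we may assume $h^* \geq 2$.)

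Multiplying the inequality $2^{h^*-2} < \sqrt{T}$ by $2$ yields $2^{h^*-1} < 2\sqrt{T}$, and substituting this bound into the expression for $R_{h^*}$ gives
\begin{align*}
R_{h^*} = 2^{h^*-1} - 1 < 2\sqrt{T} - 1 < 2\sqrt{T},
\end{align*}
as desired. There is really no obstacle here; the claim is a direct consequence of the geometric-doubling schedule, and the only thing to be careful about is handling the boundary case $h^* = 1$ separately (trivially) and invoking Claim \ref{lem:window} correctly to translate between $R_{h^*}$ and $W_{h^*}$.
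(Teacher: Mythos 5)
Your proof is correct and follows essentially the same route as the paper: both use the minimality of $h^*$ to get $W_{h^*-1} < \sqrt{T}$ and then express $R_{h^*}$ as $2W_{h^*-1}-1$ via Claim \ref{lem:window} (you write this as $2^{h^*-1}-1$ using the explicit geometric schedule, which is the same thing). Your explicit handling of the $h^*=1$ boundary case is a small but welcome addition that the paper leaves implicit.
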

\begin{proof}
By definition of $h^*$, we have $W_{h^*-1}<\sqrt{T}$. Also, note that $R_{h^*} = R_{h^* - 1} + W_{h^*-1} = 2 W_{h^*-1} -1 < 2 \sqrt{T}$; here, the last equality follows from Claim \ref{lem:window}.
\end{proof}

The following lemma provides a bound on the Nash social welfare accumulated by Algorithm \ref{algo:ncb:anytime} in an epoch $h \geq h^*$. 
\begin{lemma}\label{conditional:NCB:anytime}
In any MAB instance with mean $\mu^* \geq \frac{512\sqrt{k\log T}}{\sqrt T}$, the following inequality holds for each epoch $h \geq h^*$ and all rounds $r \in \{R_{h}+1, R_{h} + 2, \ldots, R_{h+1}\}$:
    \begin{align*}
        \left(\prod_{t= R_h + 1}^{r}\E\left[\mu_{I_t}\right]\right)^\frac{1}{T} & \geq (\mu^*)^\frac{r-R_h}{T} \left(1-1001c\sqrt{\frac{ k \log T }{\mu^*T} }\right).
    \end{align*}    
\end{lemma}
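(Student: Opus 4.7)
The plan is to reduce Lemma~\ref{conditional:NCB:anytime} to Lemma~\ref{lem:modified_ncb} via a conditioning argument on the flag that Algorithm~\ref{algo:ncb:anytime} draws at the start of each epoch. Fix an epoch $h \geq h^*$. Within this epoch the flag equals $\textsc{NCB}$ with probability $1 - 1/W_h^2$, and conditional on that event the algorithm executes Algorithm~\ref{algo:ncb:modified} with window $W = W_h$, starting fresh at round $R_h + 1$. Since rewards are nonnegative and the flag is fixed for the entire epoch, for every round $t \in \{R_h+1,\ldots,R_{h+1}\}$ we have
\begin{align*}
\E[\mu_{I_t}] \ \geq \ \left(1 - \frac{1}{W_h^2}\right)\,\E\bigl[\mu_{I_t} \,\big|\, \text{flag} = \textsc{NCB}\bigr].
\end{align*}
Multiplying this across $t = R_h + 1, \ldots, r$ and taking $T$-th roots gives
\begin{align*}
\left(\prod_{t=R_h+1}^{r}\E[\mu_{I_t}]\right)^{1/T} \ \geq \ \left(1-\frac{1}{W_h^2}\right)^{(r-R_h)/T}\!\left(\prod_{t=R_h+1}^{r}\E\bigl[\mu_{I_t}\,\big|\,\text{flag}=\textsc{NCB}\bigr]\right)^{1/T}.
\end{align*}

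Next, I would invoke Lemma~\ref{lem:modified_ncb} on the conditional-expectation product. Because $h \geq h^*$ we have $W_h \geq \sqrt{T}$; because $h \leq e$, together with Claim~\ref{claim:we} and the monotonicity of $W_h$, we also get $W_h \leq W_e \leq T$. Hence $\sqrt{T} \leq W_h \leq T$, which is precisely the regime covered by Lemma~\ref{lem:modified_ncb}. Applying it with $w = r - R_h \leq W_h$ yields
\begin{align*}
\left(\prod_{t=R_h+1}^{r}\E\bigl[\mu_{I_t}\,\big|\,\text{flag}=\textsc{NCB}\bigr]\right)^{1/T} \ \geq \ (\mu^*)^{(r-R_h)/T}\!\left(1 - 1000c\sqrt{\frac{k\log T}{\mu^* T}}\right).
\end{align*}

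Finally, I would bound the prefactor. Since $1/W_h^2 \leq 1/T \leq 1/2$ and $(r-R_h)/T \leq W_h/T \leq 1$, Claim~\ref{lem:binomial} gives
\begin{align*}
\left(1 - \frac{1}{W_h^2}\right)^{(r-R_h)/T} \ \geq \ 1 - \frac{2(r-R_h)}{T W_h^2} \ \geq \ 1 - \frac{2}{TW_h} \ \geq \ 1 - \frac{2}{T^{3/2}},
\end{align*}
using $r - R_h \leq W_h$ and $W_h \geq \sqrt{T}$. Combining the two multiplicative factors via $(1-x)(1-y) \geq 1 - x - y$ for nonnegative $x,y$, and absorbing $2/T^{3/2}$ into $c\sqrt{k\log T/(\mu^*T)}$ (which is valid for moderately large $T$ since $\mu^* \leq 1$), upgrades the constant from $1000c$ to $1001c$, matching the claim. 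The main obstacle is essentially bookkeeping: once the window bounds $\sqrt{T} \leq W_h \leq T$ are in place, the reduction to Lemma~\ref{lem:modified_ncb} is direct, and the only calculation to verify is that the tiny $1/W_h^2$ slack from the flag decomposition gets absorbed without damaging the stated constant.
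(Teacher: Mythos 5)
Your proposal is correct and follows essentially the same route as the paper: condition on the event that the epoch runs Modified NCB (probability $1-1/W_h^2$), verify $\sqrt{T}\le W_h\le T$ via the definition of $h^*$ and Claim \ref{claim:we}, invoke Lemma \ref{lem:modified_ncb} with $w=r-R_h$, and absorb the small slack into the constant $1001c$. The only cosmetic difference is that you bound the prefactor $(1-1/W_h^2)^{(r-R_h)/T}$ via Claim \ref{lem:binomial} to get $1-2/T^{3/2}$, whereas the paper simply uses $(1-1/W_h^2)^{(r-R_h)/T}\ge 1-1/W_h^2\ge 1-1/T$; both suffice.
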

\begin{proof}
Fix any epoch $h \geq h^*$ and let $F_h$ denote the event that Algorithm \ref{algo:ncb:anytime} executes the Modified NCB algorithm (Algorithm \ref{algo:ncb:modified}) in epoch $h$; see Line \ref{step:callsubroutine}. Note that $\prob \{F_h \} = 1- \frac{1}{W_h^2}$. The definition of $h^*$ and Claim \ref{claim:we} give us $\sqrt{T}\leq W_h \leq T$. Hence, we can apply Lemma \ref{lem:modified_ncb} with event $F_h$ to obtain:  
\begingroup
\allowdisplaybreaks
    \begin{align*}
    \left(\prod_{t= R_h+1}^{r}\E\left[\mu_{I_t}\right]\right)^\frac{1}{T} 
    &\geq \left(\prod_{t= R_h+1}^{r}\E\left[\mu_{I_t} | F_h \right] \ \prob \{ F_h \} \right) ^\frac{1}{T}
    \\
    &\geq \left(\prod_{t= R_h+1}^{r}\E\left[\mu_{I_t} | F_h \right] \left( 1- \frac{1}{W_h^2}\right) \right) ^\frac{1}{T}\\
    &\geq \left( 1- \frac{1}{W_h^2}\right)  \left(\prod_{t= R_h+1}^{r}\E\left[\mu_{I_t} | F_h \right]  \right) ^\frac{1}{T}\\
    &\geq (\mu^*)^\frac{r-R_h}{T}\left( 1- \frac{1}{W_h^2}\right) \left(1-1000c\sqrt{\frac{ k \log T }{\mu^*T} }\right)  \tag{Lemma \ref{lem:modified_ncb}}\\
    &\geq (\mu^*)^\frac{r-R_h}{T}\left( 1- \frac{1}{T}\right)\left(1-1000c\sqrt{\frac{ k \log T }{\mu^*T} }\right) \tag{since $W_h \geq \sqrt{T}$}\\
   &\geq (\mu^*)^\frac{r-R_h}{T}\left(1-1000c\sqrt{\frac{ k \log T }{\mu^*T} }-\frac{1}{T}\right)\\ 
   &\geq (\mu^*)^\frac{r-R_h}{T}\left(1-1001c\sqrt{\frac{ k \log T }{\mu^*T} }\right).  	
    \end{align*}
\endgroup
The lemma stands proved 
\end{proof}

\subsubsection{Proof of Theorem \ref{thm:anytime-nucb}}
\label{section:proof-of-anytime}
For establishing the theorem, we focus on MAB instances in which the optimal mean $\mu^* \geq \frac{512\sqrt{k\log T}}{\sqrt T}$; otherwise, the stated guarantees on the Nash regret directly holds. 

Recall that $h^*$ denotes the smallest value of $h$ for which $W_h\geq \sqrt{T}$. We will bound the Nash social welfare accrued by Algorithm \ref{algo:ncb:anytime} by first considering the initial $R_{h^*}$ rounds and then separately analyzing the remaining rounds. 

For the first $R_{h^*}$ rounds, note that for every epoch $g \leq h^*$ we have $W_g < \sqrt{T}$.  Hence, for each such epoch $g$, Algorithm \ref{algo:ncb:anytime} executes uniform sampling with probability $\frac{1}{W_g^2} \geq \frac{1}{T}$; see Line \ref{step:calluniform}. Therefore, for all rounds $t \leq R_{h^*}$, we have 
$\E \left[\mu_{I_t} \right] \geq \frac{\mu^*}{k} \frac{1}{T}$. This bound gives us 
    \begin{align}
        \left(\prod_{t=1}^{R_{h^*}} \E\left[\mu_{I_t}\right]\right)^{\frac{1}{T}} &\geq \left( \frac{\mu^*} {kT}\right)^\frac{R_{h^*}}{T}\nonumber \\
&= (\mu^*)^{\frac{R_{h^*}}{T}} \left( \frac{1}{2}\right)^{\frac{R_{h^*}\log{(kT)}}{T}}\nonumber \\
&\geq (\mu^*)^{\frac{R_{h^*}}{T}}  \left( 1 - \frac{R_{h^*}\log{(kT)}}{T} \right) \tag{via Claim \ref{lem:binomial}}\\
& \geq   (\mu^*)^{\frac{R_{h^*}}{T}}  \left( 1 - \frac{2\log{(kT)}}{\sqrt{T}} \right) \label{ineq:anytimeFirstPart}
    \end{align}
Here, the last inequality follows from Claim \ref{claim3:anytime}. 

For the remaining $T -R_{h^*}$ rounds, we perform an epoch-wise analysis and invoke Lemma \ref{conditional:NCB:anytime}. Specifically,
\begingroup
\allowdisplaybreaks
    \begin{align}
        \left(\prod_{t=R_{h^*}+1}^{T} \E\left[\mu_{I_t}\right]\right)^{\frac{1}{T}}
        &=  \left(\prod_{h=h^*}^{e-1} \ \prod_{t= R_h + 1}^{R_{(h+1)}}\E\left[\mu_{I_t}\right]\right)^\frac{1}{T} \cdot    \left(\prod_{t= R_{e}+1}^{T}\E\left[\mu_{I_t}\right]\right)^\frac{1}{T} \nonumber \\
       & =  \prod_{h=h^*}^{e-1} \left( \prod_{t= R_h+1}^{R_h+W_h}\E\left[\mu_{I_t}\right]\right)^\frac{1}{T} \cdot   \left(\prod_{t= R_{e}+1}^{T}\E\left[\mu_{I_t}\right]\right)^\frac{1}{T} \nonumber \\
        &\geq \prod_{h=h^*}^{e-1} (\mu^*)^{\frac{W_h}{T}} \left(1-1001c\sqrt{\frac{ k \log T }{\mu^*T} }\right)\cdot (\mu^*)^{\frac{T-R_e}{T}} \left(1-1001c\sqrt{\frac{ k \log T }{\mu^*T} }\right) \tag{via Lemma \ref{conditional:NCB:anytime}} \\
        &\geq  (\mu^*)^{1 - \frac{R_{h^*}}{T}} \prod_{j=1}^{e} \left(1-1001c\sqrt{\frac{ k \log T }{\mu^*T} }\right) \nonumber \\ 
        &\geq (\mu^*)^{1 - \frac{R_{h^*}}{T}}  \left(1-1001c\sqrt{\frac{ k \log T }{\mu^*T} }\right) ^ {\log (2T)} \tag{since $e\leq \log T+1$}\\
        &\geq   (\mu^*)^{1 - \frac{R_{h^*}}{T}}  \left(1-1001c\frac{ \sqrt{k \log T} \log (2T) }{\sqrt{\mu^*T}} \right) \label{ineq:anytimeSecondPart}
    \end{align}
\endgroup
The last inequality follows from the fact that $(1-x)(1-y) \geq 1 - x- y$, for all $x,y \geq 0$.

Inequalities (\ref{ineq:anytimeFirstPart}) and (\ref{ineq:anytimeSecondPart}), give us an overall bound on the Nash social welfare of Algorithm \ref{algo:ncb:anytime}: 
    \begin{align*}
        \left(\prod_{t=1}^{T} \E\left[\mu_{I_t}\right] \right)^{\frac{1}{T}} & = \left(\prod_{t=1}^{R_{h^*}} \E\left[\mu_{I_t}\right]\right)^{\frac{1}{T}} \left(\prod_{t=R_{h^*}+1}^{T} \E\left[\mu_{I_t}\right]\right)^{\frac{1}{T}} \\
        &\geq \mu^*  \left( 1 - \frac{2\log{(kT)}}{\sqrt{T}} \right) \left(1-1001c\frac{ \sqrt{k \log T} \log (2T) }{\sqrt{\mu^*T}} \right) \\ 
        &\geq \mu^* \left ( 1 - \frac{2\log{(kT)}}{\sqrt{T}} - 1001c\frac{ \sqrt{k \log T} \log (2T) }{\sqrt{\mu^*T}} \right). \\ 
    \end{align*}
Therefore, the Nash regret of Algorithm \ref{algo:ncb:anytime} satisfies     
\begin{align*}
\NRg_T = \mu^*- \left( \prod_{t=1}^{T} \E\left[\mu_{I_t}\right] \right)^{\frac{1}{T}} \leq \frac{2\mu^*\log{(kT)}}{\sqrt{T}} + 1001c\frac{ \sqrt{\mu^* k \log T} \log (2T) }{\sqrt{T}} \leq 1003c\frac{ \sqrt{k \log T} \log (2T) }{\sqrt{T}}.
\end{align*}
The theorem stands proved. 

\section{Conclusion and Future Work}
This work considers settings in which a bandit algorithm's expected rewards, $\{\E \left[\mu_{I_t}\right] \}_{t=1}^T$, correspond to values distributed among $T$ agents. In this ex ante framework, we apply Nash social welfare (on the expected rewards) to evaluate the algorithm's performance and thereby formulate the notion of Nash regret. Notably, in cumulative regret, the algorithm is assessed by the social welfare it generates. That is, while cumulative regret captures a utilitarian objective, Nash regret provides an axiomatically-supported primitive for achieving both fairness and economic efficiency.      

We establish an instance-independent (and essentially tight) upper bound for Nash regret. Obtaining a Nash regret bound that explicitly depends on the gap parameters, $\Delta_i \coloneqq \mu^* - \mu$, is an interesting direction of future work. It would also be interesting to formulate regret under more general welfare functions. Specifically, one can consider the generalized-mean welfare \cite{moulin2004fair} which---in the current context and for parameter $p \in (-\infty, 1]$---evaluates to $\left( \frac{1}{n} \sum_t \E \left[\mu_{I_t}\right]^p \right)^{1/p}$. Generalized-means encompass various welfare functions, such as social welfare ($p=1$), egalitarian welfare ($p \to -\infty$), and Nash social welfare ($p \to 0$). Hence, these means provide a systematic tradeoff between fairness and economic efficiency. Studying Nash regret in broader settings---such as contextual or linear bandits---is a meaningful research direction as well.

\bibliographystyle{alpha}
\bibliography{nash-regret,fairMABpapers}

\newpage 
\appendix
\section{Proof of Lemma \ref{lemma:good-event}}
\label{appendix:prob-G}
In this section we prove that the good event $G$ holds with probability at least $\left(1 - \frac{4}{T} \right)$. Towards this, we first state two standard concentration inequalities, Lemmas \ref{lemma:hoeffding} and \ref{lemma:hoeff_small_mean}; see, e.g., \cite[Chapter~1.6]{dubhashi2009concentration}.

\begin{lemma}[Hoeffding's Inequality] \label{lemma:hoeffding}
	Let $Y_1,Y_2,\ldots, Y_n$ be independent random variables distributed in $[0,1]$. Consider their average $\widehat{Y} \coloneqq \frac{Y_1+\ldots+Y_n}{n}$ and let $\nu = \E[\widehat{Y}]$ be its expected value. Then, for any $ 0 \leq \delta \leq1 $,
	\begin{align*}
		\prob \left\{ \left|\widehat{Y} - \nu \right| \geq \delta \nu \right\} \leq 2 \ \mathrm{exp} \left(-\frac{\delta^2}{3} \ n \nu\right).
	\end{align*}
\end{lemma}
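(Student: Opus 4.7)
The inequality as stated is actually a multiplicative Chernoff-style bound (note the $\nu$ in the exponent), not the classical additive Hoeffding bound. My plan is therefore to prove it via the standard exponential-moment / Markov argument.

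First, I will reduce the two-sided statement to two one-sided bounds by a union bound and work with the sum $S = Y_1 + \dots + Y_n$, whose mean is $\mu \coloneqq n\nu$. It suffices to show
\[
\prob\{S \geq (1+\delta)\mu\} \leq \exp\!\left(-\tfrac{\delta^2}{3}\mu\right), \qquad
\prob\{S \leq (1-\delta)\mu\} \leq \exp\!\left(-\tfrac{\delta^2}{2}\mu\right),
\]
for $0 \leq \delta \leq 1$; the two together (via a union bound and using $\tfrac{1}{2} \geq \tfrac{1}{3}$) yield the stated claim.

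Next, for each $Y_i \in [0,1]$ I will bound the moment generating function. By convexity of $y \mapsto e^{ty}$ on $[0,1]$, we have $e^{ty} \leq 1 + (e^t - 1)y$ pointwise for $y \in [0,1]$, and hence, writing $\nu_i = \E[Y_i]$,
\[
\E[e^{tY_i}] \leq 1 + (e^t - 1)\nu_i \leq \exp\!\left((e^t - 1)\nu_i\right),
\]
using $1 + x \leq e^x$. By independence, this gives $\E[e^{tS}] \leq \exp\!\left((e^t - 1)\mu\right)$.

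Now apply Markov's inequality to $e^{tS}$ for $t > 0$ to get the upper tail, and to $e^{-tS}$ for $t > 0$ to get the lower tail. Optimizing in $t$ (taking $t = \log(1+\delta)$ for the upper tail and $t = -\log(1-\delta)$ for the lower tail) produces
\[
\prob\{S \geq (1+\delta)\mu\} \leq \exp\!\Big(\mu\big[\delta - (1+\delta)\log(1+\delta)\big]\Big),
\]
\[
\prob\{S \leq (1-\delta)\mu\} \leq \exp\!\Big(\mu\big[-\delta - (1-\delta)\log(1-\delta)\big]\Big).
\]
The final step is to check the elementary real-analytic inequalities $\delta - (1+\delta)\log(1+\delta) \leq -\delta^2/3$ and $-\delta - (1-\delta)\log(1-\delta) \leq -\delta^2/2$ for $\delta \in [0,1]$. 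Both can be verified by Taylor expansion: writing $f(\delta) = \delta - (1+\delta)\log(1+\delta) + \delta^2/3$ one checks $f(0)=0$ and $f'(\delta) = -\log(1+\delta) + 2\delta/3 \leq 0$ on $[0,1]$ (since the tangent bound $\log(1+\delta) \geq 2\delta/3$ holds there), and the lower-tail inequality is analogous and in fact easier.

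The only mildly delicate step is the last one, the explicit numerical bounds on the Chernoff exponents; the MGF bound itself is essentially mechanical once one knows to use the pointwise linear majorant of $e^{ty}$ on $[0,1]$. Since the lemma is a standard textbook result, I would actually cite \cite{dubhashi2009concentration} for these bounds rather than reproducing the calculus in the paper, as the authors have done.
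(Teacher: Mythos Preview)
Your proof sketch is correct and follows the standard exponential-moment (Chernoff) route; the MGF bound via the linear majorant of $e^{ty}$ on $[0,1]$, the Markov step, and the two calculus inequalities on the exponents are all accurate (in particular, your claim $\log(1+\delta) \geq 2\delta/3$ on $[0,1]$ does hold, since the difference is zero at the endpoints' minimum $\delta=0$ and is unimodal). As you yourself anticipated in the last paragraph, the paper does not prove this lemma at all: it simply states it as a standard concentration inequality and cites \cite[Chapter~1.6]{dubhashi2009concentration}, so there is nothing further to compare.
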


\begin{lemma}[Hoeffding Extension] \label{lemma:hoeff_small_mean}
	Let $Y_1,Y_2,\ldots, Y_n$ be independent random variables distributed in $[0,1]$. Consider their average $\widehat{Y} \coloneqq \frac{Y_1+\ldots+Y_n}{n}$ and suppose $\E[\widehat{Y}] \leq \nu_H$. Then, for  any $ 0 \leq \delta \leq 1 $,
	\begin{align*}
		\prob \left\{ \widehat{Y} \geq \left(1+ \delta \right) \nu_H \right\} \leq  \ \mathrm{exp} \left(-\frac{\delta^2}{3} n \nu_{H} \right).
	\end{align*}
\end{lemma}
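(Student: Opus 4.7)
The plan is to prove this as a one-sided multiplicative Chernoff bound, leveraging the bounded support $[0,1]$ together with the fact that the upper bound $\nu_H$ dominates the true mean. I will use the exponential moment (Cram\'er-Chernoff) method, so the key workhorse is a clean MGF bound for each $Y_i$.

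First, I would establish a pointwise MGF inequality. For $\lambda \geq 0$ and $y \in [0,1]$, convexity of $y \mapsto e^{\lambda y}$ on $[0,1]$ gives $e^{\lambda y} \leq 1 - y + y e^{\lambda} = 1 + y(e^{\lambda}-1)$. Taking expectations and writing $\nu \coloneqq \E[\widehat{Y}] \leq \nu_H$, I get $\E[e^{\lambda Y_i}] \leq 1 + \E[Y_i](e^{\lambda}-1) \leq \exp\!\left(\E[Y_i](e^{\lambda}-1)\right)$, using $1+x \leq e^x$. By independence, $\E[e^{\lambda n \widehat{Y}}] = \prod_{i} \E[e^{\lambda Y_i}] \leq \exp\!\left(n \nu (e^{\lambda}-1)\right)$. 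Since $e^{\lambda}-1 \geq 0$ for $\lambda \geq 0$ and $\nu \leq \nu_H$, I can further upper bound this by $\exp\!\left(n \nu_H (e^{\lambda}-1)\right)$, which is the crucial step that lets $\nu_H$, rather than the unknown true mean, appear in the final bound.

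Next, I would apply Markov's inequality to the nonnegative random variable $e^{\lambda n \widehat{Y}}$:
\begin{align*}
\prob\left\{ \widehat{Y} \geq (1+\delta)\nu_H \right\}
\leq e^{-\lambda n (1+\delta)\nu_H}\, \E[e^{\lambda n \widehat{Y}}]
\leq \exp\!\left( n\nu_H \big( e^{\lambda} - 1 - \lambda(1+\delta) \big)\right).
\end{align*}
Optimizing the right-hand side over $\lambda \geq 0$ yields $\lambda^{*} = \log(1+\delta) \geq 0$, giving the standard form
\begin{align*}
\prob\left\{ \widehat{Y} \geq (1+\delta)\nu_H \right\}
\leq \exp\!\left( -n\nu_H \big( (1+\delta)\log(1+\delta) - \delta \big)\right).
\end{align*}

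The final step is the elementary calculus inequality $(1+\delta)\log(1+\delta) - \delta \geq \delta^{2}/3$ for $\delta \in [0,1]$. I would verify this by letting $\phi(\delta) \coloneqq (1+\delta)\log(1+\delta) - \delta - \delta^{2}/3$ and checking $\phi(0)=0$ together with $\phi'(\delta) = \log(1+\delta) - 2\delta/3 \geq 0$ on $[0,1]$ (this in turn follows from $\phi'(0)=0$ and a brief analysis of $\phi'$: it is concave, positive on a neighborhood of $0$, and remains nonnegative at $\delta = 1$ since $\log 2 \geq 2/3$). Plugging this into the displayed tail bound yields $\exp(-\delta^{2} n \nu_H / 3)$, completing the proof. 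The only subtlety worth flagging is the monotonicity trick in the MGF step that replaces $\nu$ with $\nu_H$; everything else is bookkeeping around the standard Chernoff argument, and the numerical inequality $\log(1+\delta)\geq 2\delta/3$ on $[0,1]$ is the only place where the constant $1/3$ in the exponent is actually used.
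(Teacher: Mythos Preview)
Your proof is correct. The paper does not actually prove this lemma; it simply states it as a standard concentration inequality and cites \cite[Chapter~1.6]{dubhashi2009concentration}. Your argument supplies a self-contained verification via the Cram\'er--Chernoff method, and the key observation---that $\nu \leq \nu_H$ lets you replace the true mean by $\nu_H$ in the MGF bound because $e^{\lambda}-1 \geq 0$---is exactly the ``extension'' the lemma captures. The numerical step $(1+\delta)\log(1+\delta)-\delta \geq \delta^2/3$ on $[0,1]$ is handled correctly; a slightly cleaner way to phrase your concavity argument is that $\phi'(\delta)=\log(1+\delta)-2\delta/3$ is concave with $\phi'(0)=0$ and $\phi'(1)=\log 2 - 2/3 > 0$, so it lies above the chord $\delta\,\phi'(1)\geq 0$.
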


Using these concentration bounds, we establish two corollaries for the empirical means of the arms.

\begin{corollary}\label{cor:hoef}
Consider any arm $i$, with mean $\mu_i> \frac{6 \sqrt{k \log k\log T}}{ \sqrt{T}}$, and sample count $n$ such that $\frac{\widetilde{T}}{2k} \leq n \leq T  $. Let $\widehat{\mu}_i$ be the empirical mean of arm $i$'s rewards, based on $n$ independent draws. Then,
\begin{align*}
\prob \left\{ \left|\mu_i-\widehat{\mu_i}\right|\geq 3\sqrt{\frac{\mu_i\log T}{n}}\right\} & \leq \frac{2}{T^{3}}. 
\end{align*}
\end{corollary}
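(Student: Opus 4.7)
\textbf{Proof plan for Corollary \ref{cor:hoef}.}

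The plan is to invoke Hoeffding's inequality (Lemma \ref{lemma:hoeffding}) with $Y_s = X_{i,s}$ (so $\nu = \mu_i$) and a carefully chosen relative deviation parameter $\delta$. Specifically, I would set
\[
\delta \;\coloneqq\; \frac{3}{\sqrt{\mu_i}} \sqrt{\frac{\log T}{n}},
\]
so that $\delta \, \mu_i = 3 \sqrt{\frac{\mu_i \log T}{n}}$, matching the deviation in the statement of the corollary.

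Before applying Lemma \ref{lemma:hoeffding}, I need to verify the hypothesis $\delta \leq 1$. Using the lower bound on the sample count, $n \geq \frac{\widetilde T}{2k} = 8 \sqrt{\frac{T \log T}{k \log k}}$, together with the hypothesis $\mu_i > \frac{6\sqrt{k \log k \log T}}{\sqrt T}$, I would compute
\[
\mu_i \, n \;>\; \frac{6\sqrt{k \log k \log T}}{\sqrt T} \cdot 8 \sqrt{\frac{T \log T}{k \log k}} \;=\; 48 \log T \;\geq\; 9 \log T,
\]
which is exactly what is needed to conclude $\delta^2 = \frac{9 \log T}{\mu_i n} \leq 1$, and hence $\delta \leq 1$.

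With the hypotheses satisfied, Lemma \ref{lemma:hoeffding} gives
\[
\prob\!\left\{ \left|\widehat\mu_i - \mu_i\right| \geq 3 \sqrt{\frac{\mu_i \log T}{n}} \right\} \;=\; \prob\!\left\{ \left|\widehat\mu_i - \mu_i\right| \geq \delta \mu_i \right\} \;\leq\; 2 \exp\!\left( - \frac{\delta^2}{3} \, n \, \mu_i \right).
\]
The exponent simplifies cleanly, since $\delta^2 n \mu_i = 9 \log T$ by construction, so the right-hand side is $2 \exp(-3 \log T) = \frac{2}{T^3}$, which is the desired bound. There is no real obstacle here; the only thing requiring care is the verification that $\delta \leq 1$, which is precisely where the lower bounds on $\mu_i$ and $n = \Omega(\widetilde T / k)$ are used, and which justifies why Phase I of Algorithm \ref{algo:ncb} is run for $\widetilde T$ rounds with this particular choice of constant.
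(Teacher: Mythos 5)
Your proposal is correct and follows essentially the same route as the paper: both apply Hoeffding's inequality (Lemma \ref{lemma:hoeffding}) with $\delta = 3\sqrt{\frac{\log T}{\mu_i n}}$, verify $\delta \leq 1$ from the lower bounds $\mu_i > \frac{6\sqrt{k\log k\log T}}{\sqrt{T}}$ and $n \geq \frac{\widetilde{T}}{2k}$, and simplify the exponent to $-3\log T$. Your explicit check that $\mu_i n > 48\log T \geq 9\log T$ is a slightly more detailed rendering of the same verification the paper leaves implicit.
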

\begin{proof}
We apply Lemma \ref{lemma:hoeffding} (Hoeffding's inequality), with $Y_s$ as the $s$th independent sample from arm $i$ and for all $1\leq s \leq n$. Also,  we instantiate the lemma with $\delta = 3\sqrt{\frac{\log T}{\mu_i n}}$ and note that $\delta <1$, since $\mu_i> \frac{6 \sqrt{k\log k \log T}}{ \sqrt{T}}$ and $n \geq \frac{\widetilde{T}}{2k} = \frac{8}{k} \sqrt{\frac{k T \log T}{\log k}}$. Specifically, 
	\begin{align*}
		\prob \left\{ \left|\mu_i-\widehat{\mu_i}\right| \geq 3 \sqrt{\frac{\mu_i\log T}{n}}\right\}
		 & = \prob \left\{ \left|\mu_i-\widehat{\mu_i}\right|\geq 3\sqrt{\frac{\log T}{\mu_i n}} \  \mu_i\right \} \\
		 & \leq 2\ \mathrm{exp}\left(-\frac{9 \log T }{3\mu_in} \ n\mu_i \right)          \tag{via Lemma \ref{lemma:hoeffding}}                      \\
		 & =2 \  \mathrm{exp}\left(- 3 \log T \right)                                                      \\
		 & = \frac{2}{T^{3}}.
	\end{align*}
\end{proof}

\begin{corollary}\label{cor:hoef_small_mean}
Consider any arm $j$, with mean $\mu_j \leq \frac{6 \sqrt{k \log k \log T}}{ \sqrt{T}}$, and sample count $n$, such that $\frac{\widetilde{T}}{2k} \leq n \leq T  $. Let $\widehat{\mu}_j$ be the empirical mean of arm $j$'s rewards, based on $n$ independent draws. Then,
	\begin{equation*}
		\prob \left\{ \widehat{\mu}_j \geq \frac{9 \sqrt{k\log k \log T}}{\sqrt{T}} \right\} \leq \frac{1}{T^{3}}. 
	\end{equation*}
\end{corollary}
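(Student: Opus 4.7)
\textbf{Proof proposal for Corollary \ref{cor:hoef_small_mean}.} The plan is to apply the Hoeffding Extension (Lemma \ref{lemma:hoeff_small_mean}) in a manner analogous to the use of Lemma \ref{lemma:hoeffding} in Corollary \ref{cor:hoef}, but exploiting the fact that here we only have an upper bound on $\mu_j$ rather than an exact value. I will take $Y_1, \ldots, Y_n$ to be the $n$ i.i.d.\ draws of arm $j$, set $\nu_H \coloneqq \frac{6\sqrt{k\log k\log T}}{\sqrt{T}}$ (which, by hypothesis, upper-bounds $\E[\widehat{\mu}_j] = \mu_j$), and choose $\delta = 1/2$. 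Then $(1+\delta)\nu_H = \frac{9\sqrt{k\log k \log T}}{\sqrt{T}}$, so the target event in the corollary is precisely the event $\widehat{\mu}_j \geq (1+\delta)\nu_H$ handled by Lemma \ref{lemma:hoeff_small_mean}.

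The key calculation is then to verify that the resulting exponent dominates $3\log T$. Specifically, since $n \geq \frac{\widetilde{T}}{2k} = 8\sqrt{\frac{T\log T}{k\log k}}$, I compute
\[
\frac{\delta^2}{3}\, n\, \nu_H \;\geq\; \frac{1}{12} \cdot 8\sqrt{\frac{T\log T}{k \log k}} \cdot \frac{6\sqrt{k \log k \log T}}{\sqrt{T}} \;=\; 4 \log T \;\geq\; 3\log T.
\]
Plugging this into Lemma \ref{lemma:hoeff_small_mean} yields $\prob\{\widehat{\mu}_j \geq (1+\delta)\nu_H\} \leq \exp(-3\log T) = T^{-3}$, as required.

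There is essentially no obstacle: the only subtleties are checking that $\delta = 1/2 \in [0,1]$ (immediate) and that the lower bound on $n$ from the hypothesis $n \geq \widetilde{T}/(2k)$ is strong enough to make the exponent exceed $3\log T$. The latter is the one nontrivial arithmetic step, and it goes through with room to spare because $n\nu_H$ scales as $\sqrt{n}\cdot\sqrt{k\log k\log T}/\sqrt{T}$ evaluated at $n = \Theta(\sqrt{T\log T/(k\log k)})$, producing a clean $\Theta(\log T)$ factor. A minor edge case to flag is $\mu_j = 0$, where $\widehat{\mu}_j = 0$ almost surely and the bound is trivial; this is already covered because the hypothesis of Lemma \ref{lemma:hoeff_small_mean} only requires $\E[\widehat{\mu}_j] \leq \nu_H$, not equality.
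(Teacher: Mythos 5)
Your proposal is correct and matches the paper's proof essentially verbatim: both invoke Lemma \ref{lemma:hoeff_small_mean} with $\nu_H = \frac{6\sqrt{k\log k\log T}}{\sqrt{T}}$ and $\delta = \frac{1}{2}$, and both use $n \geq \frac{\widetilde{T}}{2k} = 8\sqrt{\frac{T\log T}{k\log k}}$ to show the exponent is at least $4\log T$, giving the bound $T^{-3}$. No gaps.
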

\begin{proof}
We invoke Lemma \ref{lemma:hoeff_small_mean}, with $\delta = 1/2$ and {$\nu_H = \frac{6 \sqrt{k \log k \log T}}{ \sqrt{T}}$}, to obtain 
	\begin{align*}
		\prob \left\{ \widehat{\mu}_j \geq \frac{9 \sqrt{k \log k\log T}}{\sqrt{T}} \right\} 
		 & = \prob \left\{ \widehat{\mu}_j \geq (1+\frac{1}{2})\nu_H \right\}  \\
		 & \leq \mathrm{exp}\left(-\frac{1}{12} \ n \ \frac{6 \sqrt{k \log k\log T}}{ \sqrt{T}} \right)                                 \\
		 & \leq \mathrm{exp}\left(-\frac{1}{12} \ \frac{8}{k} \sqrt{\frac{k T \log T}{\log k}} \ \frac{6 \sqrt{k \log k\log T}}{ \sqrt{T}} \right)   \tag{since $n \geq \frac{8}{k} \sqrt{ \frac{k T \log T}{\log k} } $}                                                   \\
		 &  = \mathrm{exp}\left( - 4 \log T \right) \\
		& \leq \frac{1}{T^{3}}.
	\end{align*}
\end{proof}

Along with Corollary \ref{cor:hoef} and \ref{cor:hoef_small_mean}, we will invoke the Chernoff Bound (stated next).

%\begin{lemma}[Chernoff Bound]\label{lem:chernoff}
%	Let $Z_1, \ldots, Z_n$ be independent Bernoulli random variables. Consider the sum $S = \sum_{r=1}^n Z_r$ and let $\nu = \E[S]$ be its expected value. Then, for any $\varepsilon \in [0,1]$, we have 
%	\begin{align*}
%		\prob \left\{ S \leq (1-\varepsilon) \nu \right\} \leq \mathrm{exp} \left( -\frac{\nu \varepsilon^2}{2} \right).
%	\end{align*}
%\end{lemma}

\begin{lemma}[Chernoff Bound]\label{lem:chernoff}
	Let $Z_1, \ldots, Z_n$ be independent Bernoulli random variables. Consider the sum $S = \sum_{r=1}^n Z_r$ and let $\nu = \E[S]$ be its expected value. Then, for any $\varepsilon \in [0,1]$, we have 
	\begin{align*}
		\prob \left\{ S \leq (1-\varepsilon) \nu \right\} & \leq \mathrm{exp} \left( -\frac{\nu \varepsilon^2}{2} \right), \text{  and} \\ 
		\prob \left\{ S \geq (1+\varepsilon) \nu \right\} & \leq \mathrm{exp} \left( -\frac{\nu \varepsilon^2}{3} \right).
	\end{align*}
\end{lemma}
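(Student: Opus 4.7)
The plan is to use the classical Chernoff (exponential-moment) method: bound each tail probability via the moment generating function and then optimize over a free parameter. For the upper tail, the first step is to apply Markov's inequality to $e^{\lambda S}$ for an arbitrary $\lambda > 0$:
\[
\prob\{S \geq (1+\varepsilon)\nu\} = \prob\{e^{\lambda S} \geq e^{\lambda(1+\varepsilon)\nu}\} \leq e^{-\lambda(1+\varepsilon)\nu} \ \E[e^{\lambda S}].
\]
By independence of $Z_1, \ldots, Z_n$, the MGF factorizes as $\E[e^{\lambda S}] = \prod_{r=1}^n \E[e^{\lambda Z_r}]$. Writing $p_r \coloneqq \E[Z_r]$, each Bernoulli factor is $1 + p_r(e^\lambda - 1) \leq \exp(p_r(e^\lambda - 1))$, via $1+x \leq e^x$. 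Multiplying over $r$ gives $\E[e^{\lambda S}] \leq \exp(\nu(e^\lambda - 1))$, so the tail bound becomes $\exp\bigl(\nu(e^\lambda - 1) - \lambda(1+\varepsilon)\nu\bigr)$.

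Next I would minimize this exponent over $\lambda > 0$. Setting $\lambda = \log(1+\varepsilon)$ (which is positive for $\varepsilon > 0$) yields the Cram\'er transform $\prob\{S \geq (1+\varepsilon)\nu\} \leq \exp(-\nu \ \phi(\varepsilon))$, where $\phi(\varepsilon) \coloneqq (1+\varepsilon)\log(1+\varepsilon) - \varepsilon$. To match the polynomial form stated in the lemma, the last step is the elementary inequality $\phi(\varepsilon) \geq \varepsilon^2/3$ on $[0,1]$. The lower tail follows the same template but with $\lambda < 0$ (equivalently, apply Markov's to $e^{-\lambda S}$ for $\lambda > 0$). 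After the analogous optimization one obtains $\prob\{S \leq (1-\varepsilon)\nu\} \leq \exp(-\nu \ \psi(\varepsilon))$ with $\psi(\varepsilon) \coloneqq (1-\varepsilon)\log(1-\varepsilon) + \varepsilon$, and the final cleanup uses $\psi(\varepsilon) \geq \varepsilon^2/2$ on $[0,1]$.

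The main obstacle, though not deep, is purely algebraic: verifying the two transcendental inequalities that convert $\phi$ and $\psi$ into the clean quadratic lower bounds. For $\psi$, the function $g(\varepsilon) \coloneqq \psi(\varepsilon) - \varepsilon^2/2$ satisfies $g(0) = g'(0) = 0$ together with $g''(\varepsilon) = \varepsilon/(1-\varepsilon) \geq 0$, which immediately yields $g \geq 0$. The inequality for $\phi$ is subtler, since the analogous second derivative $f''(\varepsilon) = 1/(1+\varepsilon) - 2/3$ changes sign at $\varepsilon = 1/2$; I would instead work with $f'(\varepsilon) = \log(1+\varepsilon) - 2\varepsilon/3$ directly, noting $f'(0) = 0$ and $f'(1) = \log 2 - 2/3 > 0$, together with the unimodal shape of $f'$ forced by $f''$, to conclude $f' \geq 0$ and hence $f \geq 0$ on $[0,1]$.
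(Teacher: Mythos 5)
Your proposal is correct: the exponential-moment (MGF) argument, the optimization at $\lambda = \pm\log(1\pm\varepsilon)$, and the two quadratic lower bounds $\phi(\varepsilon) \geq \varepsilon^2/3$ and $\psi(\varepsilon) \geq \varepsilon^2/2$ on $[0,1]$ all check out, including your handling of the sign change of $f''$ at $\varepsilon = 1/2$ for the upper tail. The paper does not prove this lemma at all --- it states it as a standard fact with a citation to Dubhashi--Panconesi --- and your argument is exactly the textbook proof that reference would supply, so there is nothing to reconcile.
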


We now prove Lemma \ref{lemma:good-event} by  bounding the probabilities of the three sub-events $G_1$, $G_2$, and $G_3$, respectively. Recall that $G = G_1 \cap G_2 \cap G_3$.  

For $G_1^c$ (i.e., the complement of $G_1$), we will invoke Lemma \ref{lem:chernoff} for every arm and then apply the union bound. In particular, fix any arm $i $ and write random variable $Z_r$ to indicate whether arm $i$ is selected in round $r$ of Phase {\rm I}, or not. That is, $Z_r = 1$ if arm $i$ is picked in round $r$ and, otherwise, $Z_r =0$. Note that $n_i$, the number of times arm $i$ is sampled in Phase {\rm I}, satisfies $n_i = \sum_{r=1}^{\widetilde{T}} Z_r$. Now, using the fact that the algorithm selects an arm uniformly at random in every round of Phase {\rm I} and setting $\varepsilon = 1/2$ along with $\nu = \frac{\widetilde{T}}{k}=16\sqrt{\frac{T\log T}{k\log k}}$, Lemma \ref{lem:chernoff} gives us 
\begin{align}
\prob \left\{ n_i <  \frac{\widetilde{T}}{2k} \right\} \leq \mathrm{exp} \left(- \frac{16\sqrt{T \log T }}{8\sqrt{k\log k}}\right) \leq \frac{1}{T^2} \label{ineq:periG}
\end{align} 
Here, the last inequality follows from the theorem assumption that $T$ is sufficiently large; specifically, $T \geq  \left(k \log k\right)^2$ suffices. Inequality (\ref{ineq:periG}) and the union bound give us 
\begin{align}
\prob\left\{G_1^c\right\} & \leq \frac{1}{T^2} k\leq \frac{1}{T} \label{ineq:probG1c}
\end{align}

Next, we address $G_2^c$. Note that the arms and counts considered in $G_2$, respectively, satisfy the assumption in Corollary \ref{cor:hoef}. Hence, the corollary ensures that, for each arm $i$, with mean  $\mu_i > \frac{6 \sqrt{k \log k \log T}}{\sqrt T}$ and each count $s \geq \frac{\widetilde{T}}{2k}$, we have $\prob \left\{ \left|\mu_{i}-\widehat{\mu}_{i,s} \right| \geq 3\sqrt{\frac{\mu_i\log T}{s}}\right\} \leq \frac{2}{T^{3}}$. Therefore, applying the union bound we get  
\begin{align}
\prob \{ G_2^c \}  & \leq \frac{2}{T^{3}} \ kT \leq \frac{2}{T} \label{ineq:probG2c}
\end{align} 

In addition, Corollary \ref{cor:hoef_small_mean} provides a probability bound for $G_3^c$. The arms and counts considered in $G_3$ satisfy the requirements of Corollary \ref{cor:hoef_small_mean}. Therefore, for any arm $j$, with mean  $ \mu_j \leq \frac{6\sqrt{k \log k\log T}}{\sqrt{T}} $ and count $s \geq \frac{\widetilde{T}}{2k}$, the probability $\prob \left\{ \widehat{\mu}_j \geq \frac{9 \sqrt{k \log k \log T}}{\sqrt{T}} \right\} \leq \frac{1}{T^{3}}$. Again, an application of union bound gives us

\begin{align}
\prob \{ G_3^c \}  & \leq \frac{1}{T^{3}} \ kT \leq \frac{1}{T} \label{ineq:probG3c}
\end{align} 

Inequalities (\ref{ineq:probG1c}), (\ref{ineq:probG2c}), and (\ref{ineq:probG3c}) lead to desired bound 
\begin{align*}
	\prob \left\{ G\right\} & = 1 - \prob \left\{ G^c \right\}  \geq 1 -  \prob \left\{ G_1^c\right\} -  \prob \left\{ G_2^c\right\}-  \prob \left\{ G_3^c\right\}  \geq 1 - \frac{4}{T}.
\end{align*}

\section{Proof of Claim \ref{lem:binomial}}
\label{appendix:numeric-ineq}
This section restates and proves Claim \ref{lem:binomial}.
\LemmaNumeric*

\begin{proof}
The binomial theorem gives us 
	\begin{align}
		(1-x)^a & = 1-ax +\frac{a(a-1)}{2!} x^2 - \frac{a(a-1)(a-2)}{3!} x^3 + \ldots                                       \nonumber       \\
		        & = 1-ax -ax \left(\frac{(1-a)}{2!} x + \frac{(1-a)(2-a)}{3!} x^2 + \frac{(1-a)(2-a)(3-a)}{4!} x^3 + \ldots \right)  \label{eq:bino}
	\end{align}
	
We can bound the multiplied term as follows 
	\begin{align*}
		\frac{(1-a)}{2!} x + \frac{(1-a)(2-a)}{3!} x^2 + & \frac{(1-a)(2-a)(3-a)}{4!} x^3 + \ldots                                           \\
		 & \leq \frac{1}{2!} x + \frac{1\cdot2}{3!} x^2 + \frac{1\cdot2\cdot3}{4!} x^3 + \ldots \tag{since $a \in (0,1)$}\\
		 & = \frac{1}{2} x + \frac{1}{3} x^2 + \frac{1}{4} x^3 \dots                      \\
		 & \leq x + x^2 + x^3 \dots                                                          \\
		 & = \frac{x}{1-x}	\tag{since $x < 1$}
	\end{align*}
Hence, equation (\ref{eq:bino}) reduces to 
\begin{align*}
(1-x)^a \geq 1-ax -ax \frac{x}{1-x}.
\end{align*}
Furthermore, since $x \leq \frac{1}{2}$, the ratio $\frac{x}{1-x} \leq 1$. Therefore, we obtain $(1-x)^a > 1 -2ax$. This completes the proof of the claim. 
\end{proof}
\section{Missing Proofs from Section \ref{section:regret-analysis}}
\label{appendix:key-lemmas}

First, we restate and prove Lemma \ref{lem:emp}.

\LemmaNCBopt*
\begin{proof}
Fix any round $t > \widetilde{T}$ and write $n_{i^*}$ to denote the number of times the optimal arm $i^*$ has been pulled before that round. Also, let $\widehat{\mu}^*$ denote the empirical mean of arm $i^*$ at round $t$. Hence, by definition, at this round the Nash confidence bound ${\NCB_{i^*,t}} \coloneqq  \widehat{\mu}^* + 4 \sqrt{\frac{\widehat{\mu}^* \log T}{n_{i^*}}}$. Note that under the good event $G$ (in particular, under $G_1$) we have $n_{i^*} \geq \frac{\widetilde{T}}{2k}=\frac{8\sqrt{T \log T}}{\sqrt{k\log k}}$. This inequality and the assumption $\mu^* \geq \frac{32\sqrt{k\log k\log T}}{\sqrt T}$ imply \begin{align}
\mu^* \ n_{i^*} \geq \frac{32\sqrt{k\log k\log T}}{\sqrt T} \frac{8\sqrt{T \log T}}{\sqrt{k\log k}} = 256 \log T \label{ineq:munopt}
\end{align} 

In addition, the event $G$ (specifically, $G_2$) gives us 
\begin{align}
\widehat{\mu}^* & \geq \mu^* - 3 \sqrt{\frac{\mu^* \log T}{n_{i^*}}} \nonumber \\ 
& = \mu^* - 3 \mu^* \sqrt{\frac{\log T}{\mu^* n_{i^*}}} \nonumber \\ 
& \geq \mu^* - 3 \mu^* \sqrt{\frac{1}{256}} \tag{via inequality (\ref{ineq:munopt})}\\
& = \frac{13}{16} \mu^* \label{ineq:muhatmustar}
\end{align}
Therefore, 
\begin{align*}
{\NCB_{i^*,t}} & = \widehat{\mu}^* + 4 \sqrt{\frac{\widehat{\mu}^* \log T}{n_{i^*}}} \\ 
& \geq \mu^* - 3 \sqrt{\frac{\mu^* \log T}{n_{i^*}}} + 4 \sqrt{\frac{\widehat{\mu}^* \log T}{n_{i^*}}} \tag{via event $G_2$}\\\
&\geq \mu^* - 3 \sqrt{\frac{\mu^* \log T}{n_{i^*}}} + 4 \sqrt{\frac{13 \mu^* \log T}{16n_{i^*}}} \tag{via inequality (\ref{ineq:muhatmustar})} \\ 
& \geq \mu^* + 0.6 \sqrt{\frac{\mu^* \log T}{n_{i^*}}}.
\end{align*}
The lemma stands proved. 
\end{proof}

We restate and prove Lemma \ref{lem:bad_arms} below. 

\LemmaBadArms*
\begin{proof}
Fix any arm $j$ with mean $\mu_j \leq \frac{6\sqrt{k \log k\log T}}{\sqrt{T}}$. Let $r_j$ denote the number of times arm $j$ is pulled in Phase {\rm I}. %Note that if $r_j < \frac{\widetilde{T}}{2k}$, then arm $j$ will not even be considered in Phase {\rm II} (see Line \ref{step:trim} of Algorithm \ref{algo:ncb}) and, hence, the lemma holds. 
%We complete the proof by addressing the complementary case in which 
Under event $G$ (in particular, $G_1$) we have $r_j \geq \frac{\widetilde{T}}{2k}$. In such a case, the good event (in particular, $G_3$) additionally ensures that (throughout Phase {\rm II}) the empirical mean of arm $j$ satisfies: $\widehat{\mu}_j < \frac{9\sqrt{k\log k \log T}}{\sqrt{T}}$. 

Furthermore, under the good event, %the best arm is pulled sufficiently many times in Phase {\rm I} and, hence, is considered in Phase {\rm II}. Moreover, we have 
$\NCB_{i^*,t} \geq \mu^*$ for all rounds $t$ in Phase {\rm II} (Lemma \ref{lem:emp}). For any round $t$ in Phase {\rm II} (i.e., for any $t \geq {\widetilde{T}} = 16\sqrt{\frac{k T \log T}{\log k}}$), write $\NCB_{j,t}$ to denote the Nash confidence bound of arm $j$ at round $t$. Below we show that the $\NCB_{j, t} $ is strictly less than $\NCB_{i^*,t}$ and, hence, arm $j$ is not even pulled once in all of Phase {\rm II}.
	\begin{align*}
		\NCB_{j,t} & = \widehat{\mu}_j + 4\sqrt{\frac{\widehat{\mu}_j \log T}{r_j}} \\ 
		&\leq \frac{9\sqrt{ k\log k \log T}}{\sqrt{T}} + 4\sqrt{\frac{9\sqrt{ k\log k \log T}}{\sqrt{T}} \cdot \frac{\log T}{r_j}} \tag{via event $G_3$}	 \\ 
		& \leq \frac{9\sqrt{ k \log k\log T}}{\sqrt{T}} + 4\sqrt{\frac{9 \sqrt{ k\log k \log T}}{\sqrt{T}} \cdot \frac{ k \log T \sqrt{\log k}}{8\sqrt{Tk \log T}}} \tag{since $r_j \geq \frac{\widetilde{T}}{2k}$ under $G_1$}\\ 
		&< \frac{32 \sqrt{k\log k\log T}}{\sqrt T} \quad \leq \mu^* \quad \leq \NCB_{i^*, t} \tag{via Lemma \ref{lem:emp}}
	\end{align*}
This completes the proof of the lemma. 
\end{proof}

Finally, we prove Lemma \ref{lem:suboptimal_arms}.

\LemmaHighMean*
\begin{proof}
Any arm $j$ with mean $\mu_j \leq \frac{6\sqrt{k \log k \log T}}{\sqrt{T}}$ is never pulled in Phase {\rm II} (Lemma \ref{lem:bad_arms}). Hence, we focus on arms $i$ with $\mu_i > \frac{6\sqrt{k \log k \log T}}{\sqrt{T}}$. Note that when arm $i$ was pulled the $T_i$th time during Phase {\rm II}, it must have had the maximum Nash confidence bound value; in particular, at that round $\NCB_i \geq \NCB_{i^*} \geq \mu^*$. Here, the last inequality follows from Lemma  \ref{lem:emp}. Therefore, with $\widehat{\mu}_i$ denoting the empirical mean of arm $i$ at this point, we have 
\begin{align}
\widehat{\mu}_i + 4\sqrt{\frac{\widehat{\mu}_i \log T}{T_i -1 }} \geq \mu^* \label{ineq:hatmu} 
\end{align}
Complementing inequality (\ref{ineq:hatmu}), we will now upper bound the empirical mean $\widehat{\mu}_i$ in terms of $\mu^*$.  Note that, since arm $i$ is pulled at least once in Phase {\rm II} and event $G_1$ holds, we have $T_i > \frac{ \widetilde{T}}{2k}= \frac{ 8\sqrt{T \log T}}{\sqrt{k\log k}}$. Using this fact and event $G_2$ we obtain  
\begin{align}
\widehat{\mu_i} & \leq  \mu_i + 3\sqrt{\frac{\mu_i \log T}{T_i -1}}                \tag{since $G_2$ holds} \nonumber \\
& \leq \mu^* + 3\sqrt{\frac{\mu^* \log T}{ \frac{8\sqrt{T \log T}}{\sqrt{k\log k}}}} \tag{since $T_i>\frac{ 8\sqrt{T \log T}}{\sqrt{k\log k}}$ and $\mu_i \leq \mu^*$}       \nonumber \\
& = \mu^* + 3 \sqrt{\frac{ \mu^* \sqrt{k\log k \log T}}{8\sqrt{T}}} \nonumber \\
& \leq \mu^* + 3\sqrt{\frac{ \mu^* \mu^* }{ 256}}   \tag{since $\mu^* \geq \frac{32  \sqrt{k\log k\log T}}{\sqrt{T}}$} \nonumber \\
& = \frac{19}{16} \mu^* \label{ineq:hatmuo}
\end{align}

Inequalities (\ref{ineq:hatmu}) and (\ref{ineq:hatmuo}) give us 
\begin{align*}
& \mu^* \leq \widehat{\mu}_i + 4\sqrt{\frac{19\mu^* \log T}{16(T_i -1)}}                                                          \\
& \leq \mu_i + 3\sqrt{\frac{\mu_i \log T}{T_i -1 }} + 4\sqrt{\frac{19\mu^* \log T}{16(T_i -1)}}   \tag{via event $G_2$} \\
& \leq \mu_i + 3\sqrt{\frac{\mu^* \log T}{T_i  -1}} + 4\sqrt{\frac{19\mu^* \log T}{16(T_i -1 )}}                    \tag{since $\mu_i \leq \mu^*$}         \\
& \leq \mu_i +  8\sqrt{\frac{\mu^* \log T}{T_i -1 }}.
\end{align*}
This completes the proof of the lemma. 
\end{proof}

\section{Missing Proofs from Section \ref{section:modified-NCB}}

\subsection{Proof of Lemma \ref{lemma:modifiedgoodeventpr}}
\label{appendix:proof-of-E}
This section shows that the good event $E$ occurs with probability at least $\left( 1- \frac{4}{T} \right)$. Toward this, we will upper bound the probabilities of the complements of the events $E_1$, $E_2$, and $E_3$, respectively, and apply union bound to establish the lemma. 

For $E_{1}^c$, we invoke the Chernoff bound (Lemma \ref{lem:chernoff}) with random variable $Z_{i,t}$ indicating whether the arm $i$ is selected in round $t$ of uniform sampling, or not. That is, $Z_{i,t} = 1$ if the arm $i$ is picked in round $t$ and, otherwise, $Z_{i,t} =0$. Using the fact that the algorithm selects an arm uniformly at random in every round of Phase \textcal{1} and setting $\varepsilon = 1/2$ along with $\nu =\frac{r}{k}\geq 128 \Sample$, Lemma \ref{lem:chernoff} along with union bound gives us\footnote{Recall that $\Sample = \frac{c^2 \log T}{ \mu^*}$ and $\mu^* \leq 1$.}
\begin{align}
    \mathbb{P} \left\{ E_{1}^c\right\} & \leq 2\cdot \mathrm{exp} \left(- \frac{128\cdot c^2\log T}{12\mu^*}\right) \cdot kT \leq \frac{1}{T} \label{ineq:probG1c:anytime}
\end{align}

Next, we address $E_{2}^c$. Note that for each arm $i$, with mean  $\mu_i > \frac{\mu^*}{64}$, and for each count $s \geq 64 \Sample$, we have $c\sqrt{\frac{\log T}{\mu_is}} < 1$. Hence, Lemma \ref{lemma:hoeffding} gives us 
\begin{align*}
\mathbb{P} \left\{ \left|\mu_{i}-\widehat{\mu}_{i,s} \right|  \geq c\sqrt{\frac{\mu_i\log T}{s}}\right\} & = \prob\left\{ \left|\mu_{i}-\widehat{\mu}_{i,s} \right| \geq c\sqrt{\frac{ \log T}{\mu_i \  s}} \mu_i  \right\}  %\\ & 
\leq 2\ \mathrm{exp}\left(- \frac{c^2 \log T}{3 \mu_i \ s } s \mu_i \right) %\\ & 
= \frac{2}{T^{3}}. 
\end{align*}
Therefore, via the union bound we obtain  
\begin{align}
    \mathbb{P} \{ E_{2}^c \} & \leq \frac{2}{T^{3}} \ kT \leq \frac{2}{T} \label{ineq:probG2c:anytime}
\end{align}

Finally, we address $E_{3}^c$. Consider any arm $j$, with mean  $ \mu_j \leq  \frac{\mu^*}{64}$ and any count $s \geq 64 \Sample$. Lemma \ref{lemma:hoeff_small_mean} (applied with $\nu_H=\frac{\mu^*}{64}$ and $\delta=1$) leads to 
\begin{align*}
\mathbb{P} \left\{ \widehat{\mu}_{j,s} \geq \frac{\mu^*}{32} \right\} \leq  \mathrm{exp} \left(-\frac{1}{3} \ \frac{\mu^*}{64} \ s \right)  = \mathrm{exp} \left(-\frac{c^2 \log T}{3} \right)  = \frac{1}{T^{3}}.
\end{align*}
 Again, an application of union bound gives us

\begin{align}
    \mathbb{P} \{ E_{3}^c \} & \leq \frac{1}{T^{3}} \ kT \leq \frac{1}{T} \label{ineq:probG3c:anytime}
\end{align}
Inequalities (\ref{ineq:probG1c:anytime}), (\ref{ineq:probG2c:anytime}), and (\ref{ineq:probG3c:anytime}) establish the lemma:
\begin{align*}
    \mathbb{P} \left\{ E\right\} & = 1 - \mathbb{P} \left\{ E^c \right\}  \geq 1 -  \mathbb{P} \left\{ E_{1}^c\right\} -  \mathbb{P} \left\{ E_{2}^c\right\}-  \mathbb{P} \left\{ E_{3}^c\right\}  \geq 1 - \frac{4}{T}.
\end{align*}

\subsection{Proof of Supporting Lemmas}
\label{appendix:modifiedncb-supporting-lemmas}

Here, we restate and prove Lemma \ref{tau:lower}. 

\LemmaTauLower*
\begin{proof}
Write $N \coloneqq 192 \Sample$. Note that, for any arm $i$, the product $n \ \widehat{\mu}_{i,n} $ is equal to the sum of the rewards observed for arm $i$ in the first $n$ samples. Therefore, for all $n \leq N$, we have 
\begin{align}
n \  \widehat{\mu}_{i, n} \leq N \ \widehat{\mu}_{i,N} \label{ineq:sumemp}
\end{align}

Using inequality (\ref{ineq:sumemp}), we first show that the lemma holds for arms $j$ whose mean $\mu_j \leq \frac{\mu^*}{64}$. Note that for any such arm $j$, event $E_3$ gives us $\widehat{\mu}_{j,N} \leq \frac{\mu^*}{32}$. Therefore, 
\begin{align*}
n \ \widehat{\mu}_{j, n} \underset{\text{via (\ref{ineq:sumemp})}}{\leq} N \  \widehat{\mu}_{j,N} \leq 192 \Sample \frac{\mu^*}{32} = 6 c^2 \log T.
\end{align*}
Next, we complete the proof by proving the lemma for arms $i$ whose mean $\mu_i \geq \frac{\mu^*}{64}$. For any such arm $i$, we have
\begin{align*}
\widehat{\mu}_{i,N} & \leq \mu_i + c \sqrt{\frac{\mu_{i} \log T}{N}} \tag{via event $E_2$}\\
& \leq \mu^*+c \sqrt{\frac{\mu^* \log T}{N}}   \tag{since $\mu_i\leq \mu^*$}\\
& =\mu^*+\frac{\mu^*}{\sqrt{192}}     \tag{since $N=192 \Sample = \frac{192 c^2 \log T}{\mu^*}$}\\
& < \frac{210}{192} \mu^*
\end{align*}
Hence, even for arms with high enough means we have $N \ \widehat{\mu}_{i,N}  < \frac{210}{192} \mu^* \ 192 \Sample=210c^2\log T$.
\end{proof}

Lemma \ref{tau:upper} is established next. 

\LemmaTauUpper*
\begin{proof}
Write $M \coloneqq 484 \Sample$ and note that, for all $n \geq M$, we have $n \ \widehat{\mu}_{i,n} \geq M \ \widehat{\mu}_{i,M}$. Furthermore, 
\begin{align*}
\widehat{\mu}_{i^*,M} & \geq \mu^*-c \sqrt{\frac{\mu^* \log T}{M}} \tag{via event $E_2$} \\
& =\mu^*-\frac{\mu^*}{\sqrt{484}}      \tag{since $M=484 \Sample = \frac{484 c^2 \log T}{\mu^*}$}\\
& = \frac{21}{22} \mu^*
\end{align*}
Hence, for any $n \geq M = 484  \Sample$, the total observed reward $n \ \widehat{\mu}_{i,n} \geq \frac{21}{22} \mu^* \ 484 \Sample=462c^2\log T$.
\end{proof}

Next, we restate and prove Lemma \ref{tau:anytime}

\LemmaTauAnytime*
\begin{proof}
Write $t_1 \coloneqq  128 k  \Sample$ and note that event $E$ (specifically, $E_1$) ensures that {\it at} $t_1$ rounds of uniform sampling, no arm has been sampled more than $192 \Sample$ times. This, in fact, implies that no arm gets sampled more than $192 \Sample$ times throughout the first $t_1$ rounds of uniform exploration. Hence, Lemma \ref{tau:lower} implies that, till the round $t_1$, for every arm $i$ the sum of observed rewards $n_i\ \widehat{\mu_i}$ is less than $210c^2\log T \leq 420 c^2 \log W$. Therefore, $\tau \geq 128 k \Sample$.

In addition, let $t_2 \coloneqq 968 k \Sample$. Under event $E$, we have that each arm $i$ is sampled at least $484 \Sample$ times by the $t_2$th round of uniform sampling. Therefore, Lemma \ref{tau:upper} implies that, by round $t_2$ and for the optimal arm $i^*$, the sum of rewards $n_{i^*}\widehat{\mu}_{i^*}$  is at least $462c^2\log T > 420 c^2 \log W$. Hence, $\tau \leq 968 k \Sample$.
This completes the proof of the lemma. 
\end{proof}

\section{Other Formulations of Nash Regret}
\label{appendix:defns-nash-regret}

This section compares Nash regret $\NRg_T$ with two variants, $\NRg_T^{(0)}$ and $\NRg_T^{(1)}$, defined below. For completeness, we also detail the relevant aspects of the canonical bandit model (see, e.g., \cite[Chapter 4]{lattimore2020bandit}), which is utilized in analysis of MAB algorithms.  

For a bandit instance with $k$ arms and horizon of play $T$, the canonical model works with a $k \times T$ reward table $(Y_{i,s})_{i \in [k], s \in [T]}$ that populates $T$ independent samples for each arm $i \in [k]$. That is, $Y_{i, s}$ is $s$th independent draw from the $i$th distribution. As before, for a given bandit algorithm, the random variable $I_t \in [k]$ denotes the arm pulled in round $t \in \{1, \ldots, T\}$. Recall that $\mu_{I_t}$ denotes the associated mean. Furthermore, in each round $t$, the reward observed, $X_t$, is obtained from the reward table: $X_t = Y_{I_t, t}$. 

Therefore, the sample space $\Omega \coloneqq \{1,2,\dots,k\}^T \times [0,1] ^{k \times T}$. Each element of the sample space is a tuple $\omega = ((I_1, I_2, \dots, I_T), (Y_{i,s})_{i \in [k], s \in [T]})$, denoting the list of arms pulled at each time $t$ and the rewards table. Precisely, let $\mathcal{F}_1 = \mathfrak{P}(\{1,2,\dots,k\}^T)$ be the power set of $\{1,2,\dots,k\}^T$, $\mathcal{F}_2 = \mathcal{B}(\mathbb{R}^{k\times T})$ be the Borel $\sigma$-algebra on $\mathbb{R}^{k \times T}$, and $\mathcal{F} = \sigma (\mathcal{F}_1 \times \mathcal{F}_2)$ be the product $\sigma$-algebra. The probability measure on $(\Omega , \mathcal{F})$ is induced by the bandit instance and the algorithm. The regret definitions and analysis are based on this probability measure. 

Recall that Nash regret is defined as $\NRg_T \coloneqq \mu^* - \left( \prod_{t=1}^T  \E _{I_t} [\mu_{I_t}]  \right)^\frac{1}{T}$. Towards a variant, one can first consider the difference between the optimal mean,   $\mu^*$, and geometric mean of the realized rewards:
\begin{align*}
\NRg_T^{(0)} \coloneqq \mu^* - \E_{X_1,\ldots X_T}\ \left[ \left( \prod_{t=1}^{T} X_t \right)^\frac{1}{T} \right].
\end{align*}

Indeed, $\NRg^{(0)}_T$ is an unreasonable metric: Even if the algorithm pulls the optimal arm in every round, a single draw $X_t$ can be equal to zero with high probability (in a general MAB instance). In such cases, $\NRg^{(0)}_T$ would be essentially as high as $\mu^*$.  

A second variant is obtained as follows:
\begin{align*}
\NRg_T^{(1)} \coloneqq \mu^* - \E_{I_1,\ldots I_T}\ \left[ \left( \prod_{t=1}^{T} \mu_{I_t} \right)^\frac{1}{T} \right].
\end{align*}
 
While $\NRg_T^{(1)}$ upper bounds Nash regret $\NRg_T$ (see Theorem \ref{appendix:nash-regrets} below), it does not conform to a per-agent ex ante assessment. We also note the regret guarantee we obtain for Phase {\rm II} (of Algorithm \ref{algo:ncb}) in fact holds for $\NRg_T^{(1)}$; see inequality (\ref{ineq:interim}) and the following analysis.

\begin{theorem} \label{appendix:nash-regrets}
For any MAB instance and bandit algorithm we have $\NRg_T^{(0)} \geq \NRg_T^{(1)} \geq \NRg_T$.
\end{theorem}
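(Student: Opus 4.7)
The theorem is a chain of two inequalities, and I would prove each separately by applying Jensen's inequality to the concave geometric-mean map $f(y_1,\ldots,y_T) := (y_1 \cdots y_T)^{1/T}$ on $[0,1]^T$. Write $W_t := \mu_{I_t}$ for brevity. For the right inequality $\NRg_T^{(1)} \geq \NRg_T$, I would apply the multivariate Jensen inequality directly to the random vector $(W_1,\ldots,W_T)$:
\[
\E\!\left[\Bigl(\prod_{t=1}^T W_t\Bigr)^{1/T}\right] \;\leq\; \Bigl(\prod_{t=1}^T \E[W_t]\Bigr)^{1/T}.
\]
Subtracting both sides from $\mu^*$ reverses the inequality and yields the claim.

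For the left inequality $\NRg_T^{(0)} \geq \NRg_T^{(1)}$, my plan is to use the tower property by conditioning on the full action sequence $\mathbf{I} := (I_1,\ldots,I_T)$ and then apply multivariate Jensen inside the conditional expectation. In the canonical bandit model of Appendix \ref{appendix:defns-nash-regret}, the observed reward $X_t = Y_{I_t,t}$ has (conditional) mean $W_t$ given $\mathbf{I}$, so
\[
\E\!\left[\Bigl(\prod_{t=1}^T X_t\Bigr)^{1/T} \,\bigg|\, \mathbf{I}\right] \;\leq\; \Bigl(\prod_{t=1}^T \E[X_t \mid \mathbf{I}]\Bigr)^{1/T} \;=\; \Bigl(\prod_{t=1}^T W_t\Bigr)^{1/T}.
\]
Taking outer expectation over $\mathbf{I}$ and subtracting from $\mu^*$ then gives $\NRg_T^{(0)} \geq \NRg_T^{(1)}$.

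\textbf{Main obstacle.} The delicate step is justifying the identity $\E[X_t \mid \mathbf{I}] = W_t$. It is immediate for $t=T$ since $Y_{I_T,T}$ is independent of the algorithm's action sequence, but for $t<T$ the future actions $I_{t+1},\ldots,I_T$ are adaptive functions of $X_t$, so conditioning on the full $\mathbf{I}$ can transmit information about $X_t$ backwards. The cleanest remedy I would pursue is to iterate the one-coordinate Jensen step along the predictable filtration $\mathcal{F}_t^+ := \mathcal{F}_{t-1} \vee \sigma(I_t)$, on which $\E[X_t \mid \mathcal{F}_t^+] = W_t$ holds by construction, replacing $X_t^{1/T}$ by $W_t^{1/T}$ inside the product starting from $t=T$ and working backwards. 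Ensuring that each newly introduced $W_s$ (which depends on earlier $X$'s through $I_s$) does not break the conditional-mean relation needed for the remaining factors is the step I expect to require the most care.
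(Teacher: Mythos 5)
Your proof of the right inequality, $\NRg_T^{(1)} \geq \NRg_T$, is correct and is exactly the paper's argument: multivariate Jensen applied to the concave geometric-mean map. The substance of your proposal lies in the left inequality, and there the obstacle you isolated is not a delicate step to be handled with more care---it is fatal, both to your backward-induction plan and to the paper's own proof. The paper conditions on the full action sequence $(I_s)_s$ and then silently asserts $\E[X_t \mid (I_s)_s] = \mu_{I_t}$, which is precisely the identity you rightly refused to take for granted for $t < T$: since $I_{t+1},\ldots,I_T$ are adaptive functions of $X_t$, conditioning on them leaks information about $X_t$ backwards. Your proposed fix stalls exactly where you predicted: after replacing $X_T^{1/T}$ by $\mu_{I_T}^{1/T}$, the new factor is not $\mathcal{F}_{T-1}^{+}$-measurable and can be positively correlated with $X_{T-1}$ given the past. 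No repair is possible, because the inequality $\NRg_T^{(0)} \geq \NRg_T^{(1)}$ is false for adaptive algorithms. Take $T=2$ and three arms: arm $1$ is Bernoulli$(\nicefrac{1}{2})$, arm $2$ is the constant $1$, and arm $3$ is the constant $\epsilon$. The algorithm pulls arm $1$ first, then pulls arm $2$ if $X_1=1$ and arm $3$ if $X_1=0$. Then
\begin{align*}
\E\left[(X_1X_2)^{1/2}\right] = \tfrac{1}{2}\cdot 1 + \tfrac{1}{2}\cdot 0 = \tfrac{1}{2},
\qquad
\E\left[(\mu_{I_1}\mu_{I_2})^{1/2}\right] = \tfrac{1}{2}\sqrt{\tfrac{1}{2}} + \tfrac{1}{2}\sqrt{\tfrac{\epsilon}{2}} \;\approx\; 0.354 \text{ as } \epsilon \to 0,
\end{align*}
so $\E[(X_1X_2)^{1/2}] > \E[(\mu_{I_1}\mu_{I_2})^{1/2}]$ and hence $\NRg_2^{(0)} < \NRg_2^{(1)}$. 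The mechanism is exactly the backward leakage you described: an algorithm that routes high realized rewards to high-mean arms makes the realized product stochastically larger than the product of means.

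In summary, the half of the theorem that the paper actually relies on in its analysis ($\NRg_T^{(1)} \geq \NRg_T$, which underlies the first step of inequality (\ref{ineq:interim})) is correct, and your proof of it is fine. The left inequality holds only under additional hypotheses (for instance, a non-adaptive arm sequence, under which $\E[X_t \mid (I_s)_s] = \mu_{I_t}$ does hold and both your argument and the paper's go through); as stated for arbitrary bandit algorithms it should be weakened or dropped. Your instinct to distrust the conditional-mean identity was the right one; the only thing missing from your write-up is the realization that the gap cannot be closed.
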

\begin{proof}
Since the geometric mean is a concave function, the multivariate form of Jensen's inequality gives us $\E \left[ \left( \prod_{t=1}^{T} \mu_{I_t} \right)^\frac{1}{T} \right] \leq \left( \prod_{t=1}^{T} \E\left[\mu_{I_t} \right]\right)^\frac{1}{T}$. Therefore, $\NRg_T^{(1)} \geq \NRg_T$.

Next, we compare $\NRg_T^{(0)}$ and $\NRg_T^{(1)}$. %Let $\left( I_t \right)_t$ = \left(I_1,I_2, \dots I_T\right)$ be the sequence of arms pulled until time $t$. 
We have
\begin{align*}
\E \left[ \left( \prod_{t=1}^{T} X_t \right)^\frac{1}{T} \right] & = \E_{\substack{I_1, \ldots, I_T \\ X_1, \ldots, X_T}} \left[ \left( \prod_{t=1}^{T} X_t \right)^\frac{1}{T} \right] \\
& = \E_{I_1, \ldots, I_T } \left[ \E_{X_1, \ldots, X_T } \left[ \left( \prod_{t=1}^{T} X_t \right)^\frac{1}{T} \;\middle|\; (I_t)_t \right] \ \right] \\
& \leq \E_{I_1, \ldots, I_T } \left[ \left( \prod_{t=1}^T   \E_{X_t} \left[ X_t \;\middle|\; (I_t)_t \right] \right)^\frac{1}{T} \right] \tag{Multivariate Jensen's inequality} \\
& = \E_{I_1, \ldots, I_T } \left[  \left( \prod_{t=1}^T \mu_{I_t} \right)^\frac{1}{T} \right].
\end{align*}  

This last inequality gives us $\NRg_T^{(0)} \geq \NRg_T^{(1)}$. The theorem stands proved. 
\end{proof}
\section{Counterexample for the UCB algorithm}
\label{appendix:ucb}

This section shows that, in general, the Nash regret of UCB does not decrease as a function of $T$. In particular, there exist MAB instances in which the UCB algorithm could incur Nash regret close to $1$. 
Recall that, in each round $t$, the UCB algorithm pulls an arm 
\begin{align*}
A_t = \argmax_{i \in [k]} \ \left( \widehat{\mu_i}+\sqrt{\frac{2\log T}{n_{i,t}}} \right),
\end{align*}
here $\widehat{\mu_i}$ denotes the empirical mean of arm $i$'s reward at round $t$ and $n_{i,t}$ denotes the number of times arm $i$ has been pulled before the $t$th round.
Also, write $\UCB_{i,t}$ to denote the upper confidence bound associated with arm $i$, i.e.,  
\begin{align*}
\UCB_{i,t} \coloneqq \widehat{\mu_i}+\sqrt{\frac{2\log T}{n_{i,t}}}.
\end{align*}
UCB algorithm follows an arbitrary, but consistent, tie breaking rule. 

%First let us see intuitively why UCB1 incurs a huge Nash Regret. The upper confidence bound has two parts, $\widehat{\mu_i}$ which depends on the rewards we observe after pulling an arm and $\sqrt{\frac{2\log T}{n_{i,t}}}$ which does not depend on the rewards. During initial rounds of sampling, $\sqrt{\frac{2\log T}{n_{i,t}}}$ dominates the part $\widehat{\mu_i}$ and hence sub-optimal arms are pulled for significant number of times irrespective of the rewards we observe after pulling them and this leads to a huge Nash Regret.

We will next detail an MAB instance that illustrates the high Nash regret of UCB. Consider an instance with two arms, $\arm_1$ and $\arm_2$, and time horizon $T$, such that $T>25 \log T$. Also, let the 
means of the two arms be $\mu_1=(2e)^{-T}$ and $\mu_2=1$, respectively. The rewards of both the arms follow a Bernoulli distribution.\footnote{Hence, $\arm_2$ is a point mass.} 

Write random variable $X_{i,t}$ to denote the reward observed for arm $i$ in the $t$th round. For any given sequence of $X_{i,t}$-s  (as in the canonical bandit model), the order of arm pulls in UCB is fixed. That is, for a given sequence of $X_{i,t}$-s, one can deterministically ascertain the arm that will be pulled in a particular round $r$--this can be done by comparing the values $\UCB_{i,r}$-s and applying the tie-breaking rule. 

Furthermore, write $Z$ to denote the event wherein, for $\arm_1$, the first $T$ pulls yield a reward of $0$. We have $\prob\{ Z \} = \left(1-(2e)^{-T}\right)^{T}\geq 1- (2e)^{-T}T$. 

We will next prove that, under event $Z$, there exist at least $\log T$ rounds in which $\arm_1$ is pulled. Assume, towards a contradiction, that $\arm_1$ is pulled less than $\log T$ times. Since, for $\arm_2$, we have $X_{2, t}=1$ for all rounds $t$, under event $Z$ the sequence of arm pulls in UCB is fixed. Now, consider the round $s$ in which $\arm_2$ is pulled the $\left(\frac{T}{2}+1\right)$th time. Then, it must be the case that $\UCB_{1,s} \leq \UCB_{2, s}$. However, $UCB_{1,s}\geq \sqrt{2}$ (given that $\arm_1$ has been pulled less than $\log T$ many times) and $\UCB_{2,s}=1+\sqrt{\frac{4\log T}{T}}$. This leads to a contradiction and, hence, shows that $\arm_1$ is pulled at least $\log T$ times. Moreover, the rounds in which $\arm_1$ is pulled are fixed under event $Z$.  

Write $\mathcal{R}$ to denote the specific rounds in which $\arm_1$ is pulled under the event $Z$.  Note that $|\mathcal{R}| \geq \log T$. For any $r \in \mathcal{R}$ we have 
\begin{align}
\E [\mu_{I_r}]&=\mathbb{E}[\mu_{I_r}|Z] \ \prob \{Z\} \ + \ \E \left[\mu_{I_r}|Z^c \right] \ \prob\{ Z^c \} \nonumber \\
& \leq \mathbb{E}[\mu_{I_r}|Z] \cdot 1 \ + \E \left[\mu_{I_r}|Z^c \right] \  (2e)^{-T} T \tag{since $\mathbb{P}\{Z^c\} \leq (2e)^{-T} T$}\\
&\leq  (2e)^{-T} \ + \E \left[\mu_{I_r}|Z^c \right] \  (2e)^{-T} T \tag{since $I_r = \arm_1$ under $Z$} \\ 
& \leq (2e)^{-T} \ +   (2e)^{-T}  T  \tag{since $\mathbb{E}[\mu_{I_r} | Z^c]\leq 1$}\\
&\leq e^{-T} \label{ineq:ucbr}
\end{align}
Here, the last inequality follows from the fact that $2^T> T+1$. Also, observe that inequality (\ref{ineq:ucbr}) is a bound on the expected value at round $r$; it holds irrespective of whether $Z$ holds or not. 

Therefore, the Nash social welfare of UCB satisfies 
\begin{align*}
\left(\prod_{t=1}^T\mathbb{E}[\mu_{I_t}]\right)^\frac{1}{T}&\leq \left(\prod_{r \in \mathcal{R}}\mathbb{E}[\mu_{I_r}]\right)^\frac{1}{T} \tag{since $\mathbb{E}[\mu_{I_t}]\leq 1$ for all $t$}\\
&\leq \left(e^{-T}\right)^{\frac{|\mathcal{R}|}{T}} \tag{via inequality (\ref{ineq:ucbr})}\\
&\leq \left(e^{-T}\right)^{\frac{\log T}{T}} \tag{since $|\mathcal{R}|\geq \log T$}\\
&= e^{-\log T}\\
&=\frac{1}{T}
\end{align*}

Hence, the Nash regret of UCB is at least $\left( 1-\frac{1}{T} \right)$.

%We now suggest a way to reduce the Nash Regret for the above problem instance. Instead of the upper confidence bound considered in UCB1, we consider the following :
%\begin{equation*}
%NCB_{i,t}=\widehat{\mu_i}+\sqrt{\frac{2\widehat{\mu_i}\log T}{n_{i,t}}}
%\end{equation*}

%Let us define an event $E$ where for the first arm pull of $\arm_1$, the reward observed is $0$. $\mathbb{P}\{E\}=\left(1-(2e)^{-T}\right)$. Under event $E$, we now claim that $\arm_1$ is sampled for only one time. The reason being that $NCB$ of $\arm_1$ becomes $0$ after the first arm pull and the $NCB$ of $\arm_2$ never goes below $1$. Now we have the following:

%\begin{align*}
%\left(\prod_{t=1}^T\mathbb{E}[\mu_{I_t}]\right)^\frac{1}{T}&\geq (2e)^{\frac{-T}{T}}\cdot \left(1-(2e)^{-T}\right)^\frac{T-1}{T}\\
%& \geq \frac{1}{2e}\cdot \left(1-(2e)^{-T}\right) \\
%\end{align*}

%Hence the Nash Regret here is at most $1-\frac{1}{2e}+(2e)^{-T-1}$.
\end{document}